\definecolor{Maroon}{rgb}{0.5, 0.0, 0.0}
\definecolor{Darkorange}{rgb}{1.0, 0.55, 0.0}
\definecolor{Darkmidnightblue}{rgb}{0.0, 0.2, 0.4}
\definecolor{Darkspringgreen}{rgb}{0.09, 0.45, 0.27}
\definecolor{Airforceblue}{rgb}{0.36, 0.54, 0.66}
\definecolor{Sensorgreen}{cmyk}{0.1,0,1,0}
\definecolor{Fluxbc}{rgb}{1,0,0}
\DeclarePairedDelimiter{\prt}{(}{)}
\DeclarePairedDelimiter{\brk}{[}{]}
\def\abs#1{\left|#1\right|}
\def\norm#1{\left\|#1\right\|}
\newcommand{\flow}{\mathbf{z}}
\newcommand{\R}{\mathbb{R}}
\renewcommand{\L}{\mathcal{L}}
\newcommand{\eps}{\varepsilon}
\newcommand{\V}{\mathcal V}
\newcommand{\E}{\mathcal E}
\newcommand{\K}{\mathcal K}
\newcommand{\A}{\mathcal A}
\newcommand{\D}{\mathcal D}
\newcommand{\ubase}{u^{\textup{sensor}}}
\newcommand{\usensor}{u^{\textup{sensor}}}
\newcommand{\uinflow}{u^{\textup{inflow}}}
\newcommand{\uoutflow}{u^{\textup{outflow}}}
\newcommand{\uorigin}{u^{\textup{origin}}}
\newcommand{\utarget}{u^{\textup{target}}}
\newcommand{\uinit}{u^{\textup{init}}}
\newcommand{\npdebatch}{n_{\text{pde}}}
\newcommand{\ninitbatch}{n_{\text{init}}}
\newcommand{\nbcbatch}{n_{\text{bc}}}
\newcommand{\torigin}{t^\text{origin}}
\newcommand{\ttarget}{t^\text{target}}
\newcommand{\xinit}{x^\text{init}}
\newcommand{\pinn}{\textsc{PINN}}
\newcommand{\jax}{\textsc{JAX}}
\newcommand{\deeponet}{\textsc{DeepONet}}
\newcommand{\deeponets}{\textsc{DeepONets}}
\newcommand{\pinns}{\textsc{PINNs}}
\newcommand{\KN}{Kirchhoff-Neumann}
\theoremstyle{plain}
\newtheorem{theorem}{Theorem}[section]
\newtheorem{proposition}[theorem]{Proposition}
\newtheorem{lemma}[theorem]{Lemma}
\theoremstyle{definition}
\theoremstyle{remark}
\newtheorem{remark}[theorem]{Remark}
\icmltitlerunning{Physics-Informed DeepONets for drift-diffusion on metric graphs}
\begin{document}
	
	\twocolumn[
	\icmltitle{Physics-Informed DeepONets for drift-diffusion on metric graphs: simulation and parameter identification}
	
	
	
	\icmlsetsymbol{equal}{*}
	
	\begin{icmlauthorlist}
\icmlauthor{Jan, Blechschmidt}{yyy}
\icmlauthor{Tom-Christian, Riemer}{yyy}
\icmlauthor{Max, Winkler}{yyy}
\icmlauthor{Martin, Stoll}{yyy}
\icmlauthor{Jan-F., Pietschmann}{zzz}
	\end{icmlauthorlist}
	
	\icmlaffiliation{yyy}{Department of Mathematics, TU Chemnitz, Chemnitz, Germany}
	\icmlaffiliation{zzz}{Department of Mathematics, University of Augsburg, Augsburg, Germany and Centre for Advanced Analytics and Predictive Sciences (CAAPS), University of Augsburg,
Universit\"{a}tsstr. 12a, 86159 Augsburg, Germany. }
	
	\icmlcorrespondingauthor{Firstname1 Lastname1}{first1.last1@xxx.edu}
	\icmlcorrespondingauthor{Firstname2 Lastname2}{first2.last2@www.uk}
	
	\icmlkeywords{Machine Learning, ICML}
	
	\vskip 0.3in
	]
	
	
	
	
	\begin{abstract}
		We develop a novel physics informed deep learning approach for solving nonlinear drift-diffusion equations on metric graphs.
		These models represent an important model class with a large number of applications in areas ranging from transport in biological cells to the motion of human crowds.
		While traditional numerical schemes require a large amount of tailoring, especially in the case of model design or parameter identification problems, physics informed deep operator networks (\deeponets ) have emerged as a versatile tool for the solution of partial differential equations with the particular advantage that they easily incorporate parameter identification questions.
		We here present an approach where we first learn three \deeponet\ models for representative inflow, inner and outflow edges, resp., and then subsequently couple these models for the solution of the drift-diffusion metric graph problem by relying on an edge-based domain decomposition approach.
		We illustrate that our framework is applicable for the accurate evaluation of graph-coupled physics models and is well suited for solving optimization or inverse problems on these coupled networks.

	\end{abstract}
	
\section{Introduction}
Dynamic processes on graphs \cite{newman2018networks,barabasi2013network} are crucial for understanding complex phenomena in many application areas. 
We focus on the case of a metric graph where each edge is associated with an interval of (possibly) different length. Therefore, the metric graph can be  equipped with a differential operator, acting separately on each edge and with appropriate coupling conditions or boundary conditions at the nodes, called the \textit{Hamiltonian}, leading to what is known as \textit{quantum graphs} \cite{lagnese2012modeling,berkolaiko2013introduction}.
Numerical methods for quantum graphs have gained recent interest \cite{arioli2018finite, gyrya2019explicit, stoll2021optimal} both for simulation of PDE models as well as for solving design or inverse problems. As the structure of such graphs is typically rather complex, efficient schemes such as domain decomposition methods \cite{leugering2017domain} are often needed for computational efficiency.

\begin{figure}
\begin{tikzpicture}[scale=1.0]
    \tikzset{known/.style={thick}};
    \tikzset{unknown/.style={thick, dashed, red}};
    \tikzset{inflow/.style={shape=circle,draw=Darkspringgreen,fill=Darkspringgreen,fill opacity=.7,text opacity=1.}};
    \tikzset{inner/.style={shape=circle,draw=Airforceblue,fill=Airforceblue,fill opacity=.6,text opacity=1.}};
    \tikzset{sensor/.style={shape=rectangle,draw=Darkorange,fill=Sensorgreen,fill opacity=.6,text opacity=1.}};
    \tikzset{outflow/.style={shape=circle,draw=Maroon,fill=Maroon,fill opacity=.5,text opacity=1.}};

    \begin{scope}[y={(-.8cm,0.5cm)},x={(1cm,0.5cm)}, z={(0cm,1cm)}]
    \node[inflow] (v1) at (-2.4,1.4,0) {$v_1$};
    \node[outflow] (v5) at (2.6,1.1,0) {$v_5$};
    \node[inner] (v3) at (-1,0,0) {$v_3$};
    \node[inner] (v4) at (1,0,0) {$v_4$};
    \node[inflow] (v2) at (-2.4,-.95,0) {$v_2$};
    \node[outflow] (v6) at (2.4,-1.4,0) {$v_6$};

    \coordinate (e3m) at (.20,-.2); 
    \coordinate (eee) at (-1.7,-3.7,0);


    \path [very thick, ->, color=Darkspringgreen](v1) edge node[below, transform canvas={xshift=.2cm}] {$e_1$} (v3);
    \path [very thick,->, color=Darkspringgreen](v2) edge node[below, transform canvas={xshift=.2cm}] {$e_2$} (v3);
    \path [very thick,->, color=Airforceblue](v3) edge node[below] {$e_3$} (v4);
    \path [very thick,->,color=Maroon](v4) edge node[below, transform canvas={xshift=.2cm}] {$e_4$} (v5);
    \path [very thick,->, color=Maroon](v4) edge node[below, transform canvas={xshift=-.2cm}] {$e_5$} (v6);

    

    \node (rect) [rectangle, draw, minimum width=40mm, minimum height=40mm, anchor= south west, rounded corners=5pt,dashed] at (-6,-8.0) {\includegraphics[width=.2\textwidth]{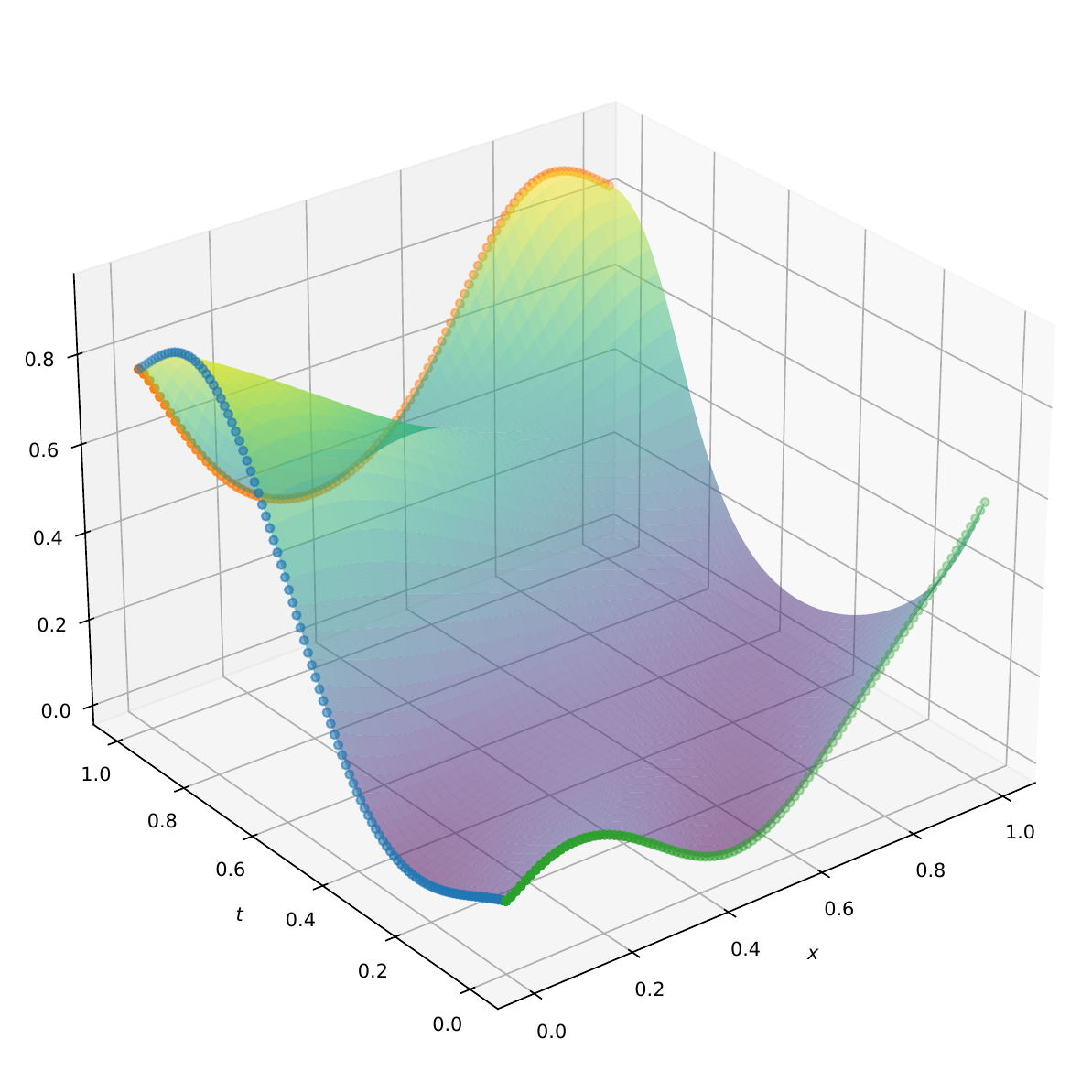}};
    \node[anchor=north] at (rect.north) {inner edge \textsc{DeepONet}};

    \draw[->] (v3) -- (rect.west) node[midway, left] {$\uorigin$};
    \draw[->] (v4) -- (rect.north) node[midway, right] {$\uoutflow$};
    \draw[->, line width = .6mm] (eee) -- (e3m) node[midway, right] {$\rho_e^u$};

    \def\upval{2.5}
    \coordinate (v1up) at (-2.4,1.4,\upval);
    \coordinate (v5up) at (2.6,1.1,\upval);
    \coordinate (v3up) at (-1,0,\upval);
    \coordinate (v4up) at (1,0,\upval);
    \coordinate (v2up) at (-2.4,-0.95,\upval);
    \coordinate (v6up) at (2.4,-1.4,\upval);
    
    \coordinate (e1) at ($(v1)!0.5!(v3)$);
    \coordinate (e2) at ($(v2)!0.5!(v3)$);
    \coordinate (e3) at ($(v3)!0.5!(v4)$);
    \coordinate (e4) at ($(v4)!0.5!(v5)$);
    \coordinate (e5) at ($(v4)!0.5!(v6)$);

    \node[sensor] (e1up) at ($(v1up)!0.5!(v3up)$) {$S_1$};
    \node[sensor] (e2up) at ($(v2up)!0.5!(v3up)$) {$S_2$};
    \node[sensor] (e3up) at ($(v3up)!0.5!(v4up)$) {$S_3$};
    \node[sensor] (e4up) at ($(v4up)!0.5!(v5up)$) {$S_4$};
    \node[sensor] (e5up) at ($(v4up)!0.5!(v6up)$) {$S_5$};
    
    \draw[dashed] (e1) -- (e1up);
    \draw[dashed] (e2) -- (e2up);
    \draw[dashed] (e3) -- (e3up);
    \draw[dashed] (e4) -- (e4up);
    \draw[dashed] (e5) -- (e5up);
    
    \end{scope}
    \begin{scope}
        \def\legendy{-4.5}
        \def\legendstep{.7}
        \def\xpos{-1}
        \node (legend) [rectangle, draw, minimum width=35mm, minimum height=30mm, anchor= south west] at (-4,\legendy-2.6) {};
        
        \node[inflow]  at (-3.6,\legendy) {$v$};
        \node[text width=4cm]  at (\xpos,\legendy) {inflow vertex};
        \node[outflow]  at (-3.6,\legendy-\legendstep) {$v$};
        \node[text width=4cm]  at (\xpos,\legendy-\legendstep) {outflow vertex};
        \node[inner]  at (-3.6,\legendy-\legendstep-\legendstep) {$v$};
        \node[text width=4cm]  at (\xpos,\legendy-\legendstep-\legendstep) {inner vertex};
        \node[sensor]  at (-3.6,\legendy-\legendstep-\legendstep-\legendstep) {$S$};
        \node[text width=4cm]  at (\xpos,\legendy-\legendstep-\legendstep-\legendstep) {measurement \\ sensor};

    \end{scope}
\end{tikzpicture}\end{figure}
	
In this paper we propose a machine learning technique, namely, the physics-informed \deeponet\ approach \cite{lu2021learning,wang2021learning} for drift-diffusion on metric graphs. These methods have been introduced to improve on the performance of the, by now well established, physics-informed neural networks (\pinns) \cite{raissi2019physics}, which have found their way into many application areas \cite{zhu2019physics,jin2021nsfnets,sahli2020physics} including fluid dynamics \cite{raissi2018hidden,mao2020physics,lye2020deep,magiera2020constraint,wessels2020neural}, continuum mechanics and elastodynamics \cite{haghighat2020deep,nguyen2020deep,rao2020physics}, inverse problems \cite{meng2020composite,jagtap2020conservative}, fractional advection-diffusion equations \cite{pang2019fpinns}, stochastic advection-diffusion-reaction equations \cite{chen2019learning}, stochastic differential equations \cite{yang2020physics} and power systems \cite{misyris2020physics}. \textsc{XPINN}s (eXtended \pinns) are introduced in~\cite{JagtapKardiadakis2020} as a generalization of \pinns\ involving multiple neural networks allowing for parallelization in space and time via domain decomposition, see also \cite{Heinlein2021} for a review on machine learning approaches in domain decomposition. 
Due to its broad range of applications, the PINN approach helped to establish the field of scientific machine learning \cite{thiyagalingam2022scientific,rackauckas2020universal,cuomo2022scientific,blechschmidt2021three}. 
On the other hand, the \pinn\ approach often suffers from reduced accuracy when compared with classical numerical methods for differential equations.
Furthermore, it has to be retrained everytime when initial conditions, boundary conditions or parameters of the PDE change.
The \deeponet\ architecture was introduced based on the universal approximation theorem for operators and relies on two neural networks for learning a representation of the solution operator, namely a \textit{branch net} for the input variables, e.g., time $t$ and space $x$, and a second neural network called \textit{trunk net} encoding boundary and initial conditions conditions as well as other parameters of the underlying problem, e.g., a variable velocity, viscosity or heat conductivity. Similar to \textsc{XPINN}s \deeponet\ has been extended for a domain decomposition application \cite{yin2022interfacing} where the key component is the coupling condition between the different domains that are constructed during the domain partitioning. 
	
In this work we introduce the extension of the  \deeponet\ framework to graphs, particularly the application to the case of a drift-diffusion equation posed on a metric graphs. Drift-diffusion models are used in many application areas ranging from modeling electrical networks, \cite{hinze2011pod}, to simulation of traffic flow in cities, \cite{Coclite2005}, and thus serve as a relevant and sufficiently complex test case. 

On the metric graph, the domain is naturally composed of a possibly large number of domains, i.e., the different edges. However, depending on the coupling conditions at vertices, different type of models have to be learned which distinguishes our approach from classical domain decomposition methods. Once these models are trained, we are able to obtain solutions on virtually arbitrary graphs via a computationally cheap optimization of loss terms at the nodes which ensure the coupling conditions. This advantage becomes even more significant when considering parameter identification problems where traditional PDE optimization based approaches would require many solutions of forward and adjoint equations, \cite{de2015numerical}. In our setting, solving the inverse problem merely manifests itself in adding additional loss terms. Therefore, strikingly, the cost of solving the forward and the inverse problem are practically the same.
	
Our main contributions are as follows:
\begin{itemize}
\item We propose a methodology to solve PDEs on graphs using a novel Lego-like domain decomposition approach where graph edges are represented by \deeponet\ models. 
\item Graph-agnostic training of the edge surrogate \deeponet\ model based on inner, inflow and outflow edges. No additional training is required to couple these models for representing flows on arbitrarily complex graphs.
\item  The novel \deeponet\ architecture enables robust model evaluation but also allows the solution of optimization or inverse problems at almost no additional cost. This is exemplified on a parameter identification problem.
\end{itemize}
	
	
	\section{Drift-diffusion equations on metric graphs}%
	\label{sec:DD_on_graphs}
	Let us introduce our notion of a metric graph in more detail. A metric graph is a directed graph that consists of a set of vertices $\V$ and edges $\E$ connecting a pair of vertices denoted by $(v_e^{\operatorname{o}},v_e^{\operatorname{t}})$ where $v_e^{\operatorname{o}},v_e^{\operatorname{t}}\in \V$. Here $v_e^{\operatorname{o}}$ denotes the vertex at the origin while $v_e^{\operatorname{t}}$ denotes the terminal vertex. In contrast to combinatorial graphs a length $\ell_e$ is assigned to each edge $e\in \E.$  We identify each edge with a one-dimensional interval which allows for the definition of differential operators. The graph domain is then denoted by
	\begin{equation*}
		\Gamma := \bigotimes_{e\in\E}[0,\ell_e].
	\end{equation*}
	We also introduce a normal vector $n_e(v)$ defined as $n_e(v_e^{\operatorname{o}}) = -1$ and $n_e(v_e^{\operatorname{t}}) = 1$.  To prescribe the behavior at the boundary of the graph, we first subdivide the set of vertices $\V$ into the interior vertices $\mathcal{V}_\mathcal{K}$ and the exterior vertices $\mathcal{V}_\mathcal{D}$ as follows:
	\begin{itemize} 
		\item the set of interior vertices $v \in \mathcal{V}_\mathcal{K} \subset \mathcal{V}$ contains all vertices that are incident to at least one incoming edge and at least one outgoing edge (i.e. $\forall v \in \mathcal{V}_\mathcal{K} \; \exists \ e_1, e_2 \in \mathcal{E}$ such that $v^{\operatorname{t}}_{e_1} = v$ and $v^{\operatorname{o}}_{e_2} = v$),
		
		\item the set of exterior vertices $v \in \mathcal{V}_\mathcal{D} \coloneqq \mathcal{V} \setminus \mathcal{V}_\mathcal{K}$, contains vertices to which either only incoming or only outgoing edges are incident, i.e., either $v^{\operatorname{t}}_{e} = v$ or $v^{\operatorname{o}}_{e} = v$ holds $\forall e \in \mathcal{E}_v$ with $\mathcal{E}_v$ the edge set incident to vertex $v$.
		
	\end{itemize}
	
	The differential operator defined on each edge consists of the non-linear drift-diffusion equation given by
	\begin{equation}
		\label{eq:strong_pde}
		\mathcal{H}(\rho_e) := \partial_t\rho_e  - \partial_x (\eps\,\partial_x\rho_e - \nu_e  \, f(\rho_e)) = 0, \quad e \in \E,
	\end{equation}
	where $\rho_e : e \times (0,T) \to \R_+$ describes the concentration of some quantity on the edge $e \in \E$, $\nu_e > 0$ is an edge-dependent velocity and $\eps > 0$ a (typically small) diffusion constant.
 Furthermore, $f: \R_+ \to \R_+$ satisfies $f(0) = f(1) = 0$.
 This property ensures that solutions satisfy $0 \le \rho_e \le 1$ a.e. on each edge, see Theorem \ref{thm:existence}.
 By identifying each edge with an interval $[0,\ell_e]$, we define the flux as
	\begin{align} \label{eq:flux}
		J_e(x) := - \eps\,\partial_x \rho_e (x) + \nu_e \, f(\rho_e(x))\,.
	\end{align}
	A typical choice for $f$ used in the following is $f(\rho_e) = \rho_e(1-\rho_e)$.
	
 \begin{remark}
     Note that the choice $\nu_e>0$ results in the fact that the prefered direction of transport is encoded in the direction of the edge (on our directed graph). On the other hand, due to the additional diffusion contributions, the flux $J_e$, and thus the direction of mass transport on each edge, may change sign.
 \end{remark}
	To make \eqref{eq:strong_pde} a well-posed problem, we need to add initial-conditions as well as coupling conditions in the vertices.
 First we impose on each edge $e \in \mathcal{E}$ the following initial condition
	\begin{equation}
		\label{eq:initial_conditions}
		\rho_e \left( 0,x \right)  = u_{e}^\text{init} \left( x \right),  \quad \text{for almost all } x \in (0, \ell_e), e \in \E,
	\end{equation}
	with $u_{e}^\text{init}  \in L^2 \left( e \right) $.
	
	For vertices $v\in \V_\K\subset \V$, we apply \emph{homogeneous \KN\ conditions}, i.\,e., there holds
	\begin{equation}
		\label{eq:Kirchhoff_Neumann_condition}
		\sum_{e\in \E_v}J_e(v)\,n_e (v)=0,
	\end{equation}
	for almost every $t \in (0,T)$ and with $\E_v$ the edge set incident to the vertex $v$.
	Additionally, we ask the solution to be continuous over vertices, i.e.
	\begin{align}
		\label{eq:continuity_condition}
		\rho_e(v) = \rho_{e'}(v) \quad \text{ for all }v \in \V_\K,\; e,\,e' \in \E_v,
	\end{align}
    again for almost every $t \in (0,T)$. 
	In vertices $v\in \V_\D:=\V\setminus \V_\K$ the solution $\rho$ fulfills \emph{flux boundary conditions}
	\begin{equation}
		\label{eq:Dirichlet_conditions}
		\sum_{e\in \E_v}J_e(v)\, n_e (v)=-u^\text{inflow}_v(t) \, (1-\rho_v)
		+ u^\text{outflow}_v(t) \, \rho_v,
	\end{equation}
	where
	$u^\text{inflow}_v:(0,T) \to \R_+, \,u^\text{outflow}_v : (0,T) \to \R_+$, ${v \in \V_\D}$,
	are functions prescribing the rate of influx of mass into the graph as well as the velocity of mass leaving the graph at the boundary vertices. Note that this choice ensures that the bounds $0 \le \rho_e \le 1$ are preserved, while the total mass on the complete graph may change over time. In typical situations, boundary vertices are either of influx- or of outflux type, i.e. $u^\text{inflow}_v(t)u^\text{outflow}_v(t) = 0$ for all $v \in \V_\D$.

	The \KN\ conditions are the natural boundary conditions for the differential operator \eqref{eq:strong_pde}, as they ensure that mass enters or leaves the system only via the boundary nodes $\V_\D$ for which either $\uinflow_{v}$ or $\uoutflow_v$ is positive.

	Having introduced the complete continuous model, we state the following existence and uniqueness result, whose proof can be found in Appendix \ref{sec:proof_existence}, together with a detailed definition of the function spaces involved.
	\begin{theorem}\label{thm:existence}
		Let the initial data $\uinit \in L^2(\Gamma)$ satisfy $0 \le \uinit \le 1$
		a.e.\ on $\E$ and let nonnegative functions $\uinflow_{v}, \uoutflow_v \in L^\infty(0,T)$, $v\in \V_D$ and non-negative numbers $\nu_e$, $e\in\E$, be given. Then there exists a unique weak solution $\rho \in L^2(0,T; H^1(\Gamma)) \cap H^1(0,T; H^1(\Gamma)^*)$ s.t.
			\begin{multline}\label{eq:weak}
			\sum_{e \in \E} \int_e  \left(\partial_t \rho_e(t)\,\varphi_e +
			(\varepsilon\,\partial_x \rho_e(t) - \nu_e \, 
			f(\rho_e(t)))\,\partial_x \varphi_e \right)dx\\
			+\sum_{v \in \V_D} (-\uinflow_{v}(t) (1-\rho(t,v)) + \uoutflow_v(t) \rho(t,v))\varphi(v) = 0,
		\end{multline}
					for all test functions $\varphi \in H^1(\Gamma)$ and a.a.\ $t\in(0,T)$.
  Here $L^2$ denotes the space of square integrable functions. The space $H^1$ denotes the space of functions for which also the weak derivative is bounded in $L^2$ and with $(H^1)^*$ its dual space. The Bochner spaces contain time-dependent functions where for $u( t,x)$ to belong to, e.g. $L^2(0,T; H^1(\Gamma))$, the norm
		$$
		\int_0^T \|u(t,\cdot)\|_{H^1(\Gamma)}^2\;dx 
		$$
		has to be finite.
	\end{theorem}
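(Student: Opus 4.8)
The plan is to combine a Faedo-Galerkin approximation with a truncation of the nonlinearity, followed by a maximum-principle argument that restores the original problem, and finally a Grönwall estimate for uniqueness. First I would fix the functional-analytic setting: the space $H^1(\Gamma)$ is understood to consist of functions that are $H^1$ on every edge \emph{and} continuous across the interior vertices $\V_\K$, so that the continuity condition \eqref{eq:continuity_condition} is already built into the test space and the \KN\ conditions \eqref{eq:Kirchhoff_Neumann_condition} appear as the natural boundary terms after integration by parts. Together with $L^2(\Gamma)$ and the dual $H^1(\Gamma)^*$ this forms a Gelfand triple, with the crucial one-dimensional feature that $H^1(\Gamma)\hookrightarrow C(\Gamma)$ compactly, which later makes the vertex (trace) terms tractable. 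Since $f$ is a priori only defined on $\R_+$, I would replace it by a globally Lipschitz, bounded truncation $\tilde f:\R\to\R$ with $\tilde f=f$ on $[0,1]$ and $\tilde f\equiv 0$ outside $[0,1]$; the claim to be verified later is that the solution of the truncated problem takes values in $[0,1]$ and hence solves the original equation.

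For the truncated problem I would run the Galerkin scheme in the eigenbasis of the Kirchhoff-Neumann Laplacian on $\Gamma$ (an orthonormal basis of $L^2(\Gamma)$ that is orthogonal in $H^1(\Gamma)$). Projecting \eqref{eq:weak} onto the first $N$ modes yields an ODE system with locally Lipschitz right-hand side, since $\tilde f$ is Lipschitz and the vertex terms are continuous linear functionals; Carathéodory gives local solvability. Testing the discrete equation against $\rho^N$ itself produces the energy identity: the diffusion term contributes $\eps\|\partial_x\rho^N\|^2$, the convection term is controlled via boundedness of $\tilde f$ and Young's inequality, and the boundary terms $-\uinflow_v(1-\rho_v)+\uoutflow_v\rho_v$ are absorbed using $\uinflow_v,\uoutflow_v\in L^\infty$ together with $\|\cdot\|_{C(\Gamma)}\lesssim\|\cdot\|_{H^1(\Gamma)}$. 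This gives uniform bounds for $\rho^N$ in $L^\infty(0,T;L^2(\Gamma))\cap L^2(0,T;H^1(\Gamma))$ and, reading off $\partial_t\rho^N$ from the equation, in $L^2(0,T;H^1(\Gamma)^*)$; the bounds also yield global existence of the ODE system. The Aubin-Lions lemma then gives a subsequence converging strongly in $L^2(0,T;L^2(\Gamma))$ and, through the compact embedding into $C(\Gamma)$, strong convergence of the vertex traces in $L^2(0,T)$. This is exactly what is needed to pass to the limit: the Nemytskii map $\tilde f$ converges by continuity and boundedness, and the vertex terms converge through the strong trace convergence, so the limit $\rho$ satisfies \eqref{eq:weak} with $\tilde f$; the embedding $L^2(0,T;H^1(\Gamma))\cap H^1(0,T;H^1(\Gamma)^*)\hookrightarrow C([0,T];L^2(\Gamma))$ makes sense of $\rho(0)=\uinit$.

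Next I would establish $0\le\rho\le1$ by testing \eqref{eq:weak} with the admissible functions $(\rho-1)^+$ and the negative part of $\rho$. On the set $\{\rho>1\}$ the truncation forces $\tilde f(\rho)=0$, so the convection term drops out, the diffusion term is nonnegative, and the boundary contribution $(\uinflow_v(\rho_v-1)+\uoutflow_v\rho_v)(\rho_v-1)$ is nonnegative because $\uinflow_v,\uoutflow_v\ge0$; hence $\tfrac{d}{dt}\|(\rho-1)^+\|^2\le0$, and since $\uinit\le1$ the initial datum vanishes, giving $\rho\le1$. The lower bound $\rho\ge0$ follows by the symmetric argument, where the vanishing of $\tilde f$ below $0$ and the nonnegativity of $\uinflow_v,\uoutflow_v$ again force the relevant boundary term to have the favourable sign. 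With $0\le\rho\le1$ established, $\tilde f(\rho)=f(\rho)$ and $\rho$ solves the original problem. Finally, for uniqueness I would take the difference $w=\rho_1-\rho_2$ of two bounded solutions and test with $w$: the convection term is controlled by the Lipschitz bound $|f(\rho_1)-f(\rho_2)|\le L|w|$ via Young's inequality against the diffusion term, while the vertex terms reduce to $\sum_{v}(\uinflow_v+\uoutflow_v)\,w(v)^2\ge0$ and may simply be discarded; Grönwall with $w(0)=0$ then yields $w\equiv0$.

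I expect the main obstacle to be the rigorous treatment of the vertex coupling terms throughout the argument — ensuring that the \KN\ and flux boundary conditions are correctly encoded in the weak form, that the discrete energy estimate closes despite these nonstandard boundary contributions, and above all that the vertex traces converge strongly so that one may pass to the limit in the nonlinear, $\rho$-dependent boundary expressions. The one-dimensional compact embedding $H^1(\Gamma)\hookrightarrow C(\Gamma)$ is the key structural fact that makes this work; without it the boundary terms would require more delicate trace estimates.
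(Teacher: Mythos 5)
Your proposal is correct in its essentials, but it takes a genuinely different route from the paper. The paper's proof (Appendix A) exploits the formal gradient-flow structure: it introduces the entropy functional and the entropy variable $w = \eps(\log\rho_e - \log(1-\rho_e)) + \nu_e x$, discretizes in time by implicit Euler with an additional $\tau$-weighted $H^1$-regularization in $w$, solves each time step by a Lax--Milgram linearization combined with a Schauder fixed-point argument (the operator $S_2$ mapping $w$ back to $\rho$ via the logistic transformation enforces $0<\rho^k<1$ \emph{automatically}, with no truncation or maximum principle needed), derives uniform bounds from a discrete entropy-dissipation inequality, and passes to the limit $\tau\to 0$ via Aubin--Lions/Simon compactness. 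You instead use Faedo--Galerkin in the eigenbasis of the \KN\ Laplacian with a Lipschitz truncation $\tilde f$, a standard $L^2$ energy estimate, and then recover $0\le\rho\le 1$ a posteriori by Stampacchia truncation, testing with $(\rho-1)^+$ and the negative part; your sign computations for the vertex terms are correct, and the compact embedding $H^1(\Gamma)\hookrightarrow C(\Gamma)$ that you flag as the key structural fact plays the same role in both arguments (it is what makes the trace terms converge and what lets the paper conclude $\rho^k\in L^\infty$ from $\rho^k\in H^1(\E)$). What each approach buys: yours is more elementary and self-contained, but it relies on the availability of a comparison/truncation argument, which here works only because $f(0)=f(1)=0$ and the flux boundary condition is affine in $\rho_v$ with the right signs; the entropy method is heavier but extends to degenerate mobilities and cross-diffusion systems where no maximum principle exists, and yields the \emph{strict} bounds $0<\rho^k<1$ at the discrete level, which are needed there to make sense of the logarithms. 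One point in your favour: you supply a Grönwall uniqueness argument (using the Lipschitz bound on $f$ over $[0,1]$ and discarding the nonnegative vertex terms $(\uinflow_v+\uoutflow_v)w(v)^2$), whereas the paper's appendix, despite the theorem asserting uniqueness, only carries out the existence part; your uniqueness step is correct and actually fills that gap.
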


\section{Learning surrogate models}%
\label{sec:surrogate_model}



We apply the operator learning approach to obtain models for the dynamics on edges, given initial and boundary data. 
Due to the boundary and flux conditions \eqref{eq:Kirchhoff_Neumann_condition}--\eqref{eq:Dirichlet_conditions} each graph can be partitioned into three types of edges: 
\begin{itemize}
    \item inflow edges originate in a vertex $v^{\operatorname{o}} \in \V_\D$ with $u_{v^{\operatorname{o}}}^\text{inflow}(t) \neq 0$ and terminate in a inner vertex,
    \item inner edges originate and terminate in inner vertices,
    \item outflow edges originate in an inner vertex and terminate in a vertex $v^{\operatorname{t}} \in \V_\D$ with $u_{v^{\operatorname{t}}}^\text{outflow}(t) \neq 0$.
\end{itemize}
In our framework we design one \deeponet\ model for each of these three different edges. Once trained, this will allow to construct a composite model using the \deeponet\ submodels for inflow, outflow and inner edges as building blocks of typical graphs. To be more precise, the PDE-describing sensor measurements $\ubase_e = (\uorigin_e, \utarget_e, \uinit_e, \nu_e)$ are edge-specific, since they have to accommodate for different types of flux conditions, either \KN\ conditions \eqref{eq:Kirchhoff_Neumann_condition} for inner edges or inflow and outflow conditions \eqref{eq:Dirichlet_conditions} for inflow and outflow edges, respectively.

The training data, i.e.  boundary and initial conditions, are assumed to be given, as a function of discrete time, in certain sensor locations, and are collected in a vector $\ubase \in \R^{n_\text{sensor}}$.
Therefore, a deep operator network maps $(\ubase, t,x)$ to the solution of the respective PDE on an individual edge with initial and boundary conditions encoded in $\ubase$. 
Our physics-informed \deeponet\ does so by incorporation of residual terms that involve the point-wise evaluation of the PDE as well as boundary and initial conditions.
This flexibility of learning the solution of the PDE operator, i.e., the solution of the PDE for arbitrary boundary conditions $\ubase$, makes them a viable tool in our method.


To generate training data for the drift-diffusion model on the metric graph we rely on a finite volume implementation described in \cref{sec:FVM} where we fix $\ell_e=1$ for all edges and $T=1$. We solve the PDE \eqref{eq:strong_pde}--\eqref{eq:initial_conditions} on three kinds of graphs depicted in \cref{fig:training_graphs} using this finite volume method (FVM). 
Initial conditions $\uinit$ as well as inflow and outflow conditions $\uinflow_v$ and $\uoutflow_v$ are obtained by sampling from a Gaussian process for all edges $e \in \E$ and all vertices $v \in \V_\D$, resp.
These are evaluated on an equidistant discretization, both in space and time. 
We assume that all random Gaussian processes are approximated through
\begin{equation}
    \label{eq:GPgeneral}
    g(x) = \sum_{k=1}^{n_\text{GP}}\eta_k \,\phi(x-x_k)
\end{equation}
by using a radial basis function (RBF) kernel $\phi(r)=\exp\left({-\norm{r}^2/\ell^2}\right)$ with length scale $\ell = 0.5$, $n_\text{GP} = 512$ equally distributed centers $x_k \in [0,1]$ and $\eta_k \sim \mathcal{N}(0,1)$ normally distributed.

To accommodate for discontinuities in the initial conditions of the randomly initialized graph, we let the finite volume scheme run for a small time-interval and then take the solution at this time as the initial solution for the training of our model at the sensor locations $\xinit$ to obtain $\uinit_e$ along each edge.
The training flux sensor measurements $\uorigin_{e}$ and $\utarget_{e}$ are taken similarly at sensor locations $\torigin$ and $\ttarget$, resp., by evaluation of the flux boundary condition \eqref{eq:Dirichlet_conditions} if either the origin or target vertex belong to $\V_\D$, and by evaluation of the \KN\ condition \eqref{eq:Kirchhoff_Neumann_condition} for vertices in $\V_\K$.
 
 	\begin{figure}
		\begin{center}
			\includegraphics[scale=0.5]{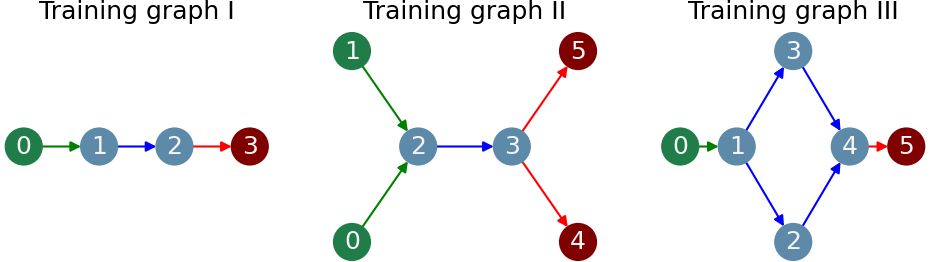}
		\end{center}
		\caption{Model graphs that were used to generate training data for physics-informed \deeponets. Green edges are used to train inflow model, blue ones for inner model and red ones for outflow model.}
  \label{fig:training_graphs}
	\end{figure}

To learn the parameters of our model, we minimize the objective
\begin{multline}
\label{eq:loss_learning}
\sum_{i=1}^{\npdebatch} \L_\text{PDE} (\usensor_i, t_i, x_i)
+ 
\sum_{i=1}^{\ninitbatch} \L_\text{init} (\usensor_i, x_i)\\
+ 
\sum_{i=1}^{\nbcbatch} \L_\text{edge} (\usensor_i, t_i)
\end{multline}
where $\theta$ is the set of trainable parameters of our model, $\npdebatch$, $\ninitbatch$ and $\nbcbatch$ are the respective batch sizes.
By setting $G_\theta^{\usensor}(t, x) := G_\theta(\usensor, t, x)$ as the output of the inflow (resp.\ inner and outflow) operator network, the pointwise PDE loss is defined as
\begin{equation*}
     \L_\text{PDE} (\usensor_i; t_i, x_i) :=\left(\mathcal{H}(G_\theta^{\usensor_i}(t_i,x_i))\right)^2,
     %
\end{equation*}
where the \deeponet\ operator is learned to satisfy the physics model.
For the initial data we use 
\begin{equation*}
    \L_\text{init}(\usensor_i; x_i) = \left( G_\theta^{\usensor_i}(0,x_i) - u_e^{\text{init}}(x_i)\right)^2.
\end{equation*}
The edge loss is the only term which differs among the three different edge types.
We train the inflow model based on 
\begin{multline*}
    \L_\text{edge}^\text{inflow}(\usensor_i; t_i) =\\
    \left( \uorigin_i (t_i) \,(1 - G_\theta^{\usensor_i}(t_i,0)) - J_e(G_\theta^{\usensor_i}(t_i,0))\right)^2\\
    +\left(\utarget_i(t_i) - J_e(G_\theta^{\usensor_i}(t_i,1))) \right)^2,
\end{multline*}
while the loss for the inner model is given by
\begin{multline*}
    \L_\text{edge}^\text{inner}(\usensor_i; t_i) =\\
    \left(\uorigin_i(t_i) - J_e(G_\theta^{\usensor_i}(t_i,0))) \right)^2\\
    +\left(\utarget_i(t_i) - J_e(G_\theta^{\usensor_i}(t_i,1))) \right)^2.
\end{multline*}
Similar to the inflow edge loss, the corresponding outflow edge loss term reads
\begin{multline*}
    \L_\text{edge}^\text{inflow}(\usensor_i; t_i) =\\
    \left(\uorigin_i(t_i) - J_e(G_\theta^{\usensor_i}(t_i,0))) \right)^2\\
    \left( \utarget_i (t_i) \, G_\theta^{\usensor_i}(t_i,1) - J_e(G_\theta^{\usensor_i}(t_i,1))\right)^2.
\end{multline*}
In contrast to the \emph{default} \deeponet\ approach, which is trained using a large number of point evaluations of the solution obtained using some reference  numerical method, the physics-informed approach only relies on the physical model in arbitrary points as well as a set of reasonable initial and boundary measurements.

 The model architecture in the approximation of the operator net follows \cite{wang2021learning}.
 In particular, we use a modified multilayer perceptron (MLP) as branch net and a Fourier network with 5 random frequencies as trunk net.
 We train our models with seven hidden layers and hyperbolic tangent activation function.
 Training of all models is conducted with a gradient clipped Adam optimizer \cite{kingma2014adam}  with an exponentially decaying learning rate schedule and \num{20000} epochs.
 To investigate the influence of the expressivity of the networks, we train a small network with seven hidden layers and width 100 and a large one with hidden dimension 200 for the various edge types.
For each model, we use single precision on a single NVIDIA A40 GPU with three different sets of training data: the $5$K, $10$K and $20$K models use data generated from \num{5000}, \num{10000} and \num{20000} FVM solves with random measure data, resp.
 Since our training graphs depicted in \cref{fig:training_graphs} contain in total $6$ inner edges, $4$ inflow and $4$ outflow edges, the inner model is trained with 50 percent more data than the inflow and outflow edge model.
 We decided to keep this slight imbalance, due to the fact that inner edges appear much more often than boundary edges, especially in larger graphs.
 
 We report the validation loss for each model in \cref{tab:validation_loss_training}.
 One can clearly see that the approximation quality of our model improves significantly if more training data are used.
 Furthermore, the larger model with width \num{200} benefits even more from a larger training set and halves the validation loss when compared to the smaller network. 
 Convergence plots of the various loss terms can be found in \cref{sec:loss_plots}.

 	\begin{table}
    \centering
		\begin{tabular}{crccc}
			\toprule  
			Width&Data& Inflow& Inner & Outflow  \\
			\midrule
			\multirow{ 3}{*}{100} & 5K & 4.34e-03 & 1.42e-03 & 3.63e-03 \\
			& 10K& 1.62e-03 & 8.29e-04 & 1.95e-03\\
			& 20K& 1.09e-03 & 5.67e-04 & 1.05e-03\\
			\midrule
			\multirow{ 3}{*}{200} & 5K & 8.09e-03 &1.42e-03 & 6.07e-03 \\
			& 10K&  1.96e-03  &5.50e-04 & 2.12e-03\\   
			& 20K& 6.64e-04& 2.62e-04 & 6.30e-04\\
            \bottomrule
		\end{tabular}
        \caption{Final validation loss after \num{20000} epochs of training.}
        \label{tab:validation_loss_training}
	\end{table}

	\begin{figure}
		\begin{center}
			\includegraphics[scale=0.25]{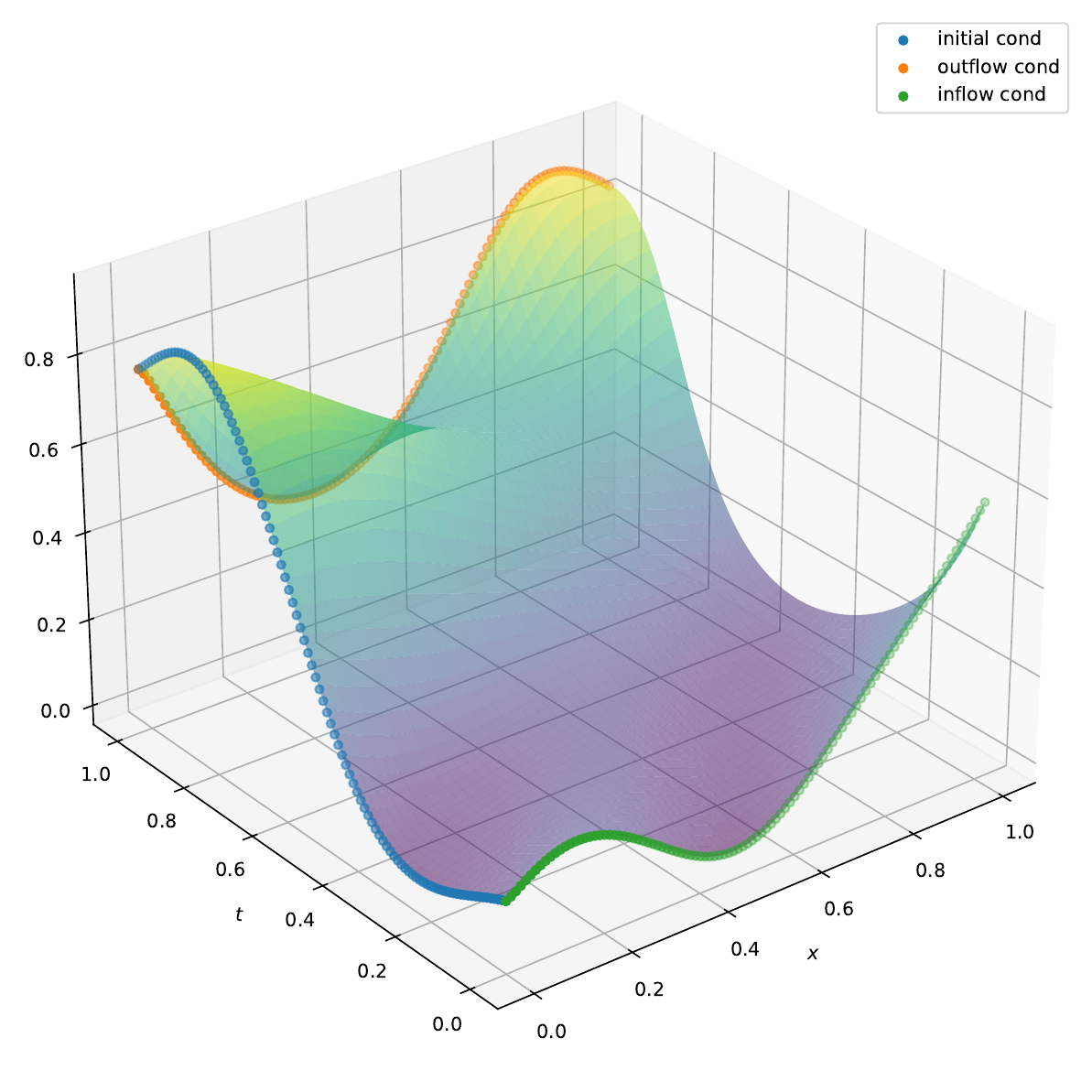}
		\end{center}
		\caption{Illustration of random GP training data: initial condition measurements $\uinit$ (blue), inflow measurements $\uinflow_{v}$ (green), outflow measurements $\uoutflow_{v}$ (orange).}
	\end{figure}
	\begin{figure}
		\begin{center}
			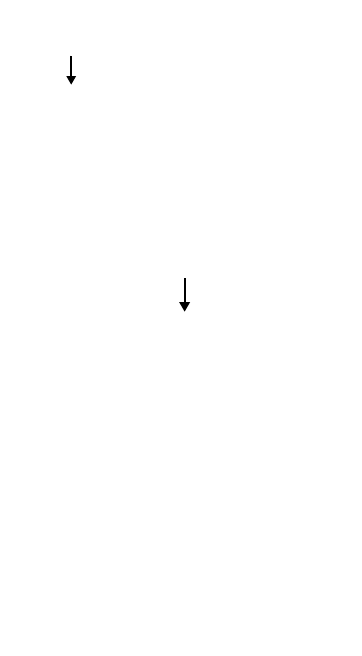 
		\end{center}
		\caption{Illustration of physics-informed \deeponet\ adapted to our setting from \cite{wang2021learning}.}
	\end{figure}


	\section{Model evaluation}\label{sec:moodel_evaluation}

After performing the training procedure discussed in the previous section, we obtain three \deeponet models with different sets of parameters for inflow, outflow and inner edges, resp. We denote them by $G_{\theta_{\text{inflow}}}$, $G_{\theta_{\text{inner}}}$ and $G_{\theta_{\text{outflow}}}$, where we always use letter $G$, as the architecture is the same is all three cases. Each operator network $G_\theta$, for $\theta = \theta_{\text{inflow}},\, \theta_{\text{inner}},\, \theta_{\text{outflow}}$ returns function evaluations of its solution in arbitrary points $(t, x)$ for arbitrary feasible boundary and initial conditions, separately for each edge.

Pursuing our goal to solve the drift-diffusion equation defined in~\eqref{eq:strong_pde} on a complete graph, it remains to obtain the correct input parameters on each edge such that the continuity and Kirchhoff-Neumann conditions given in \eqref{eq:continuity_condition} and \eqref{eq:Kirchhoff_Neumann_condition} resp., are satisfied at each vertex.
	Learning these unknown flow parameters $\flow$ is done by minimization of the loss function 
 \begin{equation}\label{eq:def_loss_learning}
     \mathcal{L}_{\text{coupling}} (\flow) = \sum_{i} \mathcal{L}_{\text{c}}(t_i,\flow)
 \end{equation}
 where $\mathcal{L}_{\text{c}}(t_i,\flow)$ is defined by
	\begin{multline*}
        \underbrace{%
        \frac{1}{\abs{\V_\K}}\sum_{v \in \V_\K} \frac{1}{\abs{\E_v}} \sum_{e, e' \in \E_v} (\hat\rho^{u(\flow)}_e(t_i,v) - \hat\rho^{u(\flow)}_{e'}(t_i,v) )^2}_{\text{continuity loss}}\\
		+
		\underbrace{\frac{1}{\abs{\V_\K}} \sum_{v \in \V_\K} \frac{1}{\abs{\E_v}} \Big(\sum_{e \in \E_v} (\hat{J}^{u(\flow)}_e(t_i,v) \, n_e(v) \Big)^2}_{\text{Kirchhoff loss}}
	\end{multline*}
	where the first term ensures the continuity of the flow values at each node for all edges.The second Kirchhoff term ensures that the conservation of mass across the overall graph and all nodes.
 Here, the value $\hat\rho^{u(\flow)}_e(t_i,v)$ corresponds to the evaluation of the \deeponet\ for $u(\flow)$ depending on the respective edge type of $e$ at time $t_i$.
 Similarly, $\hat{J}^{u(\flow)}_e(t_i,v)$ represents the evaluation of the flux.
 We here assume that the vector $\flow$ is approximated by a kernel interpolation using a radial basis function (RBF) kernel with fixed parameters resulting in 
 $$
 \flow(t)=\sum_{k=1}^{n_{\beta}}\beta_k\phi(t-t_k)
 $$
 where $n_{\beta}=10$ is chosen to further reduce the computational complexity and $t_k$ are uniformly distributed in $[0,1].$ The kernel function $\phi(r)=\exp\left({\frac{-\norm{r}^2}{\ell^2}}\right)$ with $\ell=0.2$. To illustrate this in a bit more detail consider the following inflow edge modeled via 
$$
(\uorigin, \utarget, \uinit)\in\R^{n_{\textrm{origin}}+n_{\textrm{target}}+n_{\textrm{init}}}.
$$
where the values for inflow, stored in $\uorigin$, and initial condition $\uinit$ are known and the values for the outflow, encoded in $\utarget$, have to be determined. Since $\utarget$ is parameterized using the above-described RBF interpolation we now learn the parameters $\beta$ for the outflow condition. To address this challenge on the whole graph we learn the values for $\beta$ at all nodes to enforce the PDE, the coupling and continuity conditions as well as initial and boundary (inflow plus outflow) conditions. The parametrization of this system only requires $2n_{\beta}$ parameters for all inner edges and $n_\beta$ parameters for inflow and outflow edges. With $n_\beta$ small the resulting learning can be done in a matter of seconds using a standard gradient based optimization algorithm such as Adam implemented in \jax\ \cite{jax2018github}.

 	\begin{figure}
		\begin{center}
			\includegraphics[scale=0.5]{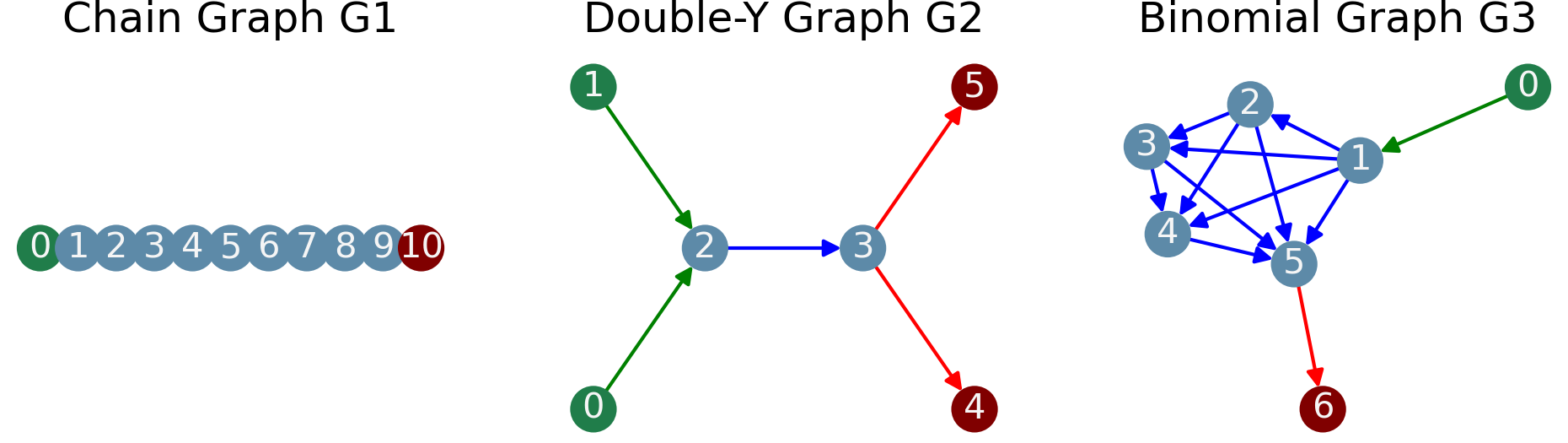}
		\end{center}
		\caption{Model graphs that were used to verify our methodology.}
  \label{fig:test_graphs}
	\end{figure}

The results confirm that our methodology is able to learn the solution of the drift-diffusion PDE on graphs.
In the upper part of \cref{fig:double_y_inference} we plot both the physics-informed \deeponet\ and the reference solution.
The error terms shown below indicate that the approximation error is small, see also \cref{tab:l2_abs_error} and \cref{tab:l2_rel_error} for detailed values.
\Cref{fig:unrolled_chain_inference} shows that our method is able to capture nonsmooth transitions at the vertices of a chain graph.
Again, the solution of the \deeponet\ and the reference solution are visually indistinguishable.

	\begin{figure}
	    \centering
	    \includegraphics[width=1\linewidth]{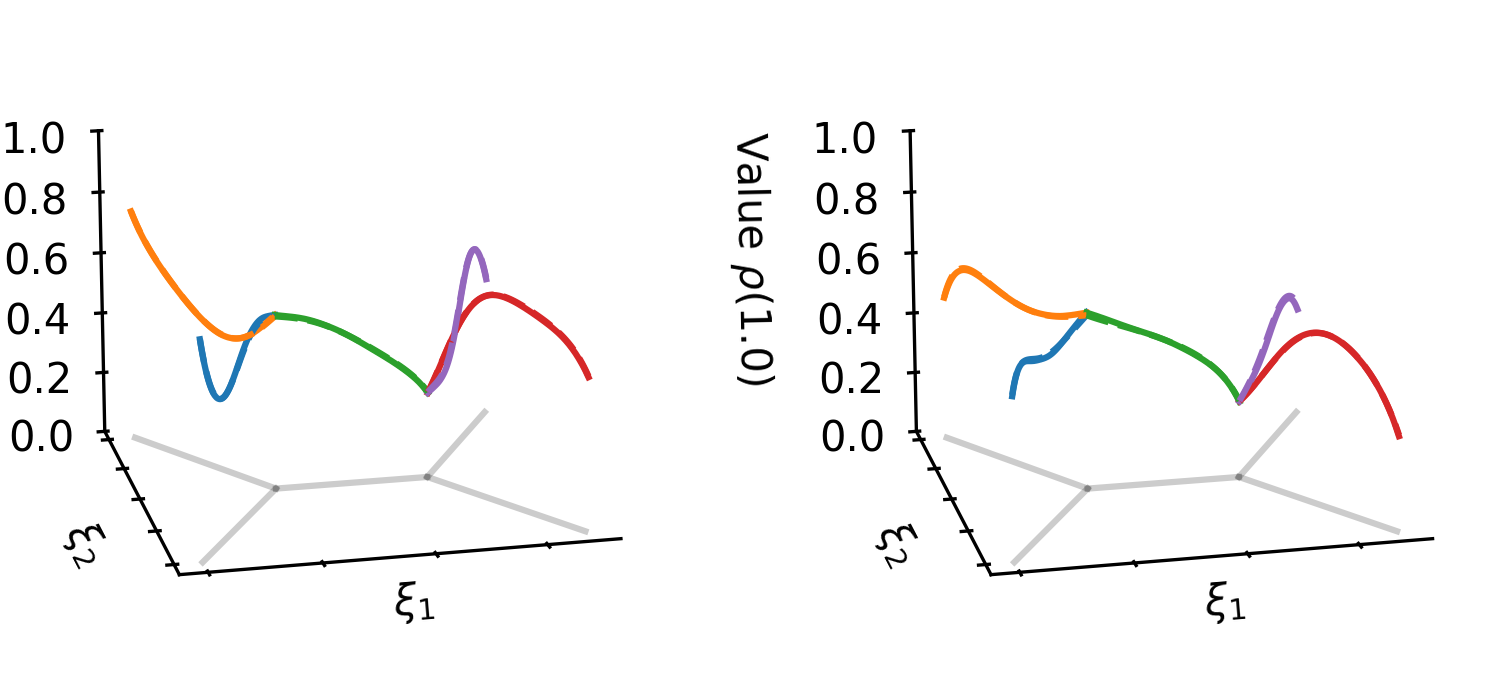}\\
	    \includegraphics[width=1\linewidth]{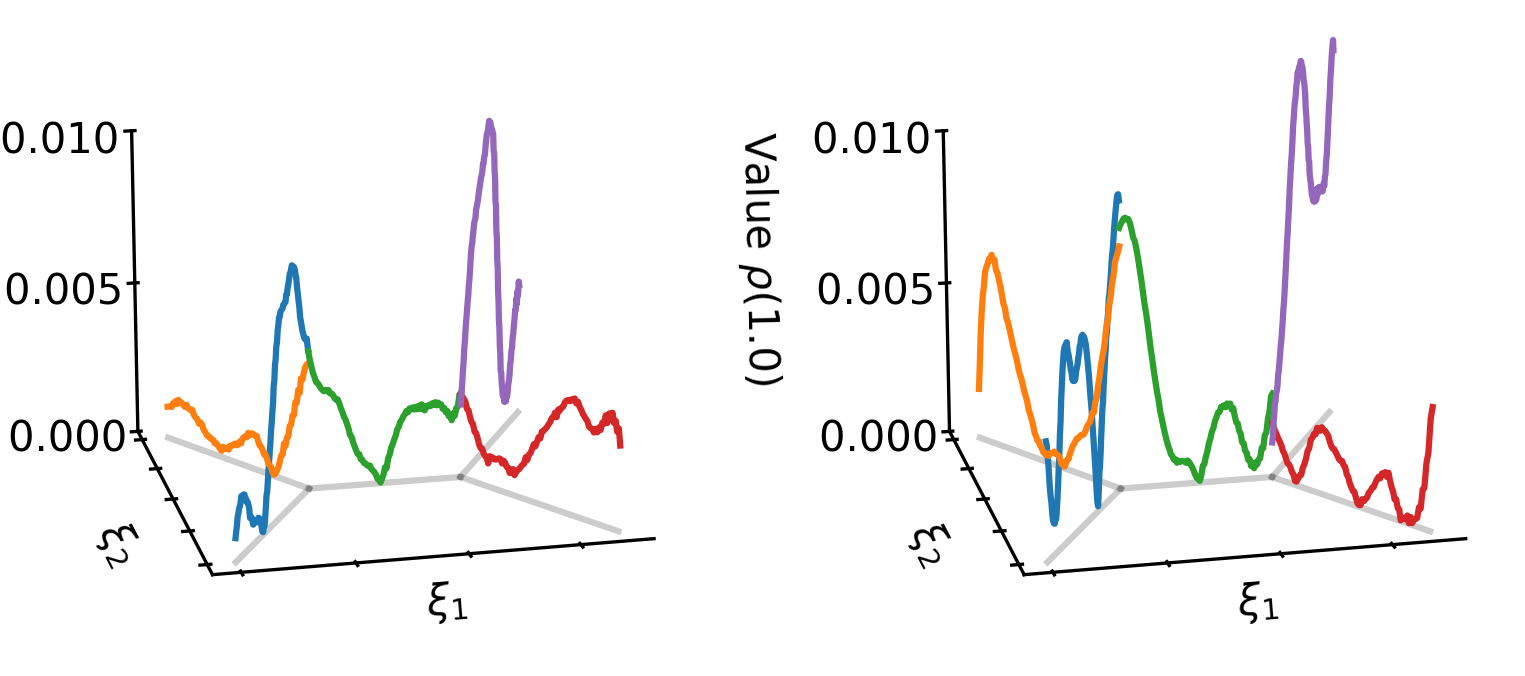}
	    \caption{Upper row: Almost indistinguishable reference solution (solid) and PI \deeponet\ solution (dashed) on model graph at $t=0.5$ (left) and $t=1.0$ (right). Lower row: Absolute difference between reference and PI \deeponet\ solution.}
	    \label{fig:double_y_inference}
	\end{figure}
 
	\begin{figure}
	    \centering
	    \includegraphics[width=.8\linewidth]{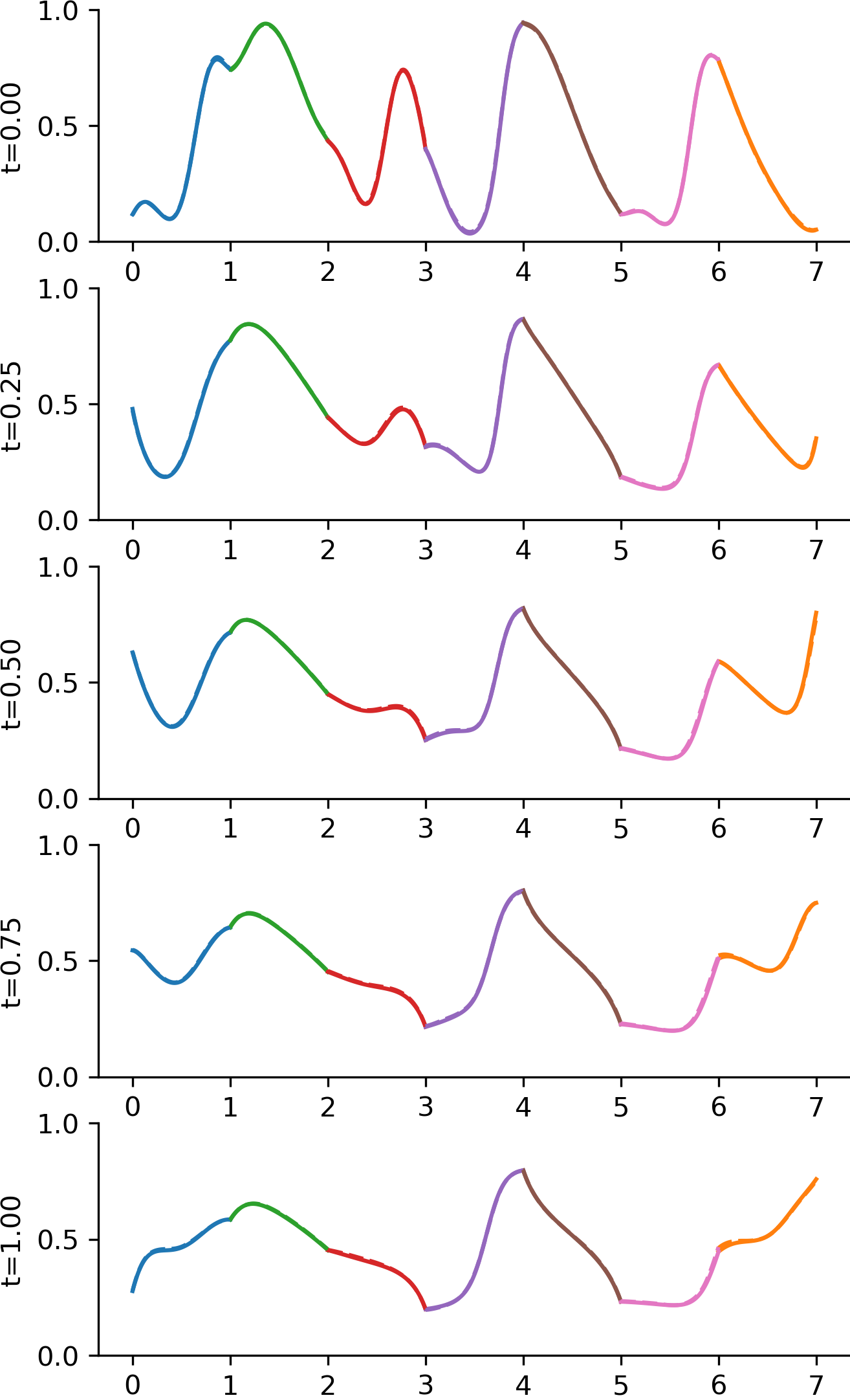}
	    \caption{Reference solution (solid) and PI \deeponet\ solution (dashed) on unrolled chain graph with 7~edges over time.}
	    \label{fig:unrolled_chain_inference}
	\end{figure}
 	 	\begin{table}
    \centering
		\begin{tabular}{crccc}
			\toprule  
			Width&Data& $G_1$ & $G_2$ & $G_3$  \\
			\midrule
			\multirow{ 3}{*}{100} & 5K & 5.50e-02 & 3.27e-02 & 3.97e-02 \\
			& 10K& 9.38e-03 & 1.29e-02 & 1.31e-02\\
			& 20K& 8.46e-03 & 1.03e-02  & 1.11e-02 \\
			\midrule
			\multirow{ 3}{*}{200} & 5K & 2.87e-02 & 1.71e-02 & 2.35e-02 \\
			& 10K&  6.06e-03  & 7.66e-03 & 7.59e-03\\   
			& 20K& 4.68e-03 & 5.62e-03 & 5.81e-03 \\
            \bottomrule
		\end{tabular}
        \caption{Absolute space-time $L^2$-error between solution of the FVM code compared to the output of \deeponet\ averaged over \num{1000} runs with randomly drawn initial and boundary conditions. These are sampled as Gaussian processes  \eqref{eq:GPgeneral} with $n_\text{GP} = 468$ and $\ell=0.4$.}
        \label{tab:l2_abs_error}
	\end{table}
 
	 	\begin{table}
    \centering
		\begin{tabular}{crccc}
			\toprule  
			Width&Data& $G_1$ & $G_2$ & $G_3$  \\
			\midrule
			\multirow{ 3}{*}{100} & 5K & 1.28e-01 &  7.05e-02 & 9.74e-02 \\
			& 10K& 2.15e-02 & 2.73e-02 & 3.21e-02\\
			& 20K& 1.96e-02 &2.21e-02  & 2.59e-02\\
			\midrule
			\multirow{ 3}{*}{200} & 5K & 5.48e-02 & 3.45e-02 & 4.79e-02 \\
			& 10K& 1.37e-02 & 1.62e-02 & 1.73e-02\\   
			& 20K& 1.06e-02 & 1.20e-02 & 1.30e-02\\
            \bottomrule
		\end{tabular}
        \caption{Relative space-time $L^2$ error between solution of the FVM code compared to the output of \deeponet\ averaged over \num{1000} runs with randomly drawn initial and boundary conditions. These are sampled as Gaussian processes \eqref{eq:GPgeneral} with $n_\text{GP} = 468$ and $\ell=0.4$.}
        \label{tab:l2_rel_error}
	\end{table}

	\section{Inverse problems}
	The methodology developed in the previous sections is especially suited for the efficient solution of large scale parameter identification problems on graphs, as this amounts to merely add data misfit terms to \eqref{eq:def_loss_learning}.

    As a toy application, we think of a traffic network with measurement sensors located at the midpoint of each edge.
    We assume that they are able to measure both the density and the flux of vehicles at their respective location and as a function of time using modern sensor hardware and corresponding traffic flow estimation algorithms, see \cite{Seo2017_trafficflowestimation} for more details. We denote these time discrete measurements by 
    $\rho_e^{\text{meas}}\in \R^{n_{\text{meas}}}$ and $j_e\in \R^{n_{\text{meas}}}$, and by $x_e^{\text{meas}}$ the location of the sensor on each edge. To add this information to our model, we now simply extend $\mathcal{L}_{\text{coupling}}$ by the following additional loss terms
      \begin{multline*}
    \quad \frac{1}{n_\text{meas}} \sum_{i=1}^{n_\text{meas}}
    \frac{1}{\abs{\E_v}}
\sum_{e \in \E_v}\left[ (\hat\rho^{u(\flow)}_e(x_{e,i}^{\text{meas}}) -  \rho_{e,i}^{\text{meas}})^2 \right.\\
\left. +		  (\hat{J}^{u(\flow)}_e(x_{e,i}^{\text{meas}}) - j_{e,i}^{\text{meas}})^2\right]. \quad   
    \end{multline*}
    Thus, the algorithm explained in Section~\ref{sec:moodel_evaluation} can be used to tackle the inverse problem without any major changes. 
    After completing the optimization procedure, we automatically solved several inverse problems: Evaluating the vector $u$, we recover the unknown initial condition and also the velocities $\nu_e$ on each edge. What is more, evaluating $\hat\rho_e^u(t,x)$ at any time in the simulation interval, we also obtain access to the densities on the complete graph without the need to perform another forward simulation of the model. 

    We test our methodology on the three test graphs depicted in \cref{fig:test_graphs} where we choose $n_{\text{meas}}=101$. 
    For illustration, the learned unknown initial conditions and velocities as well as the inferred solutions of a chain graph with seven edges are depicted in \cref{fig:unrolled_chain_inverse} for various levels of additive measurement noise $\epsilon = 0.1, 0.05, 0.01$. We observe that we are able to recover all the essential features of the initial conditions- but also of the dynamics at later times. In particular, due to the fitting of space-time data, the error remains roughly constant in time, at least in the eye ball norm. As for the prediction of the velocity, we observe that the accuracy of certain edges away from the ends of the chain have a substantially larger error for high noise levels than others, which we will investigate further in future works. Nevertheless, the example shows that our approach is feasible for the parameter identification problem and even suitable for possible real-time applications such as traffic flow prediction.

    A more systematic error analysis can we found in  \cref{tab:inverse_abs_errors} and \cref{tab:inverse_rel_errors}, where we report the parameter identification capability of our method using the large model (width 200, 20K training data) by using a space-time $L^2$ error measure.
    
	\begin{figure}
	    \centering
	    \includegraphics[width=.8\linewidth]{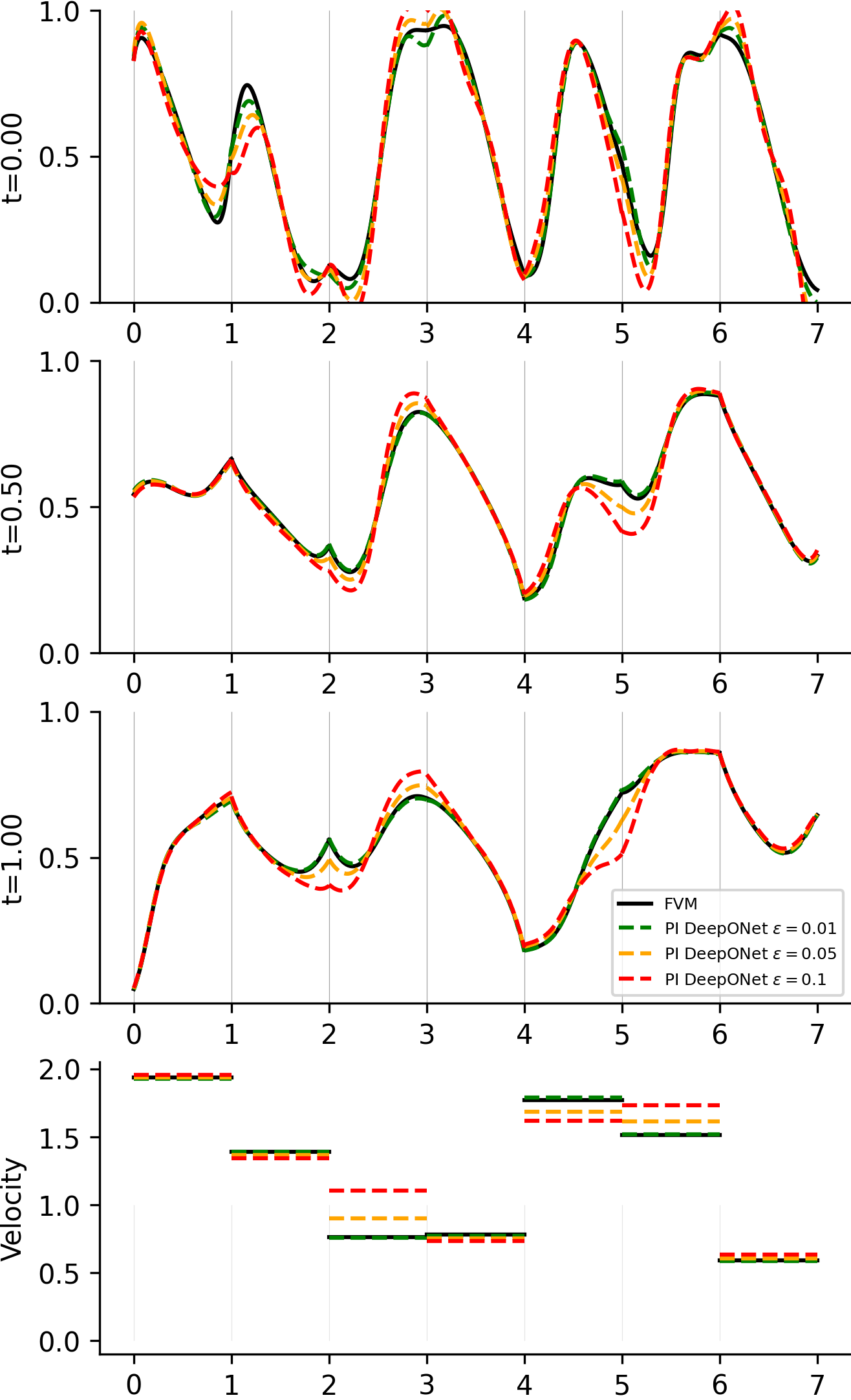}
	    \caption{Illustration of parameter identification on chain graph with 7~edges. Reference solution (solid) and physics-informed \deeponet\ solutions (dashed) for noise levels $\epsilon=0.1, 0.05, 0.01$. Bottom row depicts recovered edge velocities, first row corresponds to recovered initial conditions.}
	    \label{fig:unrolled_chain_inverse}
	\end{figure}
    
	\begin{table}
    \centering
		\begin{tabular}{ccccc}
			\toprule
			&& err. init & err. vel. & $\| \rho - \rho_\text{ref} \|_{L^2}$ \\
			\midrule
			\multirow{ 3}{*}{$G_1$}&$\epsilon_1$ & 9.13e-02 & 9.39e-02 & 4.42e-02 \\
			&$\epsilon_2$ & 5.79e-02 & 5.32e-02 & 2.76e-02 \\
			&$\epsilon_3$ & 4.02e-02 & 3.05e-02 & 1.83e-02 \\
			\midrule
			\multirow{ 3}{*}{$G_2$}&$\epsilon_1$ & 1.86e-01 & 9.91e-02 & 6.73e-02 \\
			&$\epsilon_2$ & 1.01e-01 & 5.41e-02 & 3.79e-02 \\
			&$\epsilon_3$ & 4.68e-02 & 1.98e-02 & 1.65e-02 \\
			\midrule
			\multirow{ 3}{*}{$G_3$}&$\epsilon_1$ & 1.08e-01 & 9.52e-02 & 4.27e-02 \\
			&$\epsilon_2$  & 6.76e-02 & 5.53e-02 & 2.54e-02 \\
			&$\epsilon_3$ & 5.03e-02 & 3.32e-02 & 1.70e-02\\
			\bottomrule
		\end{tabular}
		\caption{Absolute $L^2$-errors for parameter identification problem on test graphs depicted in \cref{fig:test_graphs} with measurement noise $\epsilon_1 = 0.1$, $\epsilon_2=0.05$, $\epsilon_3=0.01$ averaged over 100~runs.}
  \label{tab:inverse_abs_errors}
	\end{table}

	\section{Conclusion}
 We provide a novel physics-informed \deeponet\ architecture for creating a surrogate model that allows the efficient solution of a drift diffusion model on a possibly complex metric graph. Additionally, our model allows to solve an inverse problem for this setup at almost no additional costs. The flexibility of traing \deeponet\ submodels for inflow, outflow, and inner edges allows the construction of drift diffusion models (and in a similar fashion other PDEs) on complex graphs in a Lego-like way with linear complexity in the number of edges. This would allow straightforwardly the application to real traffic data, which is readily available in several open databases (e.g. \cite{Loder2019}). Adding the respective graph topology is no obstacle, while more complex traffic equations beyond the  drift-diffusion model would require the adjustment of the physics loss in our suggested approach, see \cite{piccoli2006traffic} for an overview of such models.
	\begin{table}
    \centering
		\begin{tabular}{ccccc}
			\toprule
			&& err. init & err. vel. & $\| \rho - \rho_\text{ref} \|_{L^2}$ \\
			\midrule
			\multirow{ 3}{*}{$G_1$}&$\epsilon_1$ & 1.85e-01 & 8.63e-02 & 9.49e-02\\
			&$\epsilon_2$ & 1.19e-01 & 4.67e-02 & 5.97e-02\\
			&$\epsilon_3$ & 8.03e-02 & 2.29e-02 & 3.80e-02\\
			\midrule
			\multirow{ 3}{*}{$G_2$}&$\epsilon_1$ & 3.93e-01 & 9.19e-02 & 1.50e-01 \\
			&$\epsilon_2$ & 2.12e-01 & 4.82e-02 & 8.36e-02\\
			&$\epsilon_3$ & 9.25e-02 & 1.68e-02 & 3.35e-02 \\
			\midrule
			\multirow{ 3}{*}{$G_3$}&$\epsilon_1$ & 2.19e-01 & 8.10e-02 & 8.85e-02 \\
			&$\epsilon_2$ & 1.37e-01 & 4.47e-02 & 5.30e-02 \\
			&$\epsilon_3$ & 1.04e-01 & 2.57e-02 & 3.62e-02 \\
			\bottomrule
		\end{tabular}
		\caption{Relative $L^2$-errors for parameter identification problem on test graphs depicted in \cref{fig:test_graphs}  with measurement noise $\epsilon_1 = 0.1$, $\epsilon_2=0.05$, $\epsilon_3=0.01$ averaged over 100~runs.}
  \label{tab:inverse_rel_errors}
	\end{table} 
	\section*{Software and Data}

        If the paper is accepted, we publish all software and data that is necessary to reproduce the results on GitHub.
	
	
	
	
	\section*{Impact Statement}
	
This paper presents work whose goal is to advance the field
of Machine Learning. There are many potential societal consequences of our work, none of  which we feel must be specifically highlighted here.

	\newpage
	\appendix
	\onecolumn
    \section{Proof of Theorem \ref{thm:existence}}\label{sec:proof_existence}
    The proof is based on extending ideas from \cite{Gomes2019, Burger2020}, where in- and outflow boundary conditions are treated, to metric graphs.

    We will work with Sobolev spaces defined on the metric graph $\mathcal{G}=(\V, \E, (l_e)_{e\in\E})$. We first introduce the space of square integrable functions
    \begin{align*}
        L^2(\E) := \{ v: \E \to \R \; : \forall e \in \E\;v_e = v|_e\in L^2(e) = L^2(0,l_e) \}.
    \end{align*}
    It is a Hilbert space with scalar product
    \begin{align*}
        \langle v, w\rangle_{\E}=\sum_{e \in \E}\left\langle v_e, w_e\right\rangle_e=\sum_{e \in \E} \int_0^{\ell_e} v_e w_e \;dx,
    \end{align*}
    which induces the norm $\|u\|_{L^2(\E)}= \sqrt{\langle u,u\rangle}$. Furthermore, the space $H^1(\E)$ of functions having a square integrable weak derivative is then defined by
    $$
    H^1(\mathcal{E})=\left\{w \in L^2(\mathcal{E}): \partial_x w_e \in L^2(e) \text { and } w_e(v)=w_{e^{\prime}}(v) \quad \forall e, e^{\prime} \in \mathcal{E}(v), v \in \mathcal{V}_\K\right\}.
    $$
    We further denote by $(H^1(\E))'$ the dual space, i.e. the space containing all linear, bounded functionals on $H^1(\E)$. 
    
    Space-time dependent functions are considered as time-dependent functions with values in a function space, say $v(t) \in X$. For such functions, we introduce the norm $\|v\|_{L^2(0,T;X)} = \int_0^T \|v(t)\|_X \;dt$ and use the notation $L^2(0,T;X)$ for all functions such that this norm is finite. In complete analogy, we also define $H^1(0,T;X)$ and the energy space
    \begin{align}\label{eq:energy_space}
        W(0,T) = L^2(0,T;H^1(\E)) \cap H^1(0,T;(H^1(\E))').
    \end{align}
    The actual proof of Theorem~\ref{thm:existence} is based on the use of the formal gradient flow structure of the problem, i.e. the fact that the entropy functional 
    \begin{align}
        E(\rho) = \sum_{e\in \E} \int_0^{l_e} \eps(\rho_e\log(\rho_e) + (1-\rho_e)\log(1-\rho_e)) + \rho\,\nu_e\,x \;dx,
    \end{align}
    is a Lyapunov functional. 
    For readability, we used $f(\rho_e) = \rho_e(1-\rho_e)$ in this definition, noting that the proof works completely analogous for other choices of $f$, upon modifying the entropy functional.
    
    Based on this, we introduce the entropy variable $w$ defined as the variational derivative of the entropy, i.e.
\begin{equation}
\begin{aligned}
    \label{eq:defn-entr-var-w}
    w(\rho)&\coloneqq \frac{\delta E}{\delta \rho}= \sum_{e\in\E}\eps(\log(\rho_e)-\log(1-\rho_e)) + \nu_e x.
\end{aligned}
\end{equation}
such that the transformation from $w$ to $\rho$ is given by
\begin{equation*}
    \rho_e=\frac{e^{\frac{w_e-\nu_e\,x}{\eps}}}{1+e^{\frac{w_e-\nu_e\,x}{\eps}}},\quad \forall e \in \E,
    \label{eq:transform_w_to_u}
\end{equation*}
where $w_e = w|_e$. 

The proof is based on a time-discretization and regularization strategy using these variables. To this end, let $N\in\mathbb{N}$ be such that $(0,T]$ has sub-intervals of the form 
\begin{equation*}
    (0,T]=\bigcup_{k=1}^N \big((k-1)\tau, k\tau\big],
\end{equation*}
where $\tau=\frac{T}{N}$ and $t_k = \tau k$. 
Using $\eps\,\partial_x \rho_e + \rho_e\,(1-\rho_e)\,\nu_e = \rho_e\,(1-\rho_e)\,\partial_x w_e$ for all $e\in\E$, we introduce the time-discretized and regularized form of \eqref{eq:weak} with $w$ as unknown as
\begin{multline}\label{eq:weak_ent}
\sum_{e \in \E} \int_e  \left(\frac{\rho_e^{k}-\rho_e^{k-1}}{\tau}\,\varphi_e +
\rho_e^{k}\,(1-\rho_e^{k})\,\partial_x w_e^{k}\,\partial_x \varphi_e \right)dx 
+ \tau \sum_{e \in \E} \int_e \left(\partial_x w_e^{k}\,\partial_x \varphi_e + w_e^{k}\,\varphi_e\right)dx \\
+\sum_{v \in \V_D} \left(-\uinflow_{v}(t_{k})\, (1-\rho^{k}(v)) + \uoutflow_v(t_{k}) \,\rho^{k+1}(v)\right)\,\varphi(v) = 0,
\end{multline}
for all test functions $\varphi \in H^1(\Gamma)$ and given nonnegative functions $\uinflow_{v}, \uoutflow_v \in L^\infty(0,T)$, $v\in \V_D$. In addition, we added a regularization term, multiplied by $\tau$.

Our first aim is to show existence of iterates $\rho^k$ satisfying the (still non-linear) equation \eqref{eq:weak_ent}. This is done using a linearisation strategy and a fixed-point argument. We begin by defining the set 
$$
    \A:= \left\{ \rho \in L^\infty(\E) :  \, 0 \le \rho_e  \le 1, \, e \in \E \right\}.
$$
\begin{lemma}\label{lem:s1}
Given $\tau > 0$ and  $\tilde \rho \in \A$, the linear problem 
\begin{multline}\label{eq:discrete_linearized}
\sum_{e \in \E} \int_e  \left(\frac{\tilde \rho_e-\rho_e^{k-1}}{\tau}\,\varphi_e +
\tilde\rho_e\,(1-\tilde\rho_e)\,\partial_x w_e\,\partial_x \varphi_e \right)dx 
+ \tau \sum_{e \in \E} \int_e \left(\partial_x w_e\,\partial_x \varphi_e + w_e\,\varphi_e\right)dx \\
+\sum_{v \in \V_D} (-\uinflow_{v}(t_{k+1})\,(1-\tilde\rho(v)) + \uoutflow_v(t_{k+1})\,\tilde\rho(v))\,\varphi(v) = 0,
\end{multline}
for all $\varphi \in H^1(\E)$, has a unique solution $w \in H^1(\E)$ such that
\begin{align}
    \label{eq:discrete_linearized_a_priori}
    \|w\|_{H^1(\E)} \le C,
\end{align}
where the constant $C>0$ depends only on $\tau$ and $\rho^{k-1}$.

In addition, the operator $S_1 : \A \to L^2(\E)$, which assigns some given $\tilde \rho \in \A$ to $w$, being the solution to Eq.~\eqref{eq:discrete_linearized}, is continuous and compact. 
\end{lemma}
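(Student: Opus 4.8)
The plan is to read \eqref{eq:discrete_linearized} as a linear variational problem $a(w,\varphi)=F(\varphi)$ for all $\varphi\in H^1(\E)$ and apply the Lax--Milgram lemma (see \cite{Brezis2010}). Collecting the $w$-dependent terms into the bilinear form
\[
a(w,\varphi) = \sum_{e\in\E}\int_e \tilde\rho_e(1-\tilde\rho_e)\,\partial_x w_e\,\partial_x\varphi_e\,dx + \tau\sum_{e\in\E}\int_e\bigl(\partial_x w_e\,\partial_x\varphi_e + w_e\,\varphi_e\bigr)\,dx,
\]
and moving everything else to the functional $F(\varphi) = -\sum_{e\in\E}\int_e \tau^{-1}(\tilde\rho_e-\rho_e^{k-1})\varphi_e\,dx - \sum_{v\in\V_D}\bigl(-\uinflow_v(t_{k+1})(1-\tilde\rho(v))+\uoutflow_v(t_{k+1})\tilde\rho(v)\bigr)\varphi(v)$, I would first verify coercivity. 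Since $\tilde\rho\in\A$ gives $0\le\tilde\rho_e(1-\tilde\rho_e)\le \tfrac14$, the drift term is only nonnegative and may \emph{degenerate} where $\tilde\rho_e\in\{0,1\}$; this is precisely where the $\tau$-regularization is essential, as it yields $a(w,w)\ge\tau\norm{w}_{H^1(\E)}^2$, i.e.\ coercivity with constant $\tau$ independent of $\tilde\rho$. Boundedness of $a$ follows from the uniform bound $\tilde\rho_e(1-\tilde\rho_e)\le\tfrac14$ together with the regularization.

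Next I would check that $F$ is a bounded linear functional. The volume term is controlled by $\tau^{-1}\norm{\tilde\rho-\rho^{k-1}}_{L^2(\E)}\norm{\varphi}_{L^2(\E)}$, which is finite because $\tilde\rho,\rho^{k-1}$ are bounded in $L^\infty(\E)\subset L^2(\E)$ on the finite graph. For the vertex terms one invokes the one-dimensional Sobolev embedding $H^1(e)\hookrightarrow C(\bar e)$, so that the point evaluations $\varphi(v)$ are bounded by $\norm{\varphi}_{H^1(\E)}$; combined with $\uinflow_v,\uoutflow_v\in L^\infty(0,T)$ and $0\le\tilde\rho(v)\le1$ this bounds the boundary contribution. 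Lax--Milgram then gives existence and uniqueness of $w\in H^1(\E)$. Testing with $\varphi=w$ and using coercivity, $\tau\norm{w}_{H^1(\E)}^2\le a(w,w)=F(w)\le\norm{F}_{(H^1(\E))'}\norm{w}_{H^1(\E)}$, yields \eqref{eq:discrete_linearized_a_priori} with $C=\tau^{-1}\norm{F}_{(H^1(\E))'}$; since $\norm{\tilde\rho}_{L^2(\E)}$ and $\abs{\tilde\rho(v)}$ are bounded uniformly over $\A$, the constant depends only on $\tau$, $\rho^{k-1}$ and the fixed data.

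For compactness, note that the a priori bound is uniform over $\tilde\rho\in\A$, so $S_1(\A)$ is bounded in $H^1(\E)$; the compact embedding $H^1(\E)\hookrightarrow\hookrightarrow L^2(\E)$ (Rellich--Kondrachov edgewise on the finite graph) then makes $S_1(\A)$ precompact in $L^2(\E)$, which is exactly compactness of $S_1$ into $L^2(\E)$. For continuity I would take $\tilde\rho_n\to\tilde\rho$ in $L^2(\E)$ with $\tilde\rho_n\in\A$, set $w_n=S_1(\tilde\rho_n)$, and use the uniform $H^1$-bound to extract $w_n\rightharpoonup w^\ast$ in $H^1(\E)$ and $w_n\to w^\ast$ in $L^2(\E)$ along a subsequence. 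Passing to an a.e.-convergent subsequence of $\tilde\rho_n$ and using dominated convergence (all terms bounded by $1$), the coefficient $\tilde\rho_{n}(1-\tilde\rho_{n})\,\partial_x\varphi$ converges \emph{strongly} in $L^2(\E)$, while $\partial_x w_n\rightharpoonup\partial_x w^\ast$ weakly; the weak--strong pairing then lets me pass to the limit in the degenerate diffusion term, and the remaining linear terms pass easily. Uniqueness identifies $w^\ast=S_1(\tilde\rho)$, so the whole sequence converges. The main obstacle is exactly this limit passage in the degenerate, $\tilde\rho$-dependent diffusion term: the weak--strong product argument must be combined with the edgewise a.e.\ convergence of $\tilde\rho_n$, and the boundary point-values $\tilde\rho_n(v)$ require the trace estimate from the $1$D embedding to be continuous under the chosen topology on $\A$; all other steps are routine.
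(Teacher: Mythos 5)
Your treatment of existence, uniqueness, the a priori bound, and compactness coincides with the paper's: Lax--Milgram with coercivity supplied by the $\tau$-regularization (the degenerate coefficient $\tilde\rho(1-\tilde\rho)\ge 0$ only adds nonnegativity), the energy estimate by testing with $\varphi=w$ together with a trace/Young argument for the vertex terms, and compactness of $S_1$ from the compact embedding $H^1(\E)\hookrightarrow L^2(\E)$ applied to the uniformly bounded image. Where you genuinely diverge is the continuity of $S_1$. The paper subtracts the equations for $w_n$ and $w$ and tests with $\varphi=w-w_n$, which produces a direct quantitative estimate and strong convergence $w_n\to w$ in $H^1(\E)$; you instead run a soft compactness argument (uniform $H^1$ bound, weak subsequential limit, a.e.\ convergence of $\tilde\rho_n$ plus dominated convergence to pass to the limit in the degenerate diffusion term via weak--strong pairing, then identification of the limit by uniqueness and convergence of the full sequence). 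Both routes are valid for the statement as claimed, since $S_1$ only needs to be continuous into $L^2(\E)$; the paper's subtraction argument buys a stronger mode of convergence and avoids subsequence extraction, while yours is more robust to the nonlinear coefficient and makes the limit passage in the $\tilde\rho$-dependent term fully explicit. Your closing caveat about the vertex values $\tilde\rho_n(v)$ is well taken: for general elements of $\A\subset L^\infty(\E)$ point evaluation is not defined and $L^2$ convergence does not control it, an issue the paper's proof glosses over as well and which is resolved only because in the fixed-point iteration the inputs to $S_1$ actually lie in $H^1(\E)$ (hence are continuous on each edge).
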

\begin{proof}
As $\tilde \rho\,(1-\tilde \rho) \ge 0$ and $\tau > 0$, the existence  of a unique solution $w\in H^1(\E)$ is a direct consequence of the Lax-Milgram Lemma, cf. \cite{Brezis2010}, estimating the boundary terms as in \cite{Burger2016}[Theorem 3.5]. The a priori bound follows by choosing $\varphi = w$ as a test function and applying a trace theorem and the weighted Young inequality to the right-hand side of Eq.~\eqref{eq:discrete_linearized}.

For the continuity of $S_1$, consider a sequence $(\tilde u_n, \tilde m_n) \in \A$ such that $(\tilde u_n, \tilde m_m) \to (\tilde u, \tilde m) \in \A$ and denote by $w_n$ and $w$ the respective solutions to Eq.~\eqref{eq:discrete_linearized}. Subtracting the respective equations and choosing $\varphi = (w-w_n)$ yields 
the convergence $w_n \to w$ in $H^1(\Omega)$. This allows us to pass to the limit in the weak formulation \eqref{eq:discrete_linearized}.

Finally, the compactness of $S_1$ then follows from the compactness of the embedding $H^1(\E) \hookrightarrow L^2(\E)$.
\end{proof}

We are now in a position to the existence of iterates, i.e. solutions to \eqref{eq:weak_ent}.
\begin{theorem}
	\label{thm:fixed-point}
For given $\rho^{k-1} \in \A$ with and $\tau > 0$, there exist a weak solution $\rho^k \in \A \cap H^1(E)$ to Eq.~\eqref{eq:weak_ent}. In addition, for $\tau > 0$, it holds that $0 < \rho^k < 1$.
\end{theorem}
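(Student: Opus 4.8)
The plan is to recast the nonlinear, time-discrete equation \eqref{eq:weak_ent} as a fixed-point problem for the density and solve it with Schauder's theorem, using Lemma~\ref{lem:s1} as the engine. Recall that Lemma~\ref{lem:s1} supplies a continuous and compact operator $S_1 : \A \to L^2(\E)$ which sends a frozen coefficient $\tilde\rho \in \A$ to the entropy variable $w = S_1(\tilde\rho) \in H^1(\E)$ solving the linearized problem \eqref{eq:discrete_linearized}. I would complement it with the pointwise back-transformation $S_2 : L^2(\E) \to \A$ defined by $S_2(w)_e = \frac{e^{(w_e-\nu_e x)/\eps}}{1+e^{(w_e-\nu_e x)/\eps}}$, i.e.\ the inverse of the change of variables \eqref{eq:defn-entr-var-w}. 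Since the logistic function takes values strictly in $(0,1)$, $S_2$ indeed maps into $\A$, and because it is globally Lipschitz (its derivative is bounded by $1/(4\eps)$) it is continuous from $L^2(\E)$ to $L^2(\E)$. The composition $S := S_2 \circ S_1 : \A \to \A$ then has the feature that any fixed point $\rho^k = S(\rho^k)$, with $w := S_1(\rho^k)$ and $\rho^k = S_2(w)$, turns \eqref{eq:discrete_linearized} evaluated at $\tilde\rho = \rho^k$ into exactly \eqref{eq:weak_ent}, since all the frozen coefficients, the time-difference term and the boundary contributions are then evaluated at the same iterate. Hence it suffices to produce a fixed point of $S$.

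To invoke Schauder's theorem I would verify its hypotheses on $\A$, viewed as a subset of $L^2(\E)$: it is nonempty, convex (pointwise bounds survive convex combinations), bounded (as $|\rho|\le 1$ on a graph of finite total length), and closed in $L^2(\E)$ (an $L^2$-limit of $[0,1]$-valued functions is again $[0,1]$-valued after passing to an a.e.-convergent subsequence). The map $S$ is continuous as the composition of the continuous $S_1$ with the Lipschitz $S_2$. Moreover $S(\A)$ is relatively compact: $S_1$ is compact by Lemma~\ref{lem:s1}, so $\overline{S_1(\A)}$ is compact in $L^2(\E)$, and its continuous image under $S_2$ is compact as well and contained in $\A$. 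The Schauder fixed-point theorem then yields some $\rho^k \in \A$ with $\rho^k = S(\rho^k)$, which by the first paragraph is a weak solution of \eqref{eq:weak_ent}.

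It remains to upgrade the qualitative properties of this fixed point. The associated entropy variable $w = S_1(\rho^k)$ lies in $H^1(\E)$, and since each edge is a bounded one-dimensional interval, the embedding $H^1(e) \hookrightarrow C(\bar e)$ shows that $w$ is bounded, say $\|w\|_{L^\infty(\E)} \le M < \infty$. Consequently the argument $(w_e-\nu_e x)/\eps$ of the logistic function stays in a bounded interval, so $\rho^k_e = S_2(w)_e$ is bounded away from both $0$ and $1$, giving the strict bounds $0 < \rho^k < 1$. Finally $\rho^k \in H^1(\E)$ follows from the chain rule for Sobolev functions: the logistic function is $C^1$ with bounded derivative and $x \mapsto (w_e-\nu_e x)/\eps$ belongs to $H^1(e)$, so the composition is again in $H^1(e)$, while continuity across the interior vertices $v \in \V_\K$ is inherited from that of $w$.

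The main obstacle is concentrated in Lemma~\ref{lem:s1} rather than in the fixed-point step: one must guarantee that $S_1$ is genuinely continuous and compact, which hinges on the uniform a priori bound \eqref{eq:discrete_linearized_a_priori} together with the compact embedding $H^1(\E) \hookrightarrow L^2(\E)$. Granting that lemma, the residual care is largely bookkeeping — checking that a fixed point of $S$ truly reproduces \eqref{eq:weak_ent} (in particular that the $\tau$-regularization term and the inflow/outflow boundary terms are evaluated at the correct iterate $\rho^k$) and that the logistic back-transformation delivers both the pointwise bounds $0<\rho^k<1$ and the claimed $H^1$-regularity.
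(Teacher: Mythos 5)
Your proposal follows essentially the same route as the paper: the same back-transformation operator $S_2$ given by the logistic map, the same composition $S = S_2 \circ S_1$ resolved by Schauder's fixed-point theorem on the convex set $\A$, the same one-dimensional embedding $H^1(\E) \hookrightarrow L^\infty(\E)$ to deduce the strict bounds $0 < \rho^k < 1$, and the same chain-rule argument for the $H^1$-regularity of $\rho^k$. The only difference is that you spell out the verification of Schauder's hypotheses and the Lipschitz continuity of $S_2$ in more detail than the paper does, which is a welcome addition rather than a deviation.
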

\begin{proof}
    We define the additional operator $S_2: L^2(\E) \to \A$ by means of 
    $$
    [S_2 w]_e = 
    \frac{e^{\frac{w_e-\nu_e\,x}{\eps}}}{1+e^{\frac{w_e-\nu_e\,x}{\eps}}}.
    $$
    Using that $S_2$ is clearly continuous as well as the results of Lemma~\ref{lem:s1}, the operator 
    $$
    S = S_2 \circ S_1 : \A \to \A
    $$
    is well-defined, continuous and compact. 
    Furthermore, it is readily observed that $\A$ is a convex subset of $L^\infty(\Omega)$. 
    Thus, an application of Schauder's fixed point theorem yields the existence of a fixed point $\rho^k \in \A$ associated to $w^k = S_1(\rho^k)$. By definition of $S_2$, the $H^1$-regularity of $w^k$ directly implies $\rho^k \in H^1(\E)$ as well, which allows us to identify the fixed point as a weak solution of Eqs.~\eqref{eq:weak_ent}.

    Finally, we note that as the edges are one-dimensional domains, we also have the embedding $H^1(\E) \hookrightarrow L^\infty(\E)$. Appealing again to the definition of $S_2$, this implies the strict bounds $0<\rho^k<1$.
\end{proof}
Next, we use the entropy functional to show that the iterates $\rho^k$ are bounded, uniformly in $\tau$. This will then allow to extract converging subsequences whose limit will be the desired solution of the original problem \eqref{eq:weak}.

\begin{proposition}
    \label{prop:discrete-entropy}
    Let $(\rho^k)_{k=0}^\infty \subset \A \cap H^1(\E)$ be solutions to \eqref{eq:weak_ent}.  Then, for any $k \in \mathbb N$, the following discrete entropy estimate holds
    \begin{align}
        \label{eq:entropy-dissipation-discrete}
        &\frac1\tau\left(E(\rho^k) - E(\rho^{k-1}) \right)
        + \tau \sum_{e\in\E}\int_0^{l_e} |\partial_x w^k_e|^2 + |w^k_e|^2 \,dx   + \sum_{e\in\E}\int_\Omega \rho^k(1-\rho^k) |\partial_x w^k_e|^2 \le 0.
    \end{align} 
    Moreover,
    \begin{align}
        \label{eq:grad-u-by-grad-m}
        \|\partial_x\rho^k\|_{L^2(\E)}^2 &\leq C +   \frac1\tau \prt*{E(\rho^{k-1}) - E(\rho^k)}, \\ \label{eq:sqrt_tau_w_est}
        \tau^{1/2} \|w_\tau\|_{L^2(0,T; H^1(\E))}& \leq C,
    \end{align}
    where $w_\tau$ denotes the piecewise constant in time interpolation of $w^k$ and with $C>0$ independent of $\tau$ and $k$.
\end{proposition}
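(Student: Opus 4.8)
The plan is to establish the discrete entropy estimate \eqref{eq:entropy-dissipation-discrete} first, since the two a priori bounds \eqref{eq:grad-u-by-grad-m} and \eqref{eq:sqrt_tau_w_est} follow from it almost immediately. The natural approach is to test the time-discrete weak formulation \eqref{eq:weak_ent} with the entropy variable $\varphi = w^k \in H^1(\E)$, which is admissible by the $H^1$-regularity established in Theorem~\ref{thm:fixed-point}. The first term $\sum_e \int_e \frac{\rho_e^k - \rho_e^{k-1}}{\tau} w_e^k \, dx$ is the crucial one: since $w^k = \frac{\delta E}{\delta \rho}$ is the variational derivative of the convex entropy $E$, the convexity of the integrand $s \mapsto \eps(s\log s + (1-s)\log(1-s))$ together with the linear drift contribution $\rho\,\nu_e\,x$ yields the discrete chain-rule inequality
\begin{equation*}
\sum_{e\in\E}\int_e (\rho_e^k - \rho_e^{k-1})\,w_e^k \, dx \ge E(\rho^k) - E(\rho^{k-1}),
\end{equation*}
which is the standard convexity estimate $E(\rho^k) - E(\rho^{k-1}) \le \langle E'(\rho^k), \rho^k - \rho^{k-1}\rangle$. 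Dividing by $\tau$ produces the first term of \eqref{eq:entropy-dissipation-discrete}. The regularization term contributes exactly $\tau\sum_e\int_e(|\partial_x w_e^k|^2 + |w_e^k|^2)\,dx$, and the diffusion term gives $\sum_e\int_e \rho_e^k(1-\rho_e^k)|\partial_x w_e^k|^2\,dx$, both nonnegative.

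\textbf{The main obstacle} will be controlling the boundary terms $\sum_{v\in\V_D}(-\uinflow_v(t_k)(1-\rho^k(v)) + \uoutflow_v(t_k)\rho^k(v))\,w^k(v)$. These do not have a definite sign, so one cannot simply discard them. The strategy here is to exploit the explicit logarithmic form of $w^k(v) = \eps(\log\rho^k(v) - \log(1-\rho^k(v))) + \nu_e v$ at the boundary vertices. Since the inflow term is weighted by $(1-\rho^k(v))$ and $w^k(v)$ behaves like $\eps\log\tfrac{\rho}{1-\rho}$, the products $(1-\rho)\log(1-\rho)$ and $\rho\log\rho$ are bounded (the map $s\mapsto s\log s$ is bounded on $[0,1]$), so these contributions can be estimated by a constant depending only on $\|\uinflow_v\|_{L^\infty}$ and $\|\uoutflow_v\|_{L^\infty}$. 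Absorbing any remaining $w^k(v)$-dependence using a trace inequality $|w^k(v)|\le C\|w^k\|_{H^1(\E)}$ followed by a weighted Young inequality lets one hide a small multiple of $\|w^k\|_{H^1(\E)}^2$ in the regularization term on the left. This is exactly the boundary-estimate technique cited from \cite{Burger2016}[Theorem 3.5] in the proof of Lemma~\ref{lem:s1}.

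\textbf{For the derived bounds:} estimate \eqref{eq:sqrt_tau_w_est} is obtained by summing \eqref{eq:entropy-dissipation-discrete} over $k=1,\dots,N$, telescoping the entropy differences so that only $E(\rho^N)$ and $E(\rho^0)$ survive (both bounded, since $E$ is bounded on $\A$), and retaining the $\tau\sum_k\|w^k\|_{H^1(\E)}^2$ term; rewriting $\tau\sum_k$ as the time-integral of the piecewise-constant interpolant $w_\tau$ gives $\tau^{1/2}\|w_\tau\|_{L^2(0,T;H^1(\E))}\le C$. For \eqref{eq:grad-u-by-grad-m}, the plan is to relate $\partial_x\rho_e^k$ to $\partial_x w_e^k$ via the transformation: from $\rho_e = \bigl(1+e^{(w_e-\nu_e x)/\eps}\bigr)^{-1}e^{(w_e-\nu_e x)/\eps}$ one computes $\eps\,\partial_x\rho_e^k = \rho_e^k(1-\rho_e^k)(\partial_x w_e^k - \nu_e)$. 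Squaring and integrating, using that $\rho_e^k(1-\rho_e^k)\le \tfrac14$ and that the factor $\rho_e^k(1-\rho_e^k)$ appears linearly in the dissipation term of \eqref{eq:entropy-dissipation-discrete}, bounds $\|\partial_x\rho^k\|_{L^2(\E)}^2$ by the entropy-dissipation term (which equals $C + \tfrac1\tau(E(\rho^{k-1})-E(\rho^k))$ up to the nonnegative terms dropped), plus a constant from the $\nu_e$ contribution. I expect the convexity estimate and the boundary-term control to carry essentially all the difficulty; the remaining manipulations are routine once the test function $\varphi=w^k$ is fixed.
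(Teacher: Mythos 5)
Your proposal follows essentially the same route as the paper: testing \eqref{eq:weak_ent} with $\varphi = w^k$, using convexity of $E$ for the discrete chain-rule inequality, exploiting the explicit logarithmic form of $w^k$ at boundary vertices (the paper phrases this as a non-positive relative entropy plus a bounded remainder), relating $\partial_x\rho^k$ to $\rho^k(1-\rho^k)\,\partial_x w^k$ for \eqref{eq:grad-u-by-grad-m}, and telescoping the entropy differences for \eqref{eq:sqrt_tau_w_est}. One small caveat: your fallback of absorbing residual boundary contributions into the $\tau$-weighted regularization term via a trace and Young inequality would not produce a constant uniform in $\tau$, but it is never actually needed, since the weights $(1-\rho^k(v))$ and $\rho^k(v)$ in the flux boundary condition give the unbounded logarithmic cross-terms the favorable sign, so the boundary terms are bounded above by a constant outright.
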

\begin{proof}

Owing to the strict upper and lower bounds provided by Theorem~\ref{thm:fixed-point}, the logarithmic terms appearing in the derivative of the entropy are well defined. Thus, we can use the joint convexity of the energy to obtain
    \begin{align*}
        E(\rho^k) - E(\rho^{k-1}) &\leq \sum_{e\in\E}\int_0^{l_e} \brk*{\eps(\log \rho_e^k - \log(1 -\rho_e^k)) + \nu_e x}(\rho^{k}_e - \rho_e^{k-1}) \; dx 
    \end{align*}
    Using the definition of the entropy variable, Eq.~\eqref{eq:defn-entr-var-w}, this results in
    \begin{align*}
        E(\rho^k) - E(\rho^{k-1}) \leq \sum_{e\in\E}\int_0^{l_e} w^k_e (\rho^k_e - \rho^{k-1}_e )\;  dx.
    \end{align*}
	Due to the $H^1(\E)$-regularity of $w^k$, we may use it as a test functions in Eq.~\eqref{eq:weak_ent} to get
    \begin{align}\label{eq:entropy_entropy_diss_proof}
        \frac1\tau\prt*{E(\rho^k) - E(\rho^{k-1}) }
        &\leq  - \tau \sum_{e\in\E}\int_0^{l_e} \prt*{|\partial_x w_e^k|^2 + |w_e^k|^2} dx\nonumber\\
        &\quad - \sum_{e\in\E}\int_0^{l_e} \rho_e^k(1-\rho_e^k) |\partial_x w_e^k |^2\; dx \nonumber\\
        & \quad -\sum_{v \in \V_D} (-\uinflow_{v}(t_{k+1})\,(1-\rho^k(v)) + \uoutflow_v(t_{k+1})\,\rho^k(v))\,w^k(v),
    \end{align}
    Using the definition of $w^k$, cf. Eq \eqref{eq:defn-entr-var-w}, and the fact that inflow vertices are always located at $x=0$ on each edge, we rewrite the inflow terms as follows
    \begin{align*}
        \begin{aligned}
&-\sum_{v \in \V_D} (-\uinflow_{v}(t_{k+1})\,(1-\rho^k(v))w^k(v)  = -\sum_{v \in \V_D} (-\uinflow_{v}(t_{k+1})\,(1-\rho^k(v))\,\eps\,(\log \rho^k(v)-\log (1-\rho^k(v))) \\
&\quad  =  \sum_{v \in \V_D} \uinflow_{v}(t_{k+1})\left[-(1-\rho^k(v)) \log \frac{1-\rho^k(v)}{\rho^k(v)}+2 \rho^k(v)-1\right] + \sum_{v \in \V_D} \uinflow_{v}(t_{k+1})\left[ - 2\rho^k(v)+1\right]
\le C.
\end{aligned}
    \end{align*}
    We recognize the first term as a relative entropy which is non-positive, while the second term is bounded since $0\le \rho^k \le 1$. 
    A similar argument implies that the outflow terms are bounded as well.
    Finally, using once more the definition of $w_k$, cf. Eq \eqref{eq:defn-entr-var-w}, we further estimate the second term in \eqref{eq:entropy_entropy_diss_proof}, using the weighted Young inequality,    
    \begin{align*}
\sum_{e\in\E}\int_0^{l_e}\int_\Omega  \rho_e^k\,(1-\rho_e^k)\,|\partial_x w_k |^2\; dx 
& \ge \sum_{e\in\E}\left(\int_0^{l_e} \frac{|\partial_x \rho_e^k|^2}{\rho_e^k(1-\rho_e^k)} \;d x
- 2 \int_0^{l_e} |\nu_e|\, |\partial_x \rho_e^k|\;dx + \int_0^{l_e} \rho_e^k\,(1-\rho_e^k)\,\nu_e^2\; d x\right) \\
& \geq \sum_{e\in\E}\int_0^{l_e} \frac{|\partial_x \rho_e^k|^2}{2\,\rho_e^k\,(1-\rho_e^k)}\;dx-\int_0^{l_e}\rho_e^k\,(1-\rho_e^k)\,|\nu_e|^2\;d x \\
& \geq 2 \sum_{e\in\E}\int_0^{l_e}|\partial_x \rho_e^k|^2\; d x-\frac{1}{4}\,\sum_{e\in\E} l_e\,\nu_e.
    \end{align*}
    Inserting this estimate into the entropy inequality \eqref{eq:entropy_entropy_diss_proof} above yields 
    \begin{align*}
		\|\partial_x \rho^k\|_{L^2(\E)}^2 \leq C +   \frac1\tau \prt*{E(\rho^{k-1}) - E(\rho^k)}.
    \end{align*}
    From Eq.~\eqref{eq:entropy_entropy_diss_proof} we get
    \begin{align*}
        \tau \prt*{\|\partial_x w^k\|_{L^2(\E)}^2 + \|w^k\|_{L(\E)}^2}  \leq C +   \frac1\tau \prt*{E(\rho^{k-1}) - E(\rho^k)}.
    \end{align*}
    For the piecewise constant in time interpolation $w_\tau$ of $w^k$ this yields, summing from $k=1, \ldots, N_T$, the estimate
    \begin{align*}
        \|\partial_x w_\tau\|_{L^2(0,T;L^2(\E))}^2 + \|w_\tau\|_{L^2(0,T;L^2(\E))}^2 \leq C\,N_T + \sum_{k=1}^{N_T} \frac1\tau\prt*{E(\rho^{k-1}) - E(\rho^k)}.
    \end{align*}
    Since the sum on the right-hand side is telescopic, this simplifies further to
    \begin{align*}
       \tau\left(\|\partial_x w_\tau\|_{L^2(0,T;L^2(\E))}^2 + \|w_\tau\|_{L^2(0,T;L^2(\E))}^2\right) \leq C\,T + (E(\rho^0) - E(\rho^{N_T})),
    \end{align*}
    using $T=N_T\,\tau$.
    Since $0\leq \rho_e^k \leq 1$ for all $e \in \E$ and $\max_{e\in \E}l_e < \infty$, we obtain the following uniform estimate
    \begin{align*}
        \tau^{1/2} \|w_\tau\|_{L^2(0,T; H^1(\E))} \leq C,
    \end{align*}
    independent of $\tau>0$.
\end{proof}
\begin{lemma}[Time regularity for $\rho_\tau$]
	\label{lem:time-reg}
    Let $(\rho^k)_{k=0}^\infty \subset \A \cap H^1(\E)$ be the solution to the implicit Euler approximation (Eq.~\eqref{eq:weak_ent}) and let $\rho_\tau$ be the piecewise constant interpolation associated with $(\rho^k)_{k=0}^\infty$. 
    Then, there holds
    $$
    	\norm{d_\tau \rho_\tau}_{L^2(0,T; (H^1(\E))')} \leq C,
    $$    
    where $C>0$ is independent of $\tau>0$ and $d_\tau$ denotes the finite difference quotient
    $$
        [d_\tau \rho_\tau]|_{(t_{k-1},t_k]} = \frac{\rho^k - \rho^{k-1}}{\tau},\quad k=1,\ldots,N_T.
    $$
\end{lemma}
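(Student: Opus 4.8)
The plan is to test the implicit-Euler equation \eqref{eq:weak_ent} against an arbitrary $\varphi \in H^1(\E)$ and read off a bound for the difference quotient directly in the dual norm. Recalling the characterization
$$
\norm{[d_\tau\rho_\tau]\big|_{(t_{k-1},t_k]}}_{(H^1(\E))'} = \sup_{\norm{\varphi}_{H^1(\E)}\le 1}\; \frac1\tau\sum_{e\in\E}\int_e (\rho_e^k-\rho_e^{k-1})\,\varphi_e\,dx,
$$
I would substitute the remaining terms of \eqref{eq:weak_ent} for the difference quotient and estimate the three resulting contributions --- the drift-diffusion term, the $\tau$-regularization term and the boundary terms over $\V_D$ --- separately, each by a constant multiple of $\norm{\varphi}_{H^1(\E)}$.

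For the drift-diffusion term I would split the weight $\rho_e^k(1-\rho_e^k)$ symmetrically and apply the weighted Cauchy--Schwarz inequality, giving
$$
\abs{\sum_{e\in\E}\int_e \rho_e^k(1-\rho_e^k)\,\partial_x w_e^k\,\partial_x\varphi_e\,dx} \le D_k^{1/2}\,\Big(\sum_{e\in\E}\int_e \rho_e^k(1-\rho_e^k)\,|\partial_x\varphi_e|^2\,dx\Big)^{1/2},
$$
where $D_k := \sum_{e\in\E}\int_e \rho_e^k(1-\rho_e^k)|\partial_x w_e^k|^2\,dx$ is exactly the entropy-dissipation quantity controlled in Proposition~\ref{prop:discrete-entropy}. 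Since the strict bounds $0\le\rho_e^k\le1$ from Theorem~\ref{thm:fixed-point} give $\rho_e^k(1-\rho_e^k)\le\tfrac14$, the second factor is at most $\tfrac12\norm{\varphi}_{H^1(\E)}$. The regularization term is bounded directly by Cauchy--Schwarz as $\tau\,\norm{w^k}_{H^1(\E)}\norm{\varphi}_{H^1(\E)}$, and the boundary sum is controlled using the one-dimensional trace inequality $\abs{\varphi(v)}\le C\norm{\varphi}_{H^1(\E)}$ together with the $L^\infty$-bounds on $\uinflow_v,\uoutflow_v$ and $0\le\rho^k\le1$. Taking the supremum over $\varphi$ then yields the pointwise-in-time bound
$$
\norm{[d_\tau\rho_\tau]\big|_{(t_{k-1},t_k]}}_{(H^1(\E))'} \le \tfrac12 D_k^{1/2} + \tau\,\norm{w^k}_{H^1(\E)} + C.
$$

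Finally, I would square this, multiply by $\tau$, sum over $k=1,\dots,N_T$, and use $(a+b+c)^2\le 3(a^2+b^2+c^2)$, which leaves three sums to control. The sum $\sum_k \tau D_k$ telescopes against the entropy difference in \eqref{eq:entropy-dissipation-discrete} and is therefore bounded by $E(\rho^0)-E(\rho^{N_T})\le C$ thanks to $0\le\rho\le1$; the term $\sum_k\tau^3\norm{w^k}_{H^1(\E)}^2$ equals $\tau\,\big(\tau^{1/2}\norm{w_\tau}_{L^2(0,T;H^1(\E))}\big)^2$ and is bounded using \eqref{eq:sqrt_tau_w_est} by $\tau C^2\le T C^2$; and $\sum_k \tau C^2 = C^2 T$. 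I expect the main subtlety to lie not in any individual inequality but in the bookkeeping of the powers of $\tau$: the regularization contribution is harmless only because the a priori estimate \eqref{eq:sqrt_tau_w_est} bounds $\tau^{1/2}\norm{w_\tau}_{L^2(0,T;H^1(\E))}$ rather than $\norm{w_\tau}$ itself, so the naive $\tau$-prefactor must be paired with precisely this scaling to close the argument. Assembling the three bounded sums gives $\norm{d_\tau\rho_\tau}_{L^2(0,T;(H^1(\E))')}\le C$ with $C$ independent of $\tau$, as claimed.
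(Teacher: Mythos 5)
Your proposal is correct and follows essentially the same route as the paper, which only sketches this argument: test Eq.~\eqref{eq:weak_ent} against $\varphi$, bound the drift-diffusion, regularization and boundary terms by a multiple of $\norm{\varphi}_{H^1(\E)}$ using the a priori estimates of Proposition~\ref{prop:discrete-entropy}, and take the supremum. Your write-up in fact supplies details the paper omits, in particular the correct pairing of the $\tau$-prefactor on the regularization term with the scaling in \eqref{eq:sqrt_tau_w_est} and the telescoping of $\sum_k \tau D_k$ against the entropy difference.
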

\begin{proof}
This result follows from the regularity estimates of Proposition~\ref{prop:discrete-entropy}. They allow to estimate the terms on the right hand side \eqref{eq:weak_ent} in terms of a constant multiplied by $\|\varphi\|_{H^1(\E)}$. Thus, taking the supremum over all $\varphi$ yields the desired $(H^1(\E))'$-estimate.
\end{proof}

Having established the a priori estimates, let us now show the existence of convergent subsequences whose limits we identify as weak solutions to \eqref{eq:weak}.

The bounds provided by Proposition~\ref{prop:discrete-entropy} in conjunction with the Banach-Alaoglu theorem (see \cite{Brezis2010}) yield the existence of subsequences and a function $\partial_x \rho\in L^2(0,T;L^2(\E))$, such that
\begin{itemize}
   \item $\partial_x \rho_\tau \rightharpoonup \partial_x \rho$ in $L^2(0, T; L^2(\E))$
\end{itemize}
where we did not relabel the subsequences. Moreover, again by the uniform bounds of Proposition~\ref{prop:discrete-entropy}, we may invoke \cite{simon1986compact}[Theorem 6] such that
\begin{itemize}
    \item $\rho_\tau\rightarrow \rho$ in $L^2(0, T; L^2(\E))$,
\end{itemize}
again, up to subsequences. Finally, from Lemma~\ref{lem:time-reg}, we have
\begin{itemize}
    \item $d_\tau \rho_\tau\rightharpoonup \partial_t \rho$ in $L^2(0, T; (H^1(\E))')$,
\end{itemize}
up to a subsequence. The identification of the limits follows from standard arguments for weak convergence, see, e.g. \cite{Crossley2025}[Section 2.3]. 

Having collected sufficient compactness and the corresponding convergent subsequences and limits, we can now prove the main result.
\begin{proof}[Proof of Theorem~\ref{thm:existence}]
Let us revisit Eq.~\eqref{eq:weak_ent}, \textit{i.e.},
\begin{multline}\label{eq:weak_ent_fvm}
\sum_{e \in \E} \int_0^T\int_e  \left(d_\tau \rho_{\tau,e}\,\varphi_{\tau,e} +
\rho_{\tau,e}\,(1-\rho_{\tau,e})\partial_x w_{\tau,e}\,\partial_x \varphi_{\tau,e} \right)dx + \tau \sum_{e \in \E} \int_0^T\int_e (\partial_x w_{\tau,e}\,\partial_x \varphi_{\tau,e} + w_{\tau,e}\,\varphi_{\tau,e})\;dx \\
+\sum_{v \in \V_D}\int_0^T (-\uinflow_{\tau,v}(t) (1-\rho^{k+1}(v)) + \uoutflow_{\tau,v}(t) \rho_{\tau}(v))\varphi_{\tau}(v) \;dt= 0,
\end{multline}
First let us note that the term premultiplied by $\tau$ vanishes due estimate~\eqref{eq:sqrt_tau_w_est}.
Next, using the convergences above, we can pass to the limit in the other terms of the equation to get
\begin{multline*}
\sum_{e \in \E} \int_e  \left(\partial_t \rho_e(t)\,\varphi_e +
(\varepsilon\,\partial_x \rho_e(t) - \nu_e \, 
f(\rho_e(t)))\,\partial_x \varphi_e \right)dx\\
+\sum_{v \in \V_D} (-\uinflow_{v}(t) (1-\rho(t,v)) + \uoutflow_v(t) \rho(t,v))\varphi(v) = 0,
\end{multline*}
for any $\varphi \in C_c^\infty((0,T)\times\E)$ which is dense in $L^2(0,T;H^1(\E))$. Thus, the limit $\rho$ is a weak solution to Eq.~\eqref{eq:weak}.

The a priori estimates 
follow from passing to the limit in the bounds of Proposition~\ref{prop:discrete-entropy} and Lemma~\ref{lem:time-reg}, using the weak lower semicontinuity of the norms. Finally, the compactness is sufficient to conclude that the weak solution satisfy the initial data.
\end{proof}

	\section{Numerical solvers}
	
	

 \label{sec:FVM}

 
 Partial differential equations (PDEs) are an essential tool in science and engineering, as they are typically used to model the complex physical phenomena. These equations are typically dependent on crucial system parameters that are mostly not known precisely and the formulation of the problem is written in an infinite-dimensional function space setting. As a result numerical discretizations of the equations are performed based. We here focus on the case when a finite volume method \cite{leveque2002finite} is used which was previously introduced in \cite{Blechschmidt2022}. These are popular discretization schemes as they usually work in a structure preserving manner. 

\subsection{Finite volume scheme}
 To derive a finite volume scheme we briefly recall our setup and start from considering differential operators defined on each edge, and we focus on non-linear drift-diffusion equations
\begin{equation}
\label{eq:strong_pde_app}
\partial_t\rho_e  = \partial_x (\eps\,\partial_x\rho_e - \nu_e f(\rho_e)), \quad e \in \E,
\end{equation}
where $\rho_e : e \times (0,T) \to \R_+$ describes, on each edge, the concentration of some quantity while $\nu_e>0$ an edge-dependent velocity, and $\eps > 0$ is a (typically small) diffusion constant. Furthermore, $f: \R_+ \to \R_+$ satisfies $f(0) = f(1) = 0$. This property ensures that solutions satisfy $0 \le \rho_e \le 1$ a.e. on each edge, see Theorem \ref{thm:existence}. By identifying each edge with an interval $[0,\ell_e]$, we define the flux as
\begin{align} \label{eq:flux_app}
J_e(x) := - \eps\,\partial_x \rho_e (x) + \nu_e f(\rho_e(x)).
\end{align}
A typical choice for $f$ used in the following is $f(\rho_e) = \rho_e(1-\rho_e)$.

The edge set incident to a vertex $v\in \V$ is
denoted by $\E_v$ and we distinguish among $\E_v^{\text{in}}\:=\{e\in \E\colon
e=(\widetilde v,v)\ \text{for some}\ \widetilde v\in \V\}$ and
$\E_v^{\text{out}} = \E_v\setminus \E_v^{\text{in}}$.
The control volumes are defined as follows. To each edge $e\in
\E$ we associate an equidistant grid of the parameter domain
\begin{equation*}
0 = x^e_{-1/2} < x^e_{1/2} <\ldots < x^e_{n_e+1/2} = L_e
\end{equation*}
with $h_e=x^e_{k+\frac12} - x^e_{k-\frac12}$, and introduce
the intervals $I_k^e = (x_{k-1/2}, x_{k+1/2})$ for all
$k=0,\ldots,n_e$.
We introduce the following control volumes for our finite volume method,
\begin{itemize}
	\item the interior edge intervals $I_1^e,\ldots,I_{n_e-1}^e$ for
	each $e\in \E$,
	\item the vertex patches $I^v = \big(\cup_{e\in \E_v^{\text{in}}} I_{n_e}^e\big)
	\cup \big(\cup_{e\in \E_v^{\text{out}}} I_0^e\big)$ for each $v\in \V$.
\end{itemize}
A semi-discrete approximation of the problem
\eqref{eq:strong_pde}--\eqref{eq:Dirichlet_conditions}
can be expressed by the volume averages
\begin{align*}
\rho_k^{e}(t) &= |I_k^e|^{-1}\int_{I_k^e} \rho_e(t,x)\; d x,\\
\rho^{v}(t) &= |I^v|^{-1} \Big(
\sum_{e\in \E_v^{\text{out}}} \int_{I^e_0}\rho_e(t,x)\; d x +
\sum_{e\in \E_v^{\text{in}}} \int_{I^e_{n_e}}\rho_e(t,x)\; d x
\Big),
\end{align*}
for all $e\in \E$, $k=1,\ldots,n_e-1$, resp.\ $v\in \V$.
With the definition of the vertex patches we strongly enforce the continuity
in the graph nodes. Integrating \eqref{eq:strong_pde_app} over some interval $I^e_k$,
$k=0,\ldots,n_e$, $e\in \E$, gives
\begin{align}
\label{eq:finite_volume_integral}
\int_{I_k^e}\,\partial_t \rho_e(t,x) \;d x
&=
\int_{I_k^e} \partial_x (\varepsilon\,\partial_x\rho_e(t,x) -\nu_e
f(\rho_e(t,x))\,d_e(t))\;d x \nonumber\\
=h_e\,\partial_t \rho_k^e &=
\left(
\varepsilon\,\partial_x\rho_e(t,x) -\nu_e
f(\rho_e(t,x))\,d_e(t)
\right)\Big\vert_{x^e_{k-1/2}}^{x^e_{k+1/2}}.
\end{align}
The diffusive fluxes are approximated by central differences
\begin{equation*}
\partial_x\rho(t,x^e_{k+1/2}) \approx \frac1{h_e}(\rho_{k+1}^e(t)-\rho_k^e(t))
\end{equation*}
and for the convective fluxes we use, for stability reasons, the
Lax-Friedrichs numerical flux
\begin{align}\label{eq:lax_friedrichs_flux}
f(\rho_e(t,x_{k+1/2}))\,d_e(t) &\approx F^e_{k+1/2}(t)\nonumber\\
&:= \frac{\nu_e}{2} (f(\rho_k^e(t)) + f(\rho_{k+1}^e(t)))\,d_e(t)
- \frac{\alpha}2 (\rho_{k+1}^e(t) - \rho_k^e(t)),
\end{align}
where we use the convention $\rho_0^e = \rho^v$ for $v\in V$
satisfying $e\in \E_v^{\text{out}}$ and $\rho_{n_e}^e =
\rho^{\widetilde v}$ with $\widetilde v\in V$ satisfying $e\in
\E_{\widetilde v}^{\text{in}}$.
The parameter $\alpha>0$ is some stabilization parameter, chosen
sufficiently large. At inflow and outflow vertices $v\in
\V_\D$ we insert the boundary condition
\eqref{eq:Dirichlet_conditions} into \eqref{eq:finite_volume_integral}
and obtain
\begin{equation*}
\sum_{e\in \E_v} \left(\varepsilon\,\partial_x\rho_e(t,v) -
\nu_e f(\rho_e(t,v))\,d_e(t)\right)
\approx -\uinflow_{v}(t)\,(1-\rho^v) + \uoutflow_{v}(t)\,\rho^v.
\end{equation*}
Combining the previous investigations gives the following set of
equations for each control volume $I_k^e$, $k=1,\ldots,n_e-1$, $e\in \E$, and $I^v$, $v\in \V$,
respectively.
\begin{subequations}
	\label{eq:semi_discrete_fvm}
	\begin{align}
	\intertext{For each $e\in \E$ and $k=1,\ldots,n_e-1$:}
	h_e\,\partial_t \rho_k^e(t) + \varepsilon\,\frac{-\rho_{k-1}^e(t) +
		2\rho_k^e(t) - \rho_{k+1}^e(t)}{h_e} - F_{k-\frac12}^e(t) + F_{k+\frac12}^e(t) &= 0.
	\\
	\intertext{For each $v\in \V_\K$:}
	\sum_{e\in \E_v} h_e\,\partial_t\rho^v(t)
	+ \sum_{e\in \E_v^{\text{in}}}
	\left(\varepsilon\,\frac{\rho^v(t)-\rho_{n_e-1}^e(t)}{h_e} -
	F^e_{n_e-\frac12}(t)\right) &\nonumber\\
	- \sum_{e\in \E_v^{\text{out}}}
	\left(\varepsilon\,\frac{\rho_1^e(t)- \rho^v(t)}{h_e} - F^e_{\frac12}(t)\right)
	&= 0.\label{eq:fvm_kirchhoff_vertices}\\
	\intertext{For each influx node $v\in \V_\D^{\text{in}}$:}
	\sum_{e\in \E_v} h_e\,\partial_t\rho^v(t)
	- \sum_{e\in \E_v^{\text{out}}}
	\left(\varepsilon\,\frac{\rho_1^e(t)-\rho^v(t)}{h_e} -
	F^e_{\frac12}(t)\right)
	-\uinflow_{v}\,(1-\rho^v(t))
	&= 0.
	\intertext{For each outflux node $v\in \V_\D^{\text{out}}$:}
	\sum_{e\in \E_v} h_e\,\partial_t\rho^v(t)
	+ \sum_{e\in \E_v^{\text{in}}}
	\left(\varepsilon\,\frac{\rho^v(t)-\rho_{n_e-1}^e(t)}{h_e} -
	F^e_{n_e-\frac12}(t)\right)
	+ \uoutflow_{v}\,\rho^v(t)
	&= 0.
	\end{align}
\end{subequations}
In \eqref{eq:fvm_kirchhoff_vertices} accumulated contributions
evaluated in $v$ vanish due to the Kirchhoff-Neumann vertex conditions
\eqref{eq:Kirchhoff_Neumann_condition}.

To solve the system of ordinary differential equations
\eqref{eq:semi_discrete_fvm} for the unknowns
$\rho^e_k$ and $\rho^v$, respectively, we introduce the following
time-discretization.
For some equidistant time grid $0=t_0<t_1<\ldots<t_{n_t}=T$ with grid
size $\tau = t_n - t_{n-1}$, $n=1,\ldots,n_t$, we define the following
grid functions by
\begin{equation*}
\rho^{v,n} = \rho^v(t_n),\quad \rho^{e,n}_k = \rho^e_k(t_n),
\quad F_{k+1/2}^{e,n} = F_{k+1/2}^e(t_n).
\end{equation*}
We restrict the equations \eqref{eq:semi_discrete_fvm} to the grid points
and replace the time derivative by a difference quotient, evaluate
the diffusion terms in $t_{n+1}$ and the convective terms in $t_n$.
This yields for each $n=1,\ldots,n_t$
the following system of equations:
\begin{subequations}
	\label{eq:fully_discrete_fvm}
	\begin{align}
	\intertext{For each $e\in \E$ and $k=1,\ldots,n_e-1$:}
	h_e\,\rho_k^{e,n} + \varepsilon\,\tau\,\frac{-\rho_{k-1}^{e,n} +
		2\rho_k^{e,n} - \rho_{k+1}^{e,n}}{h_e}
	&= h_e\,\rho_k^{e,n-1}
	+ \tau\left(F_{k-\frac12}^{e,n-1} -
	F_{k+\frac12}^{e,n-1}\right).
	\label{fully_discrete_fvm_edge_equation}
	\\
	\intertext{For each $v\in \V_\K$:}
	\abs{I_v}\,\rho^{v,n}
	+ \tau\,\varepsilon\,\sum_{e\in \E_v^{\text{in}}}
	\,\frac{\rho^{v,n}-\rho_{n_e-1}^{e,n}}{h_e}
	&- \tau\,\varepsilon\,\sum_{e\in \E_v^{\text{out}}}
	\frac{\rho_1^{e,n}- \rho^{v,n}}{h_e} \nonumber\\
	= \abs{I_v}\,\rho^{v,n-1} + \tau\,\sum_{e\in \E_v^{\text{out}}}F^{e,n-1}_{\frac12}
	&- \tau\,\sum_{e\in \E_v^{\text{in}}}F^{e,n-1}_{n_e-\frac12}.\\
	\intertext{For each influx node $v\in \V_\D^{\text{in}}$:}
	\abs{I_v}\,\rho^{v,n}
	- \tau\,\varepsilon\sum_{e\in \E_v^{\text{out}}}
	\frac{\rho_1^{e,n}-\rho^{v,n}}{h_e}
	&= \abs{I_v}\,\rho^{v,n-1} +
	\tau\,F^{e,n-1}_{\frac12}
	+ \tau\,\uinflow_{v}\,(1-\rho^{v,n-1}). \\
	\intertext{For each outflux node $v\in \V_\D^{\text{out}}$:}
	\abs{I_v}\,\rho^{v,n}
	+ \tau\,\varepsilon\sum_{e\in \E_v^{\text{in}}}
	\frac{\rho^{v,n}-\rho_{n_e-1}^{e,n}}{h_e}
	&= \abs{I_v}\,\rho^{v,n-1} + \tau\,\sum_{e\in
		\E_v^{\text{in}}} F^{e,n-1}_{n_e-\frac12}
	-\tau\,\uoutflow_{v}\,\rho^{v,n-1}.
	\end{align}
\end{subequations}
The initial data are established by
\begin{equation*}
\rho_k^{e,0}=\pi_{I_k^e}(\rho_0),\qquad \rho^{v,0} = \pi_{I^v}(\rho_0),
\end{equation*}
where $\pi_M$ denotes the $L^2$-projection onto the constant functions on
a subset $M\subset \Gamma$.
Note that this set of equations is linear in the unknowns in the new time
point $\rho_k^{e,n}$, $k=1,\ldots,n_e-1$, $e\in \E$ and $\rho^{v,n}$,
$v\in \V$.
The fully-discrete approximation $\widetilde\rho\colon [0,T]\times \Gamma\to
\R$ then reads
\begin{equation*}
\widetilde\rho(t,x) = \widehat\rho^n(x)\quad \text{for}\ t\in
[t_n,t_{n+1}),
\end{equation*}
with
\begin{equation*}
\widehat\rho^n(x) = \rho^{v,n}\ \text{for}\ x\in I^v,\qquad \widehat\rho^n(x)
= \rho_k^{e,n}\ \text{for}\ x\in I_k^e.
\end{equation*}

It is well-known that finite-volume schemes like \eqref{eq:fully_discrete_fvm}
guarantee a couple of very important properties. On the one hand, there is a
well established convergence theory, see e.\,g.\
\cite{MortonStynesSuli1997,LazarovMishevVassilevski1996,ThijeBoonkkampAnthonissen2010}.
On the other hand, our scheme is
mass-conserving and bound-preserving which we show in the following theorem.
Thus, the finite volume approach is suitable of generating reference solutions used to train the \deeponet\ models proposed here. 
\begin{theorem}
	The solution of \eqref{eq:fully_discrete_fvm}, $\widetilde\rho$, satisfies the following
	properties:
	\begin{enumerate}[label=\roman*)]
		\item The scheme is mass conserving, i.e., if $\uinflow_{v}\equiv \uoutflow_{v}\equiv 0$ for all $v\in \V_{\D}$,
		then there holds
		\begin{equation*}
		\int_\Gamma\widehat\rho^n\; d x = \int_\Gamma\widehat\rho^0\; d x\qquad\forall n=1,\ldots,n_t.
		\end{equation*}
		\item Assume that $f(x) = x(1-x)$ and in
		\eqref{eq:lax_friedrichs_flux} choose $\alpha=1$. Then, the scheme is
		bound-preserving, i.e., there holds
		\begin{equation*}
		\widetilde\rho(t,x)\in [0,1]\qquad \forall t\in [0,T],
		x\in \Gamma,
		\end{equation*}
		provided that $\tau\le \min_{e\in \E}h_e$.
	\end{enumerate}
\end{theorem}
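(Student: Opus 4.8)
The plan is to establish the two properties by different mechanisms: (i) by a global summation over all control volumes, and (ii) by an induction in the time index $n$ built on a discrete maximum principle.

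For the mass conservation in (i), I would sum all the fully discrete balance equations in \eqref{eq:fully_discrete_fvm}, i.e.\ the interior edge equations over $k=1,\dots,n_e-1$ and $e\in\E$ together with every vertex equation. On each edge the implicit diffusion contributions $\varepsilon\tau(-\rho_{k-1}+2\rho_k-\rho_{k+1})/h_e$ and the explicit numerical fluxes $F^{e,n-1}_{k\pm1/2}$ telescope, leaving only terms evaluated at the endpoint half-cells $I_0^e$ and $I_{n_e}^e$. These are precisely the contributions that reappear in the equations of the two adjacent vertices and cancel pairwise: at interior vertices $v\in\V_\K$ this is the discrete Kirchhoff--Neumann balance \eqref{eq:Kirchhoff_Neumann_condition} already encoded in \eqref{eq:fvm_kirchhoff_vertices}, and at exterior vertices the only surviving terms are the in- and outflow fluxes, which vanish under the hypothesis $\uinflow_v\equiv\uoutflow_v\equiv0$. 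What remains is $\sum_{e}\sum_k h_e\rho_k^{e,n}+\sum_v\abs{I_v}\rho^{v,n}=\sum_{e}\sum_k h_e\rho_k^{e,n-1}+\sum_v\abs{I_v}\rho^{v,n-1}$, that is $\int_\Gamma\widehat\rho^n\,dx=\int_\Gamma\widehat\rho^{n-1}\,dx$, and induction on $n$ gives the claim.

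For the bound-preserving property (ii) I would argue by induction on $n$, the base case $n=0$ following because the $L^2$-projection onto cell averages maps $\rho_0\in[0,1]$ into $[0,1]$. Assuming $0\le\rho^{\cdot,n-1}\le1$, I rewrite \eqref{eq:fully_discrete_fvm} as a linear system $A\rho^n=b^{n-1}$ for the new unknowns, where $A=M+\tau\varepsilon L$ collects the diagonal, strictly positive mass matrix $M$ (with entries $h_e$ and $\abs{I_v}$) and the implicit discrete graph Laplacian $L$. The matrix $L$ has nonnegative diagonal, nonpositive off-diagonal entries and zero row sums, so $A$ is a diagonally dominant M-matrix; hence $A^{-1}\ge0$ entrywise (inverse positivity / discrete maximum principle), and it suffices to show $b^{n-1}\ge0$ componentwise for the lower bound. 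The interior entries of $b^{n-1}$ are the explicit convective updates $h_e\rho_k^{n-1}+\tau(F^{e,n-1}_{k-1/2}-F^{e,n-1}_{k+1/2})=h_e\,G(\rho_{k-1}^{n-1},\rho_k^{n-1},\rho_{k+1}^{n-1})$. Expanding the Lax--Friedrichs flux \eqref{eq:lax_friedrichs_flux} with $\alpha=1$ one checks $\partial G/\partial\rho_k=1-\tau/h_e\ge0$ exactly under the CFL condition $\tau\le\min_e h_e$, while $\partial G/\partial\rho_{k-1}=\tfrac{\tau}{2h_e}(\alpha+\nu_e f'(\rho_{k-1}))$ and $\partial G/\partial\rho_{k+1}=\tfrac{\tau}{2h_e}(\alpha-\nu_e f'(\rho_{k+1}))$ are nonnegative since $\abs{f'}=\abs{1-2\rho}\le1$ so the artificial viscosity $\alpha=1$ dominates the wave speed. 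Thus $G$ is a monotone three-point scheme that fixes the states $0$ and $1$ and is order preserving, so it maps $[0,1]^3$ into $[0,1]$ and the interior entries of $b^{n-1}$ lie in $[0,h_e]$. At the exterior vertices the inflow term $\tau\uinflow_v(1-\rho^{v,n-1})\ge0$ because $\rho^{v,n-1}\le1$, and the outflow term $-\tau\uoutflow_v\rho^{v,n-1}$ combines with the mass contribution to give $(\abs{I_v}-\tau\uoutflow_v)\rho^{v,n-1}+\dots\ge0$ under the same smallness of $\tau$; this is exactly the structural feature of \eqref{eq:Dirichlet_conditions} that influx shuts off at $\rho=1$ and outflux at $\rho=0$. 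Inverse positivity then yields $\rho^n\ge0$. For the upper bound I would use the symmetry $\rho\mapsto u:=1-\rho$: since $f(\rho)=\rho(1-\rho)$ satisfies $f(1-\rho)=f(\rho)$, the variable $u$ solves the identical scheme with each $\nu_e$ replaced by $-\nu_e$ and the roles of $\uinflow_v$ and $\uoutflow_v$ interchanged; as the Lax--Friedrichs update with $\alpha=1$ and the M-matrix argument are insensitive to the sign of $\nu_e$ and to this swap, the lower-bound argument applied to $u$ gives $u\ge0$, i.e.\ $\rho\le1$, closing the induction.

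I expect the main obstacle to be the verification that $b^{n-1}\ge0$: establishing monotonicity of the Lax--Friedrichs update on the graph under the precise constant $\tau\le\min_e h_e$ while simultaneously controlling the in/outflow boundary terms at $\V_\D$, which is where the choice $\alpha=1$, the bound $\abs{f'}\le1$, and the sign structure of \eqref{eq:Dirichlet_conditions} must be used together. By contrast, the telescoping in (i) and the inverse positivity of $A$ are comparatively routine.
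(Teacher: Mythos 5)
Your proof is correct and follows essentially the same route as the paper: summation and cancellation/telescoping of the fluxes for mass conservation, and for the bounds the M-matrix (inverse-positivity) property of $M+\tau\varepsilon A$ combined with nonnegativity of the explicit Lax--Friedrichs right-hand side under $\tau\le\min_e h_e$, with the substitution $1-\rho$ handling the upper bound. Your monotone-scheme phrasing via the partial derivatives of the update is a cosmetic variant of the paper's direct factoring of the right-hand side (both tacitly use the normalization $\nu_e\le\alpha=1$), and you are in fact slightly more careful than the paper in checking the boundary-vertex equations, which the paper only demonstrates for the interior edge equation.
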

\begin{proof}
	\begin{enumerate}[label=\roman*)]
		\item This directly follows after summing up all the equations in
		\eqref{eq:fully_discrete_fvm}. Note that the diffusive and convective
		fluxes cancel out.
		\item The system \eqref{eq:fully_discrete_fvm} can be written as a
		system of linear equations of the form
		\begin{equation}
		\label{eq:fvm_equation_system}
		(M+\tau\,\varepsilon A)\vec \rho^n = M\vec \rho^{n-1} + \vec F(\vec\rho^{n-1}),
		\end{equation}
		where $M$ is the mass matrix and $A$ contains the coefficients of the
		diffusion terms on the left-hand side of
		\eqref{eq:fully_discrete_fvm}.
		The vector $\vec \rho^n$ contains the unknowns
		$\rho^{v,n}$ and $\rho_k^{e,n}$. In the usual ordering of
		unknowns and equations the matrix $M+\tau\,\varepsilon\,A$
		is strictly diagonal dominant and is thus an M-matrix.
		The inverse possesses non-negative entries only.
		The right-hand side of \eqref{eq:fvm_equation_system} is
		also non-negative under the assumption $\vec\rho^{n-1}\in
		[0,1]$. We demonstrate this for the equation
		\eqref{fully_discrete_fvm_edge_equation}. Insertion
		of \eqref{eq:lax_friedrichs_flux} and reordering the terms
		yields
		\begin{align*}
		&h_e\,\rho_k^{e,n-1} + \tau\left(F_{k-1/2}^{e,n-1} -
		F_{k+1/2}^{e,n-1}\right) = (h_e-\alpha\,\tau)\,\rho_k^{e,n-1}\\
		&\quad 
		+\frac\tau2\left((1-\rho_{k-1}^{e,n-1}) + \alpha\right)
		\rho_{k-1}^{e,n-1}
		+
		\frac\tau2\left(-(1-\rho_{k+1}^{e,n-1}) + \alpha\right)
		\rho_{k+1}^{e,n-1} \ge 0.
		\end{align*}
		The non-negativity follows from $\rho_{k}^{e,n-1}\in
		[0,1]$ for $k=0,\ldots,n_e$ and $\alpha=1$ as well as
		$\tau\le \min_{e\in \E}h_e$.
		This, together with the M-matrix property of
		$M+\tau\,\varepsilon\,A$, implies $\vec\rho^n\ge 0$.
		
		Due to
		$f(x)=x(1-x)$ we may rewrite
		\eqref{eq:fvm_equation_system} in the form
		\begin{equation*}
		(M+\tau\,\varepsilon A)(\vec1-\vec \rho^n)
		= M(\vec1-\vec\rho^{n-1}) + \vec G(\vec 1-\vec\rho^{n-1}),
		\end{equation*}
		with some vector-valued function $\vec G$. With similar
		arguments like before we conclude that the right-hand side
		is non-negative and	    
		thus, $1-\vec\rho^n \ge 0$, which proves the upper bound.
		By induction the result follows for all $n=1,\ldots,n_t$.
	\end{enumerate}
\end{proof}

\subsection{\deeponet\ further details }

The mathematical foundation of \deeponet\ is rooted in the concept of approximating operators, which are mappings between infinite-dimensional function spaces. Let \(\mathcal{G}\) be such an operator, which maps the input function \(u(x)\) to an output function \(G(u)(y)\), where \(x \in \mathbb{R}^d\) and \(y \in \mathbb{R}^m\) are the input and output coordinates, respectively. The goal of a DeepONet is to approximate \(\mathcal{G}\) using a neural network architecture that can handle functional inputs and outputs.

In more detail, the \deeponet\ architecture consists of two main ingredients: namely, the \textit{branch net} and the \textit{trunk net}. The branch net takes as input the discretized values of the input function \(u(x)\) at a set of predefined training points \(\{x_1, x_2, \dots, x_n\}\), and gives as an output a set of coefficients \(\{b_1, b_2, \dots, b_p\}\). On the other hand, the trunk net takes as input the output coordinate \(y\) and outputs a set of basis functions \(\{t_1(y), t_2(y), \dots, t_p(y)\}\). The final output of the \deeponet\ is then given by the inner product of the branch and trunk outputs written as
\[
G(u)(y) = \sum_{i=1}^p b_i(u) \cdot t_i(y),
\]
where in this equation the coefficients \(b_i(u)\) are obtained from the branch net, and the coefficients \(t_i(y)\) are produced by the trunk net. With this we are able to approximate the operator \(\mathcal{G}\) by learning the appropriate coefficients and basis functions from training data.
\subsubsection*{Branch Net}
The branch net architecture is used for encoding the input function \(u(x)\) into a finite-dimensional representation. Given the discrete values of \(u(x)\) at known points \(\{x_1, x_2, \dots, x_n\}\), the branch net processes these values through a neural network architecture to produce \(\{b_1, b_2, \dots, b_p\}\). In summary, the branch net can be represented as a function \(B: \mathbb{R}^n \rightarrow \mathbb{R}^p\) such that:
\[
\mathbf{b} = B(u(x_1), u(x_2), \dots, u(x_n)),
\]
where \(\mathbf{b} = [b_1, b_2, \dots, b_p]^T\) is the vector collecting all the coefficients.

\subsubsection*{Trunk Net}
The trunk net is then used for generating the basis functions that are used to construct the output function. For the output coordinate \(y\), the trunk net processes \(y\) using a deep learning architecture to produce the basis functions \(\{t_1(y), t_2(y), \dots, t_p(y)\}\). Again, we obtain the following representation \(T: \mathbb{R}^m \rightarrow \mathbb{R}^p\) such that:
\[
\mathbf{t}(y) = T(y),
\]

where \(\mathbf{t}(y) = [t_1(y), t_2(y), \dots, t_p(y)]^T\) is the vector of basis functions. 

The training of a \deeponet\ involves minimizing a loss function that measures the discrepancy between the predicted output and the true output. Given a dataset of input-output pairs \(\{(u_i, G(u_i))\}_{i=1}^N\), the loss function \(\mathcal{L}\) is defined as:

\[
\mathcal{L} = \frac{1}{N} \sum_{i=1}^N \int \left\| G(u_i)(y) - \sum_{j=1}^p b_j(u_i) \cdot t_j(y) \right\|^2 dy,
\]

where the integral is taken over the domain of the output function. In practice, the integral is approximated using numerical integration techniques, such as Monte Carlo sampling or quadrature methods. The parameters of the branch and trunk nets are then optimized using gradient-based methods, such as stochastic gradient descent (SGD) or its variants, to minimize the loss function.

\section{Comments}
The DeepONet approach shares a lot of similarity (even equivalence) with the FNO approach as pointed out in \cite{kovachki2023neural}. As a result one could apply our surrogate coupling technique with a different choice of the DeepONet architecture to obtain an FNO setup and vice versa. Our methodology of coupling surrogate models based on the graph topology can be used with different operator learning methods, e.g., (physics-informed) DeepONets or FNO. Also, one could use a Graph Neural Operator technique that can act in the same way as the DeepONet or FNO for one edge operator thus creating a surrogate operator, see\cite{li2020neural}. 

\subsection{Discussion of strong GP Prior}
While we use the Gaussian (RBF) kernel in several places our main goal to avoiding an inverse crime was to choose different parameters for the generation of training data (inflow/outflow/initial: length scale $\ell=0.5$ and $512$ Gaussian centers), to generate random data for simulation (inflow/outflow/initial: length scale  $\ell=0.4$ and $468$ Gaussian centers).
In the inverse setting, we employ a length scale of $\ell=0.2$ and only $10$ Gaussian centers to learn the flow couplings and unknown initial conditions.

\subsection{Further applications}
Beyond the toy example of traffic flow considered here, at least two more applications come to mind: The first is the transport of cargo inside of biological cells that is realized by molecular motors traveling along a network of one-dimensional filaments (i.e. the graph in our setting). Previous work by different authors have demonstrated that, starting from an accurate microscopic model, a mean field limit produces exactly the type of drift-diffusion equations that we consider, \cite{Bressloff2015,Bressloff2016}. Finally, when studying the transport in gas networks, (non-linear variants) of drift-diffusion equations also appear as approximation to the (otherwise hyperbolic) governing equations. They go under the name ISO3 model for gas transport, \cite{DomschkeHillerLangetal2021}.

\subsection{Computational complexity}
For the pure simulation task, the FVM solver is typically faster than our method. This is a caveat of most physics-informed neural network and operator network approaches. However, our methodology shines in the inverse problem setting, were dedicated approaches are needed for each problem type. To the best of our knowledge, unfortunately, such solvers don't exist for our specific setting and a direct comparison is infeasible.
However, we estimate the complexity involved in both approaches for the inverse problem as follows:
\begin{align*}
    \text{FVM:} \quad &\mathcal{O}(N_\text{GradientSteps}\cdot N_t \cdot (n_e \cdot N_\text{edges}+N_\text{vertices})^2)\\
    \text{PI DeepONet:} \quad  & \mathcal{O}(
    N_\text{GradientSteps} \cdot ((3 \cdot n_\beta + 1) \cdot N_\text{edges})
    )
\end{align*}
and thus a reduction from quadratic to linear complexity.

 \section{Loss plots}%
 \label{sec:loss_plots}
The loss function evolution over \num{20000} epochs is illustrated in \cref{fig:lossapp1} for the model with width \num{100} and for width \num{200} in \cref{fig:lossapp2}.
 
\begin{figure}
    \centering
\begin{subfigure}[b]{0.32\textwidth}
    \includegraphics[width=0.99\linewidth]{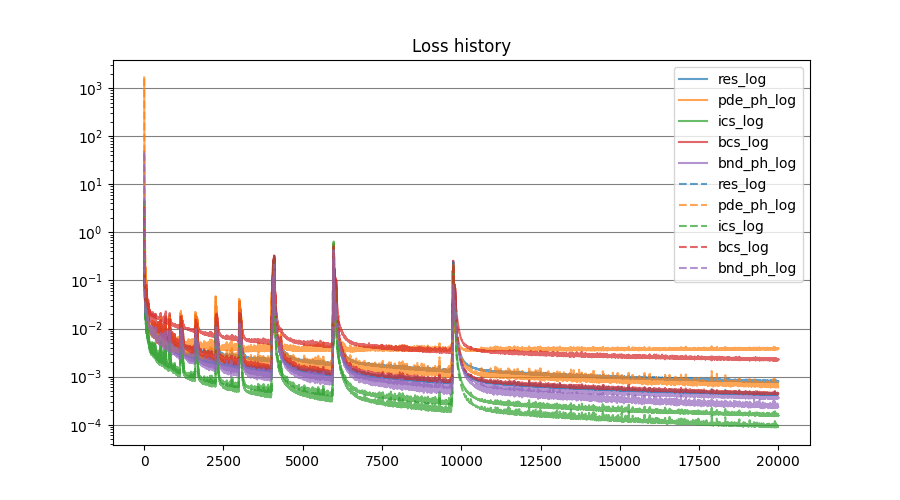}
    \includegraphics[width=0.99\linewidth]{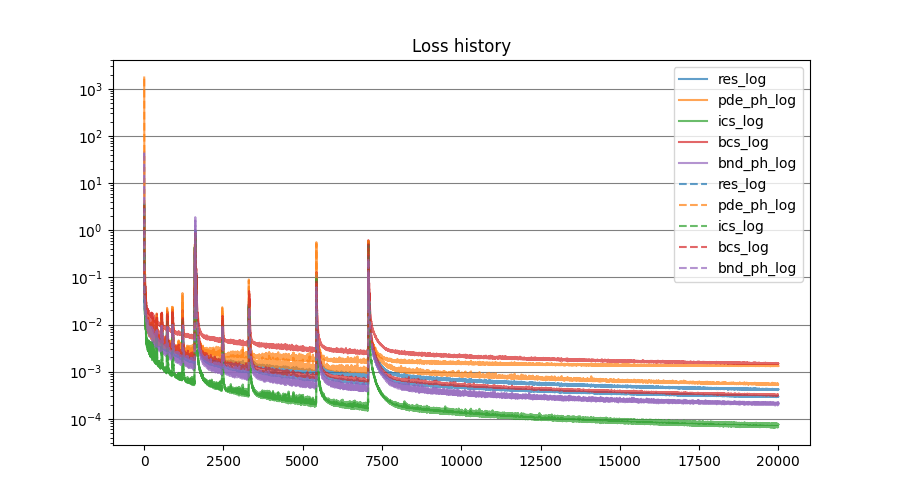}
    \includegraphics[width=0.99\linewidth]{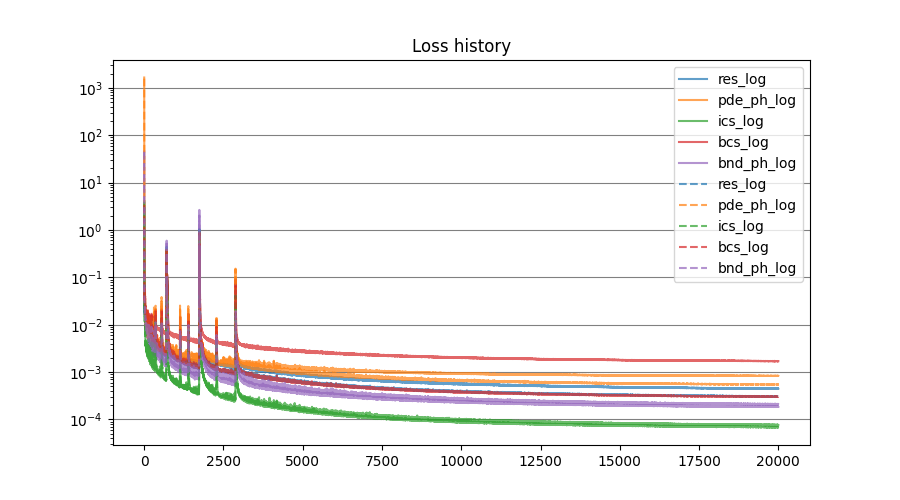}
    \caption{Inflow models}%
    \end{subfigure}\hfill%
    \begin{subfigure}[b]{0.32\textwidth}
    \includegraphics[width=0.99\linewidth]{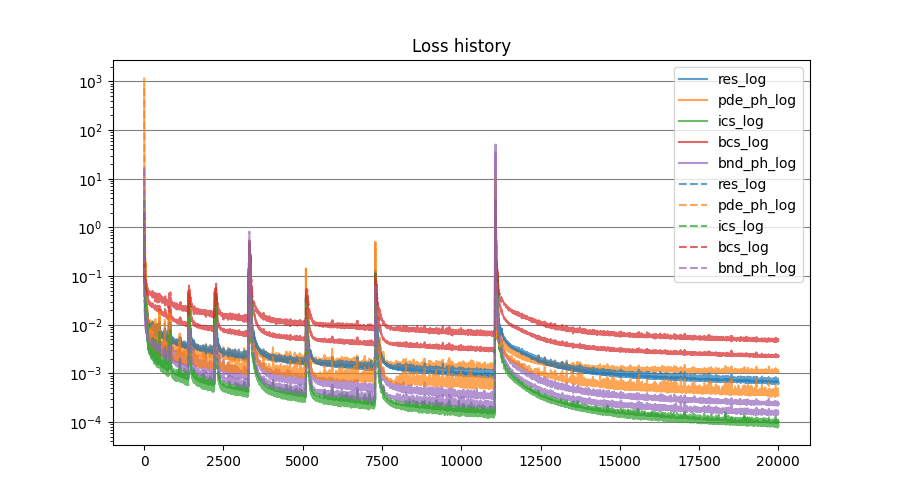}
    \includegraphics[width=0.99\linewidth]{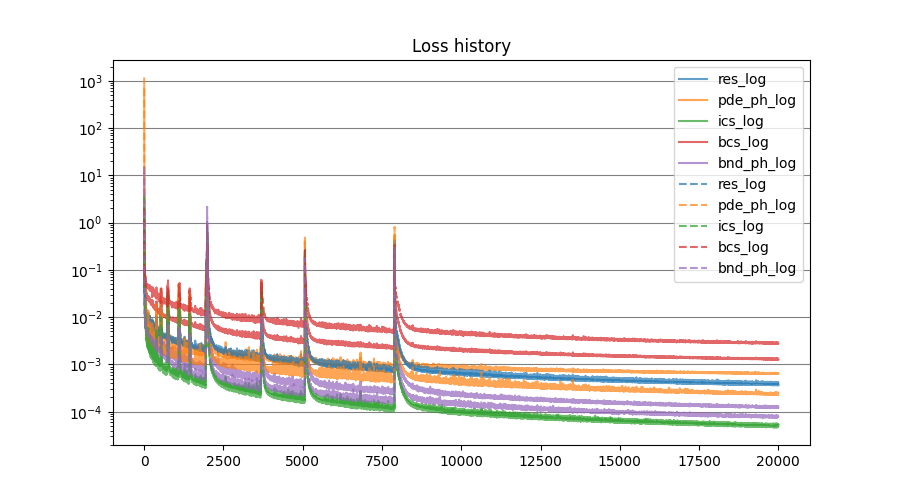}
    \includegraphics[width=0.99\linewidth]{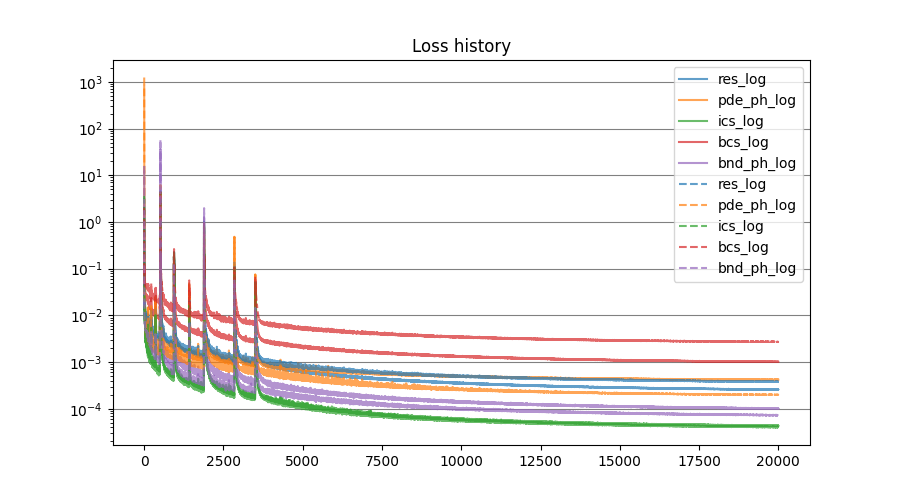}
    \caption{Inner models}
\end{subfigure}\hfill%
\begin{subfigure}[b]{0.32\textwidth}
    \includegraphics[width=0.99\linewidth]{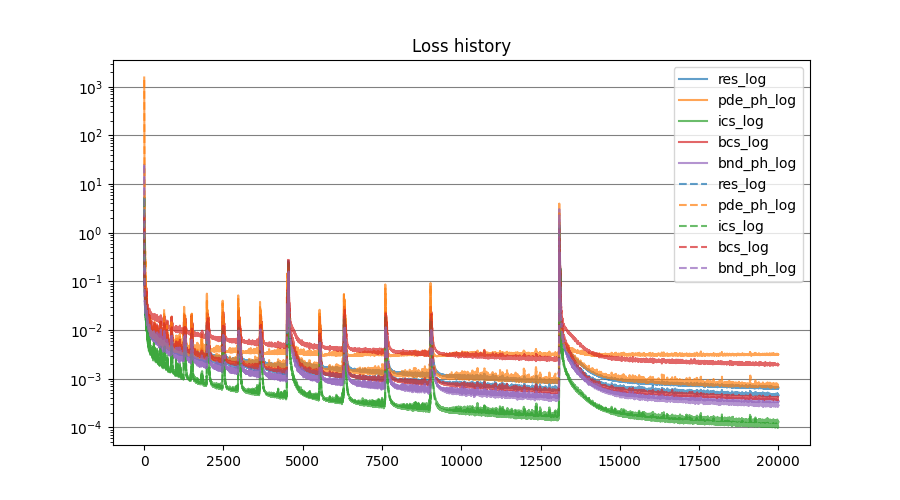}
    \includegraphics[width=0.99\linewidth]{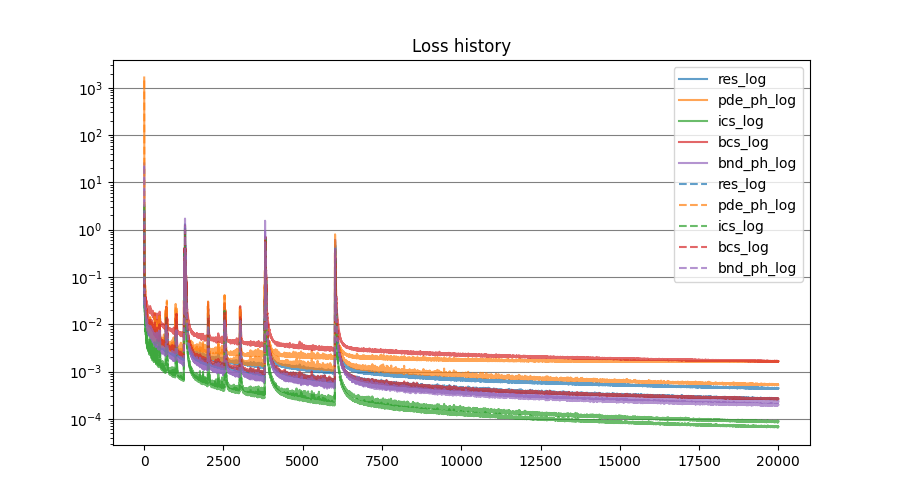}
    \includegraphics[width=0.99\linewidth]{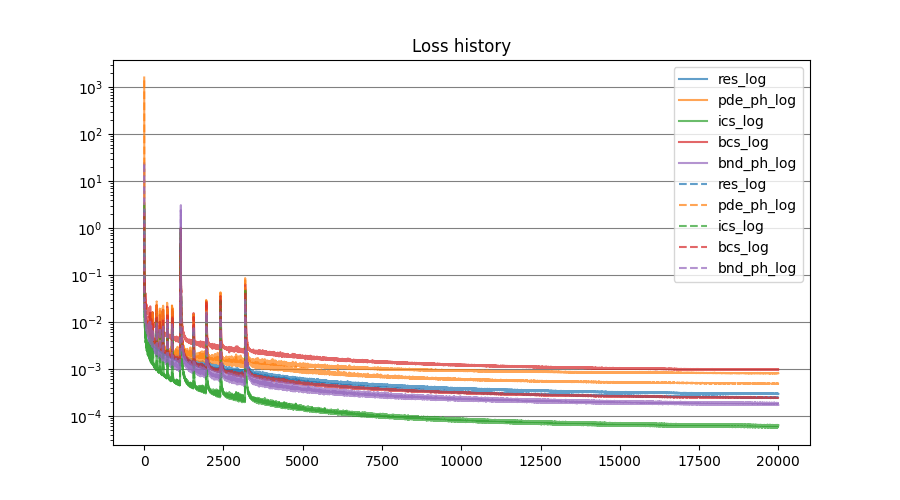}
    \caption{Outflow models}
\end{subfigure}%
    \caption{Terms of loss function in training of models with  \textbf{width 100} and 5K (up), 10K (middle), 20K (down) training data.
    Solid lines report training loss of various terms, dashed lines report validation loss.
    Recall that although only PDE physics loss (\texttt{pde\_ph\_log}), boundary physics loss (\texttt{bnd\_ph\_log}) and initial condition loss (\texttt{ics\_log}) are considered in objective, all terms decrease during training.
    The $x$-axis shows the learning epoch and the $y$-axis the value of the corresponding loss term.}
    \label{fig:lossapp1}
\end{figure}

\begin{figure}
    \centering
\begin{subfigure}[b]{0.32\textwidth}
    \includegraphics[width=0.99\linewidth]{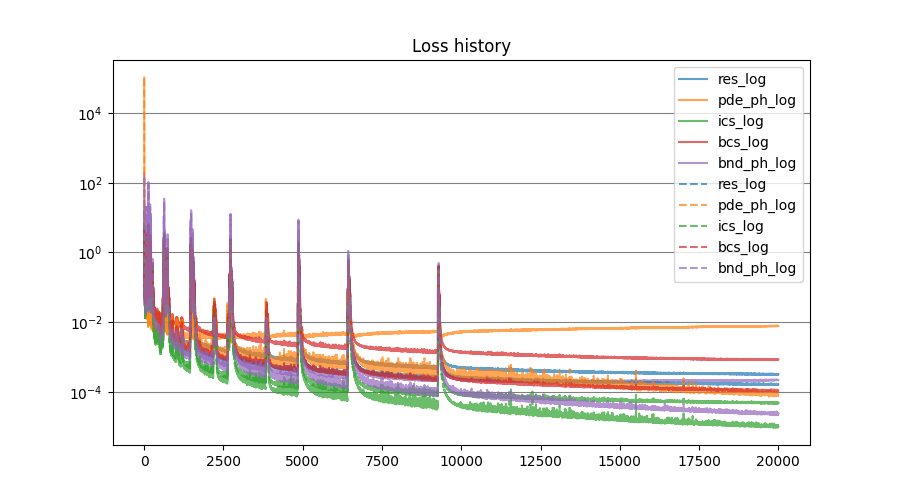}
    \includegraphics[width=0.99\linewidth]{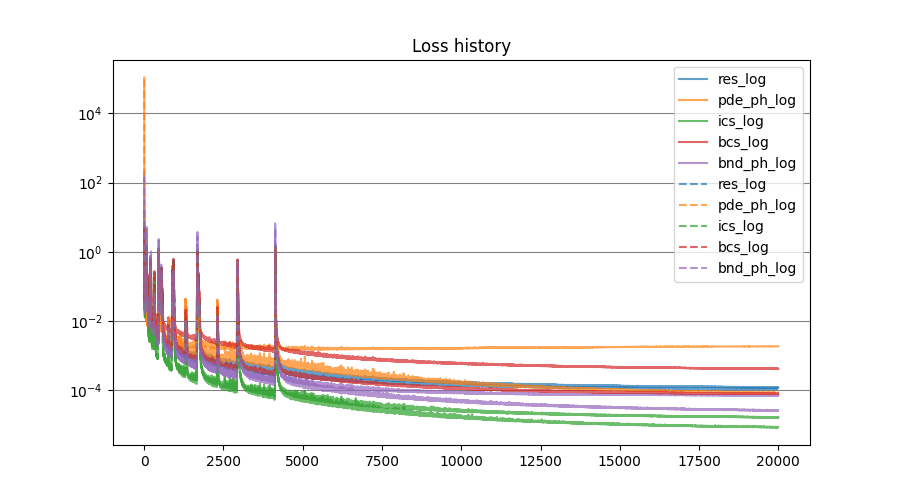}
    \includegraphics[width=0.99\linewidth]{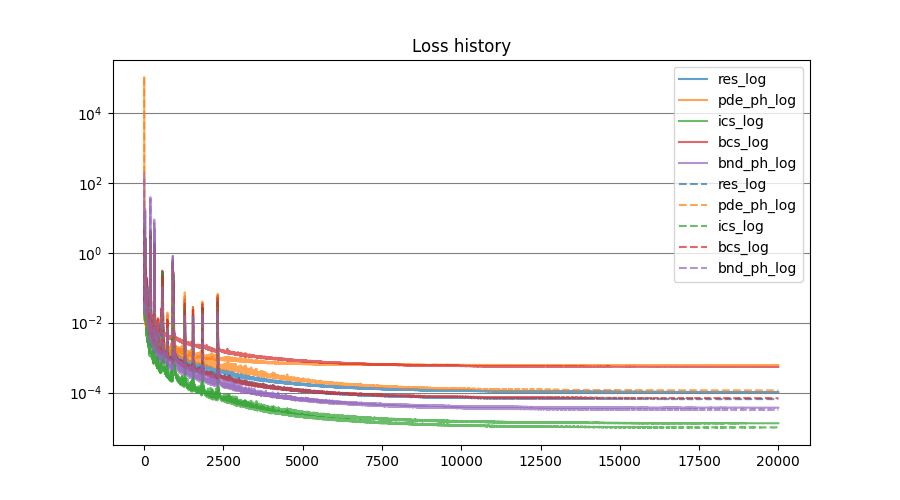}
    \caption{Inflow models}%
    \end{subfigure}\hfill%
    \begin{subfigure}[b]{0.32\textwidth}
    \includegraphics[width=0.99\linewidth]{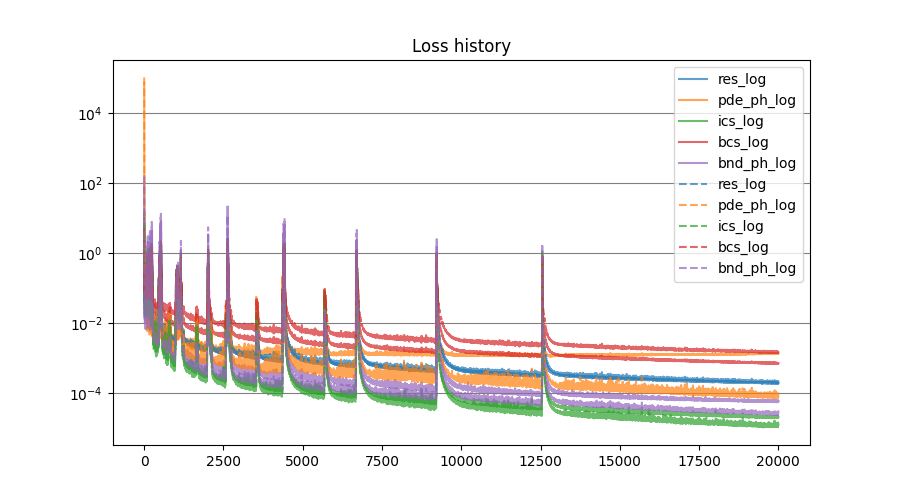}
    \includegraphics[width=0.99\linewidth]{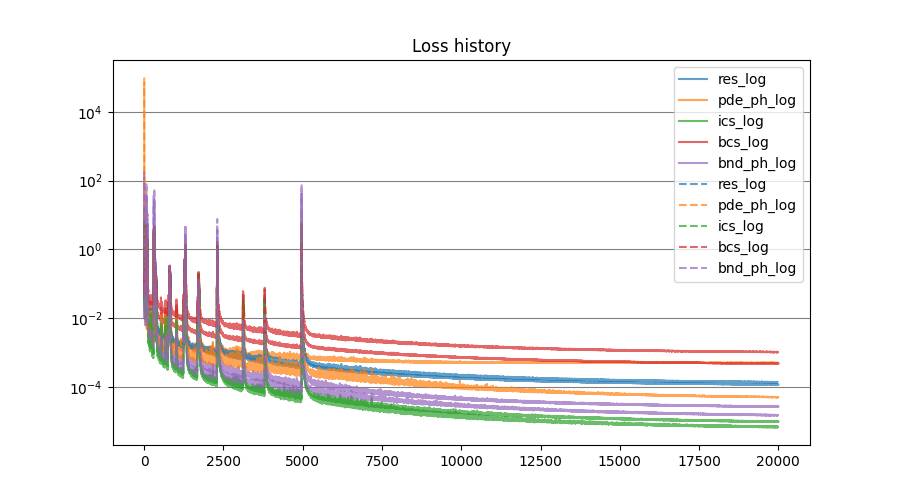}
    \includegraphics[width=0.99\linewidth]{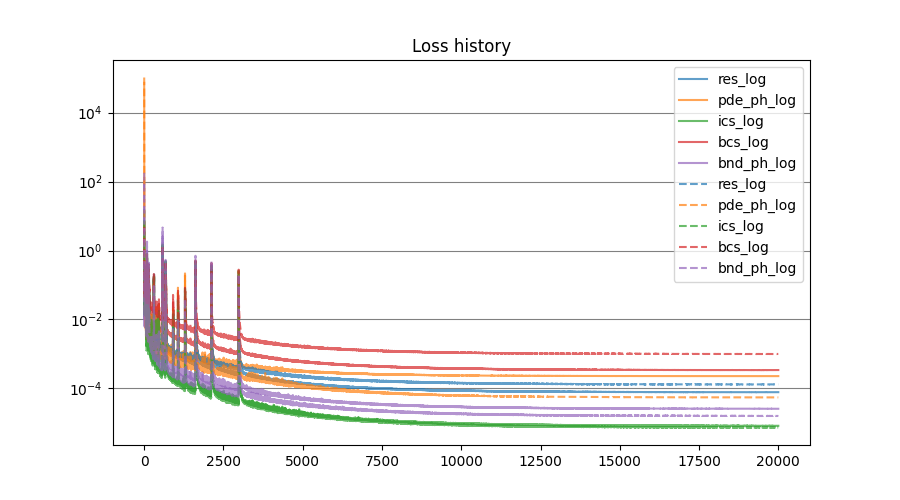}
    \caption{Inner models}
\end{subfigure}\hfill%
\begin{subfigure}[b]{0.32\textwidth}
    \includegraphics[width=0.99\linewidth]{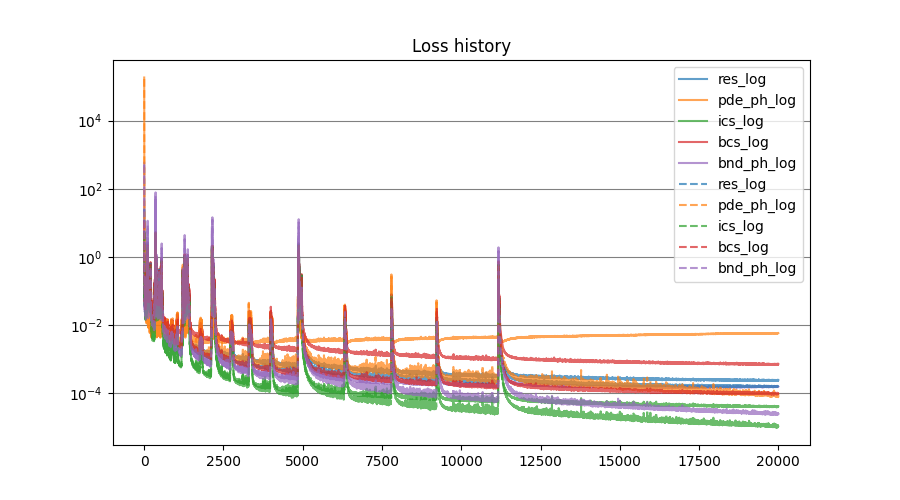}
    \includegraphics[width=0.99\linewidth]{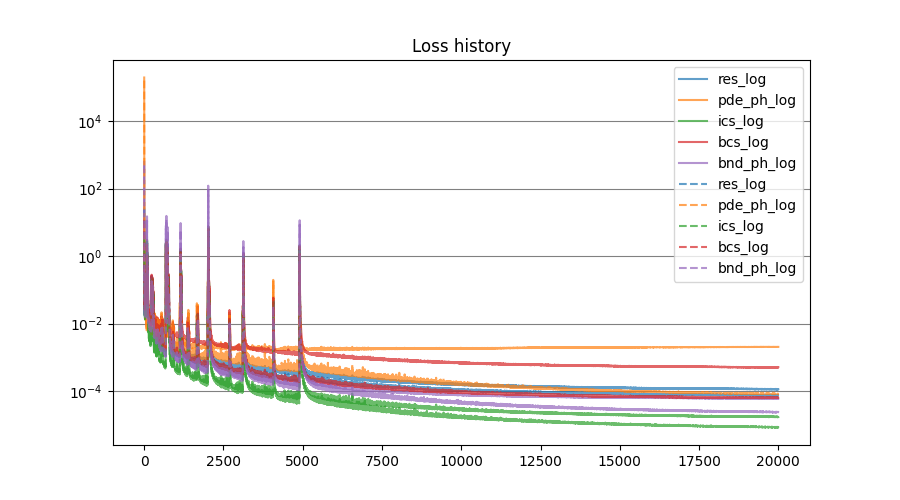}
    \includegraphics[width=0.99\linewidth]{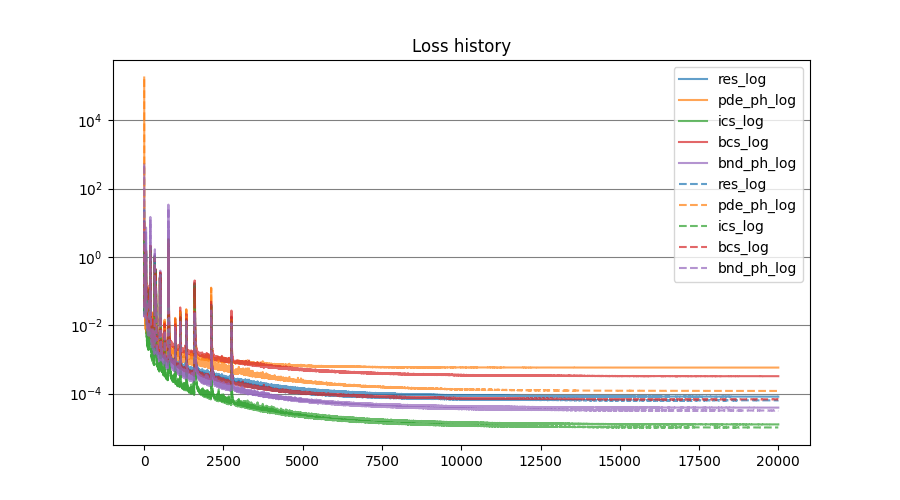}
    \caption{Outflow models}
\end{subfigure}%
    \caption{Terms of loss function in training of models with \textbf{width 200} and 5K (up), 10K (middle), 20K (down) training data.
    Solid lines report training loss of various terms, dashed lines report validation loss.
    Recall that although only PDE physics loss (\texttt{pde\_ph\_log}), boundary physics loss (\texttt{bnd\_ph\_log}) and initial condition loss (\texttt{ics\_log}) are considered in objective, all terms decrease during training.
    The $x$-axis shows the learning epoch and the $y$-axis the value of the corresponding loss term.}
    \label{fig:lossapp2}
\end{figure}

    \section{Results on large graphs}
 We test our method on larger graph networks with more edges using a directed network construction of varying sizes (102, 306, 1034 edges).
  This is easily done by providing the adjacency structure and the inflow and outflow nodes.
 These examples also contain multiple inflow and outflow nodes, see for example \cref{fig:graph1034}.

 \begin{figure}
     \centering
     \includegraphics[trim={3cm 0 3cm 3cm},clip,width=1\linewidth]{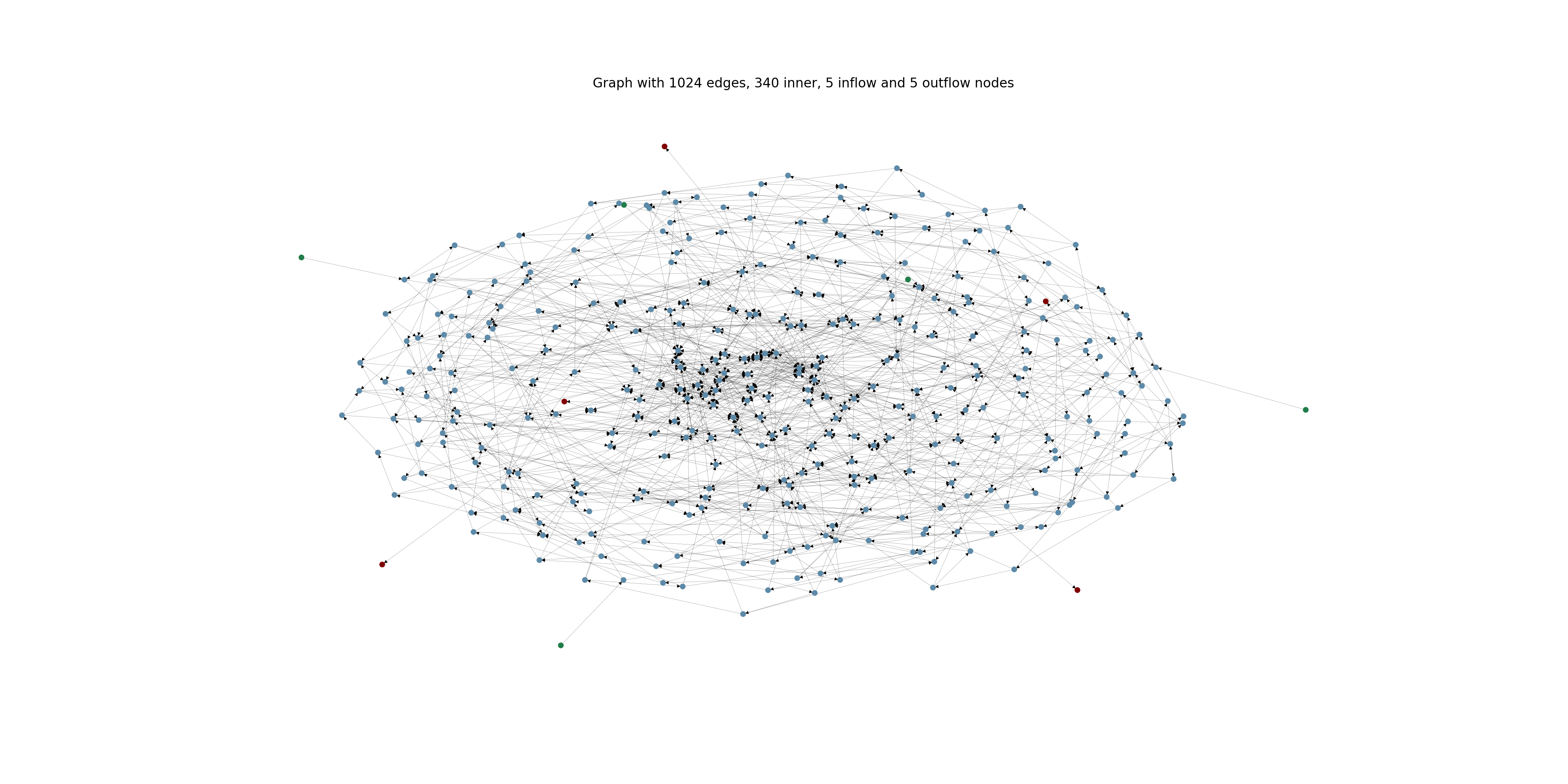}
     \caption{Graph with 1034 edges, 5 inflow nodes (green) and 5 outflow nodes (red).}
     \label{fig:graph1024}
 \end{figure}

 The local accuracy of our model is guaranteed by the accuracy of the surrogate model, in our case the physics informed DeepONet, which makes sure that the PDE is resolved up to the precision of the training of this model.
 The overall accuracy of the coupled surrogate models depends on the solution accuracy at the vertices, only, which we enforce for the least-squares solver.
 The scaling to large networks therefore depends on the robustness of the least squares solver.
 Here, we rely on the JAX implementation of an ADAM SGD method which should scale well with increasing network size utilizing the computational power of the underlying GPU.

 An exemplary simulation on the graph depicted in \cref{fig:graph1024} with noise 0.01 yields the following an absolute $L^2$ error of 1.67e-02 and a relative $L^2$ error of 4.84e-02.
 The absolute error of the solution at different time steps is shown in \cref{fig:simulation_large}.

 \begin{figure}
     \centering
     \includegraphics[trim={3cm 0 3cm 0},clip,width=0.9\linewidth]{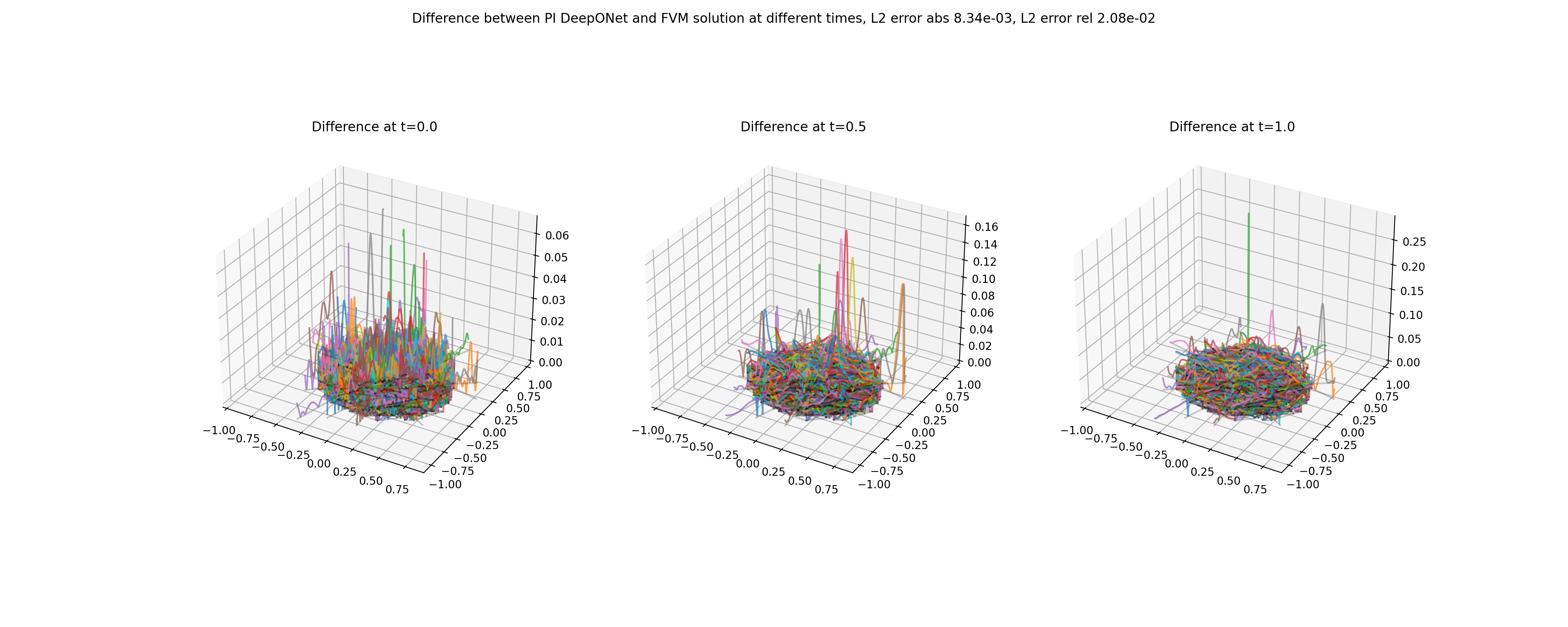}
     \caption{Absolute difference of PI DeepONet solution and baseline solution of example problem on graph with 1034 edges.}
     \label{fig:simulation_large}
 \end{figure}
 
An exemplary inverse problem on the graph depicted in \cref{fig:graph1024} with noise 0.01 yields the following errors: $L^2$ error of the solution on whole domain 1.57e-02 (abs) and 3.53e-02 (rel);
$L^2$ error of the initial condition 5.06e-02 (abs) and 1.11e-01 (rel);
$\ell^2$ error of edge velocity below 2.29e-02 (abs) and 2.34e-02 (rel).
 The absolute error of the solution at different time steps is shown in \cref{fig:inverse_large}.
  \begin{figure}
     \centering
     \includegraphics[trim={3cm 0 3cm 0},clip,width=0.9\linewidth]{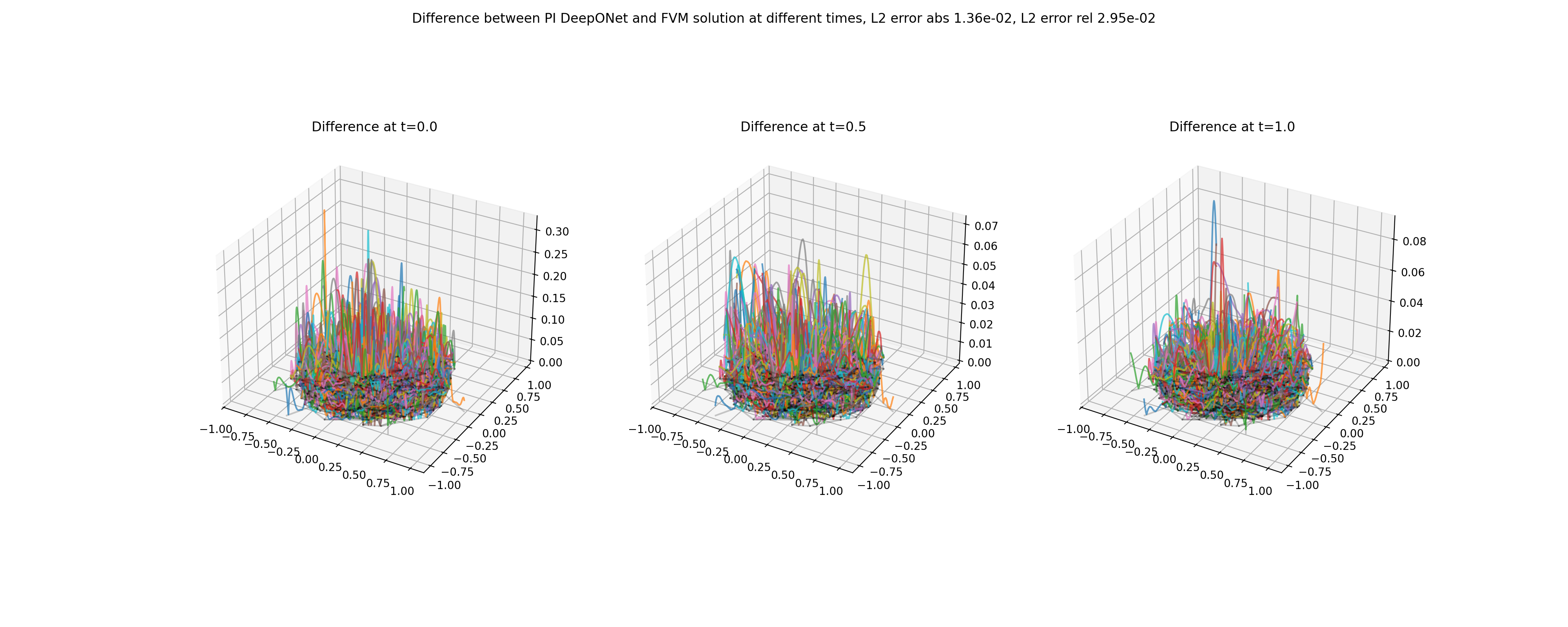}
     \caption{Absolute difference of PI DeepONet solution and baseline solution of example problem on graph with 1034 edges.}
     \label{fig:inverse_large}
 \end{figure}


\begin{thebibliography}{59}
\providecommand{\natexlab}[1]{#1}
\providecommand{\url}[1]{\texttt{#1}}
\expandafter\ifx\csname urlstyle\endcsname\relax
  \providecommand{\doi}[1]{doi: #1}\else
  \providecommand{\doi}{doi: \begingroup \urlstyle{rm}\Url}\fi

\bibitem[Arioli \& Benzi(2018)Arioli and Benzi]{arioli2018finite}
Arioli, M. and Benzi, M.
\newblock A finite element method for quantum graphs.
\newblock \emph{IMA Journal of Numerical Analysis}, 38\penalty0 (3):\penalty0
  1119--1163, 2018.

\bibitem[Barab{\'a}si(2016)]{barabasi2013network}
Barab{\'a}si, A.-L.
\newblock \emph{Network science}.
\newblock Number 1987. Cambridge University Press, Cambridge, 2016.

\bibitem[Berkolaiko \& Kuchment(2013)Berkolaiko and
  Kuchment]{berkolaiko2013introduction}
Berkolaiko, G. and Kuchment, P.
\newblock \emph{Introduction to quantum graphs}.
\newblock Number 186. American Mathematical Soc., Providence, R.I., 2013.

\bibitem[Blechschmidt \& Ernst(2021)Blechschmidt and
  Ernst]{blechschmidt2021three}
Blechschmidt, J. and Ernst, O.~G.
\newblock Three ways to solve partial differential equations with neural
  networks—a review.
\newblock \emph{GAMM-Mitteilungen}, 44\penalty0 (2):\penalty0 e202100006, 2021.

\bibitem[Blechschmidt et~al.(2022)Blechschmidt, Pietschman, Riemer, Stoll, and
  Winkler]{Blechschmidt2022}
Blechschmidt, J., Pietschman, J.-F., Riemer, T.-C., Stoll, M., and Winkler, M.
\newblock A comparison of pinn approaches for drift-diffusion equations on
  metric graphs, 2022.

\bibitem[Bradbury et~al.(2018)Bradbury, Frostig, Hawkins, Johnson, Leary,
  Maclaurin, Necula, Paszke, Vander{P}las, Wanderman-{M}ilne, and
  Zhang]{jax2018github}
Bradbury, J., Frostig, R., Hawkins, P., Johnson, M.~J., Leary, C., Maclaurin,
  D., Necula, G., Paszke, A., Vander{P}las, J., Wanderman-{M}ilne, S., and
  Zhang, Q.
\newblock {JAX}: composable transformations of {P}ython+{N}um{P}y programs,
  2018.
\newblock URL \url{http://github.com/jax-ml/jax}.

\bibitem[Bressloff \& Karamched(2016)Bressloff and Karamched]{Bressloff2016}
Bressloff, P.~C. and Karamched, B.~R.
\newblock Model of reversible vesicular transport with exclusion.
\newblock \emph{Journal of Physics A: Mathematical and Theoretical},
  49\penalty0 (34):\penalty0 345602, jul 2016.
\newblock \doi{10.1088/1751-8113/49/34/345602}.
\newblock URL \url{https://dx.doi.org/10.1088/1751-8113/49/34/345602}.

\bibitem[Bressloff \& Levien(2015)Bressloff and Levien]{Bressloff2015}
Bressloff, P.~C. and Levien, E.
\newblock Synaptic democracy and vesicular transport in axons.
\newblock \emph{Physical Review Letters}, 114\penalty0 (16), 2015.
\newblock ISSN 1079-7114.
\newblock \doi{10.1103/physrevlett.114.168101}.
\newblock URL \url{http://dx.doi.org/10.1103/PhysRevLett.114.168101}.

\bibitem[Brezis(2010)]{Brezis2010}
Brezis, H.
\newblock \emph{Functional Analysis, Sobolev Spaces and Partial Differential
  Equations}.
\newblock Springer Science and Business Media, Berlin Heidelberg, 2010.
\newblock ISBN 978-0-387-70913-0.

\bibitem[Burger \& Pietschmann(2016)Burger and Pietschmann]{Burger2016}
Burger, M. and Pietschmann, J.-F.
\newblock {Flow} characteristics in a crowded transport model.
\newblock \emph{Nonlinearity}, 29:\penalty0 3528--3550, 2016.
\newblock \doi{10.1088/0951-7715/29/11/3528}.
\newblock WWU::123155.

\bibitem[Burger et~al.(2020)Burger, Humpert, and Pietschmann]{Burger2020}
Burger, M., Humpert, I., and Pietschmann, J.-F.
\newblock On {F}okker-{P}lanck equations with in- and outflow of mass.
\newblock \emph{Kinetic \& Related Models}, 13:\penalty0 249--277, 01 2020.
\newblock \doi{10.3934/krm.2020009}.

\bibitem[Chen et~al.(2021)Chen, Duan, and Karniadakis]{chen2019learning}
Chen, X., Duan, J., and Karniadakis, G.~E.
\newblock Learning and meta-learning of stochastic advection-diffusion-reaction
  systems from sparse measurements.
\newblock \emph{European Journal of Applied Mathematics}, 32\penalty0
  (3):\penalty0 397--420, 2021.
\newblock ISSN 0956-7925.

\bibitem[Coclite et~al.(2005)Coclite, Garavello, and Piccoli]{Coclite2005}
Coclite, G.~M., Garavello, M., and Piccoli, B.
\newblock Traffic flow on a road network.
\newblock \emph{SIAM Journal on Mathematical Analysis}, 36\penalty0
  (6):\penalty0 1862--1886, 2005.
\newblock \doi{10.1137/S0036141004402683}.

\bibitem[Crossley et~al.(2025)Crossley, Pietschmann, and
  Schmidtchen]{Crossley2025}
Crossley, R.~M., Pietschmann, J.-F., and Schmidtchen, M.
\newblock Existence of weak solutions for a volume-filling model of cell
  invasion into extracellular matrix.
\newblock \emph{Journal of Differential Equations}, 428:\penalty0 721--746,
  2025.
\newblock ISSN 0022-0396.
\newblock \doi{https://doi.org/10.1016/j.jde.2025.02.023}.
\newblock URL
  \url{https://www.sciencedirect.com/science/article/pii/S0022039625001421}.

\bibitem[Cuomo et~al.(2022)Cuomo, Di~Cola, Giampaolo, Rozza, Raissi, and
  Piccialli]{cuomo2022scientific}
Cuomo, S., Di~Cola, V.~S., Giampaolo, F., Rozza, G., Raissi, M., and Piccialli,
  F.
\newblock Scientific machine learning through physics--informed neural
  networks: Where we are and what’s next.
\newblock \emph{Journal of Scientific Computing}, 92\penalty0 (3):\penalty0 88,
  2022.

\bibitem[De~los Reyes(2015)]{de2015numerical}
De~los Reyes, J.~C.
\newblock \emph{Numerical PDE-constrained optimization}.
\newblock Springer, 2015.

\bibitem[Domschke et~al.(2021)Domschke, Hiller, Lang, Mehrmann, Morandin, and
  Tischendorf]{DomschkeHillerLangetal2021}
Domschke, P., Hiller, B., Lang, J., Mehrmann, V., Morandin, R., and
  Tischendorf, C.
\newblock Gas network modeling: An overview.
\newblock 2021.
\newblock URL \url{https://opus4.kobv.de/opus4-trr154/411}.

\bibitem[Gomes et~al.(2019)Gomes, Stuart, and Wolfram]{Gomes2019}
Gomes, S.~N., Stuart, A.~M., and Wolfram, M.-T.
\newblock Parameter estimation for macroscopic pedestrian dynamics models from
  microscopic data.
\newblock \emph{SIAM Journal on Applied Mathematics}, 79\penalty0 (4):\penalty0
  1475--1500, 2019.
\newblock \doi{10.1137/18M1215980}.

\bibitem[Gyrya \& Zlotnik(2019)Gyrya and Zlotnik]{gyrya2019explicit}
Gyrya, V. and Zlotnik, A.
\newblock An explicit staggered-grid method for numerical simulation of
  large-scale natural gas pipeline networks.
\newblock \emph{Applied Mathematical Modelling}, 65:\penalty0 34--51, 2019.

\bibitem[Haghighat et~al.(2020)Haghighat, Raissi, Moure, Gomez, and
  Juanes]{haghighat2020deep}
Haghighat, E., Raissi, M., Moure, A., Gomez, H., and Juanes, R.
\newblock A deep learning framework for solution and discovery in solid
  mechanics.
\newblock \emph{arXiv:2003.02751}, 2020.

\bibitem[Heinlein et~al.(2021)Heinlein, Klawonn, Lanser, and
  Weber]{Heinlein2021}
Heinlein, A., Klawonn, A., Lanser, M., and Weber, J.
\newblock Combining machine learning and domain decomposition methods for the
  solution of partial differential equations -- a review.
\newblock \emph{GAMM-Mitteilungen}, 44\penalty0 (1), 2021.

\bibitem[Hinze et~al.(2011)Hinze, Kunkel, and Vierling]{hinze2011pod}
Hinze, M., Kunkel, M., and Vierling, M.
\newblock Pod model order reduction of drift-diffusion equations in electrical
  networks.
\newblock In \emph{Model Reduction for Circuit Simulation}, volume~74, pp.\
  177--192. Springer, Dordrecht, 2011.

\bibitem[Jagtap \& Karniadakis(2020)Jagtap and
  Karniadakis]{JagtapKardiadakis2020}
Jagtap, A. and Karniadakis, G.
\newblock Extended physics-informed neural networks ({XPINN}s): {A} generalized
  space-time domain decomposition based deep learning framework for nonlinear
  partial differential equations.
\newblock \emph{Communications in Computational Physics}, 28:\penalty0
  2002--2041, 11 2020.

\bibitem[Jagtap et~al.(2020)Jagtap, Kharazmi, and
  Karniadakis]{jagtap2020conservative}
Jagtap, A.~D., Kharazmi, E., and Karniadakis, G.~E.
\newblock Conservative physics-informed neural networks on discrete domains for
  conservation laws: Applications to forward and inverse problems.
\newblock \emph{Computer Methods in Applied Mechanics and Engineering}, 365,
  2020.

\bibitem[Jin et~al.(2021)Jin, Cai, Li, and Karniadakis]{jin2021nsfnets}
Jin, X., Cai, S., Li, H., and Karniadakis, G.~E.
\newblock Nsfnets (navier-stokes flow nets): Physics-informed neural networks
  for the incompressible navier-stokes equations.
\newblock \emph{Journal of Computational Physics}, 426, 2021.

\bibitem[Kingma \& Ba(2015)Kingma and Ba]{kingma2014adam}
Kingma, D.~P. and Ba, J.
\newblock Adam: A method for stochastic optimization.
\newblock In Bengio, Y. and LeCun, Y. (eds.), \emph{ICLR (Poster)}, 2015.

\bibitem[Kovachki et~al.(2023)Kovachki, Li, Liu, Azizzadenesheli, Bhattacharya,
  Stuart, and Anandkumar]{kovachki2023neural}
Kovachki, N., Li, Z., Liu, B., Azizzadenesheli, K., Bhattacharya, K., Stuart,
  A., and Anandkumar, A.
\newblock Neural operator: Learning maps between function spaces with
  applications to pdes.
\newblock \emph{Journal of Machine Learning Research}, 24\penalty0
  (89):\penalty0 1--97, 2023.

\bibitem[Lagnese et~al.(2012)Lagnese, Leugering, and
  Schmidt]{lagnese2012modeling}
Lagnese, J.~E., Leugering, G., and Schmidt, E.~G.
\newblock \emph{Modeling, analysis and control of dynamic elastic multi-link
  structures}.
\newblock Birkhäuser, Boston, 2012.

\bibitem[Lazarov et~al.(1996)Lazarov, Mishev, and
  Vassilevski]{LazarovMishevVassilevski1996}
Lazarov, R.~D., Mishev, I.~D., and Vassilevski, P.~S.
\newblock Finite volume methods for convection-diffusion problems.
\newblock \emph{{SIAM} Journal on Numerical Analysis}, 33\penalty0
  (1):\penalty0 31--55, feb 1996.

\bibitem[Leugering(2017)]{leugering2017domain}
Leugering, G.
\newblock Domain decomposition of an optimal control problem for semi-linear
  elliptic equations on metric graphs with application to gas networks.
\newblock \emph{Applied Mathematics}, 8\penalty0 (08):\penalty0 1074--1099,
  2017.

\bibitem[LeVeque(2002)]{leveque2002finite}
LeVeque, R.~J.
\newblock \emph{Finite volume methods for hyperbolic problems}, volume~31.
\newblock Cambridge university press, 2002.

\bibitem[Li et~al.(2020)Li, Kovachki, Azizzadenesheli, Liu, Bhattacharya,
  Stuart, and Anandkumar]{li2020neural}
Li, Z., Kovachki, N., Azizzadenesheli, K., Liu, B., Bhattacharya, K., Stuart,
  A., and Anandkumar, A.
\newblock Neural operator: Graph kernel network for partial differential
  equations.
\newblock \emph{arXiv preprint arXiv:2003.03485}, 2020.

\bibitem[Loder et~al.(2019)Loder, Amb\"{u}hl, Menendez, and
  Axhausen]{Loder2019}
Loder, A., Amb\"{u}hl, L., Menendez, M., and Axhausen, K.~W.
\newblock Understanding traffic capacity of urban networks.
\newblock \emph{Scientific Reports}, 9\penalty0 (1), November 2019.
\newblock ISSN 2045-2322.
\newblock \doi{10.1038/s41598-019-51539-5}.

\bibitem[Lu et~al.(2021)Lu, Jin, Pang, Zhang, and Karniadakis]{lu2021learning}
Lu, L., Jin, P., Pang, G., Zhang, Z., and Karniadakis, G.~E.
\newblock Learning nonlinear operators via deeponet based on the universal
  approximation theorem of operators.
\newblock \emph{Nature machine intelligence}, 3\penalty0 (3):\penalty0
  218--229, 2021.

\bibitem[Lye et~al.(2020)Lye, Mishra, and Ray]{lye2020deep}
Lye, K.~O., Mishra, S., and Ray, D.
\newblock Deep learning observables in computational fluid dynamics.
\newblock \emph{Journal of Computational Physics}, 410, 2020.

\bibitem[Magiera et~al.(2020)Magiera, Ray, Hesthaven, and
  Rohde]{magiera2020constraint}
Magiera, J., Ray, D., Hesthaven, J.~S., and Rohde, C.
\newblock Constraint-aware neural networks for {R}iemann problems.
\newblock \emph{Journal of Computational Physics}, 409, 2020.

\bibitem[Mao et~al.(2020)Mao, Jagtap, and Karniadakis]{mao2020physics}
Mao, Z., Jagtap, A.~D., and Karniadakis, G.~E.
\newblock Physics-informed neural networks for high-speed flows.
\newblock \emph{Computer Methods in Applied Mechanics and Engineering}, 360,
  2020.
\newblock \doi{https://doi.org/10.1016/j.cma.2019.112789}.

\bibitem[Meng \& Karniadakis(2020)Meng and Karniadakis]{meng2020composite}
Meng, X. and Karniadakis, G.~E.
\newblock A composite neural network that learns from multi-fidelity data:
  {A}pplication to function approximation and inverse {PDE} problems.
\newblock \emph{Journal of Computational Physics}, 401, 2020.

\bibitem[Misyris et~al.(2020)Misyris, Venzke, and
  Chatzivasileiadis]{misyris2020physics}
Misyris, G.~S., Venzke, A., and Chatzivasileiadis, S.
\newblock Physics-informed neural networks for power systems.
\newblock In \emph{2020 IEEE Power \& Energy Society General Meeting (PESGM)},
  pp.\  1--5. IEEE, 2020.

\bibitem[Morton et~al.(1997)Morton, Stynes, and Süli]{MortonStynesSuli1997}
Morton, K., Stynes, M., and Süli, E.
\newblock Analysis of a cell-vertex finite volume method for
  convection-diffusion problems.
\newblock \emph{Mathematics of Computation}, 66\penalty0 (220):\penalty0
  1389--1406, 1997.

\bibitem[Newman(2018)]{newman2018networks}
Newman, M.
\newblock \emph{Networks}.
\newblock Oxford university press, Oxford, 2018.

\bibitem[Nguyen-Thanh et~al.(2020)Nguyen-Thanh, Zhuang, and
  Rabczuk]{nguyen2020deep}
Nguyen-Thanh, V.~M., Zhuang, X., and Rabczuk, T.
\newblock A deep energy method for finite deformation hyperelasticity.
\newblock \emph{European Journal of Mechanics-A/Solids}, 80, 2020.

\bibitem[Pang et~al.(2019)Pang, Lu, and Karniadakis]{pang2019fpinns}
Pang, G., Lu, L., and Karniadakis, G.~E.
\newblock {fPINNs}: {F}ractional physics-informed neural networks.
\newblock \emph{SIAM Journal on Scientific Computing}, 41\penalty0
  (4):\penalty0 A2603--A2626, 2019.

\bibitem[Piccoli \& Garavello(2006)Piccoli and Garavello]{piccoli2006traffic}
Piccoli, B. and Garavello, M.
\newblock Traffic flow on networks.
\newblock \emph{American Institute of Mathematical Sciences}, 2006.

\bibitem[Rackauckas et~al.(2020)Rackauckas, Ma, Martensen, Warner, Zubov,
  Supekar, Skinner, Ramadhan, and Edelman]{rackauckas2020universal}
Rackauckas, C., Ma, Y., Martensen, J., Warner, C., Zubov, K., Supekar, R.,
  Skinner, D., Ramadhan, A., and Edelman, A.
\newblock Universal differential equations for scientific machine learning.
\newblock \emph{arXiv preprint arXiv:2001.04385}, 2020.

\bibitem[Raissi et~al.(2018)Raissi, Yazdani, and Karniadakis]{raissi2018hidden}
Raissi, M., Yazdani, A., and Karniadakis, G.~E.
\newblock Hidden fluid mechanics: {A} navier-stokes informed deep learning
  framework for assimilating flow visualization data.
\newblock \emph{arXiv:1808.04327}, 2018.

\bibitem[Raissi et~al.(2019)Raissi, Perdikaris, and
  Karniadakis]{raissi2019physics}
Raissi, M., Perdikaris, P., and Karniadakis, G.~E.
\newblock Physics-informed neural networks: A deep learning framework for
  solving forward and inverse problems involving nonlinear partial differential
  equations.
\newblock \emph{Journal of Computational physics}, 378:\penalty0 686--707,
  2019.

\bibitem[Rao et~al.(2020)Rao, Sun, and Liu]{rao2020physics}
Rao, C., Sun, H., and Liu, Y.
\newblock Physics informed deep learning for computational elastodynamics
  without labeled data.
\newblock \emph{arXiv:2006.08472}, 2020.

\bibitem[Sahli~Costabal et~al.(2020)Sahli~Costabal, Yang, Perdikaris, Hurtado,
  and Kuhl]{sahli2020physics}
Sahli~Costabal, F., Yang, Y., Perdikaris, P., Hurtado, D.~E., and Kuhl, E.
\newblock Physics-informed neural networks for cardiac activation mapping.
\newblock \emph{Frontiers in Physics}, 8:\penalty0 42, 2020.

\bibitem[Seo et~al.(2017)Seo, Bayen, Kusakabe, and
  Asakura]{Seo2017_trafficflowestimation}
Seo, T., Bayen, A.~M., Kusakabe, T., and Asakura, Y.
\newblock Traffic state estimation on highway: A comprehensive survey.
\newblock \emph{Annual Reviews in Control}, 43:\penalty0 128--151, 2017.
\newblock ISSN 1367-5788.
\newblock \doi{https://doi.org/10.1016/j.arcontrol.2017.03.005}.

\bibitem[Simon(1986)]{simon1986compact}
Simon, J.
\newblock Compact sets in the space {$L^p (O, T; B)$}.
\newblock \emph{Annali di Matematica pura ed applicata}, 146:\penalty0 65--96,
  1986.

\bibitem[Stoll \& Winkler(2021)Stoll and Winkler]{stoll2021optimal}
Stoll, M. and Winkler, M.
\newblock Optimal dirichlet control of partial differential equations on
  networks.
\newblock \emph{Electronic Transactions on Numerical Analysis}, 54:\penalty0
  392--419, 2021.

\bibitem[ten Thije~Boonkkamp \& Anthonissen(2010)ten Thije~Boonkkamp and
  Anthonissen]{ThijeBoonkkampAnthonissen2010}
ten Thije~Boonkkamp, J. H.~M. and Anthonissen, M. J.~H.
\newblock The finite volume-complete flux scheme
  for~advection-diffusion-reaction equations.
\newblock \emph{Journal of Scientific Computing}, 46\penalty0 (1):\penalty0
  47--70, jun 2010.

\bibitem[Thiyagalingam et~al.(2022)Thiyagalingam, Shankar, Fox, and
  Hey]{thiyagalingam2022scientific}
Thiyagalingam, J., Shankar, M., Fox, G., and Hey, T.
\newblock Scientific machine learning benchmarks.
\newblock \emph{Nature Reviews Physics}, 4\penalty0 (6):\penalty0 413--420,
  2022.

\bibitem[Wang et~al.(2021)Wang, Wang, and Perdikaris]{wang2021learning}
Wang, S., Wang, H., and Perdikaris, P.
\newblock Learning the solution operator of parametric partial differential
  equations with physics-informed {DeepONets}.
\newblock \emph{Science advances}, 7\penalty0 (40):\penalty0 eabi8605, 2021.

\bibitem[Wessels et~al.(2020)Wessels, Wei{\ss}enfels, and
  Wriggers]{wessels2020neural}
Wessels, H., Wei{\ss}enfels, C., and Wriggers, P.
\newblock The neural particle method--an updated lagrangian physics informed
  neural network for computational fluid dynamics.
\newblock \emph{Computer Methods in Applied Mechanics and Engineering}, 368,
  2020.

\bibitem[Yang et~al.(2020)Yang, Zhang, and Karniadakis]{yang2020physics}
Yang, L., Zhang, D., and Karniadakis, G.~E.
\newblock Physics-informed generative adversarial networks for stochastic
  differential equations.
\newblock \emph{SIAM Journal on Scientific Computing}, 42\penalty0
  (1):\penalty0 A292--A317, 2020.

\bibitem[Yin et~al.(2022)Yin, Zhang, Yu, and Karniadakis]{yin2022interfacing}
Yin, M., Zhang, E., Yu, Y., and Karniadakis, G.~E.
\newblock Interfacing finite elements with deep neural operators for fast
  multiscale modeling of mechanics problems.
\newblock \emph{Computer methods in applied mechanics and engineering},
  402:\penalty0 115027, 2022.

\bibitem[Zhu et~al.(2019)Zhu, Zabaras, Koutsourelakis, and
  Perdikaris]{zhu2019physics}
Zhu, Y., Zabaras, N., Koutsourelakis, P.-S., and Perdikaris, P.
\newblock Physics-constrained deep learning for high-dimensional surrogate
  modeling and uncertainty quantification without labeled data.
\newblock \emph{Journal of Computational Physics}, 394:\penalty0 56--81, 2019.

\end{thebibliography}
\end{document}